\let\cite\relax
\newcommand*{\cite}{\citep}
\title{Federated Instrumental Variable Analysis via Federated Generalized Method of Moments}
\author{
Geetika, Somya Tyagi, Bapi Chatterjee\thanks{This work is supported in part by the Indo-French Centre for the Promotion of Advanced Research (IFCPAR/CEFIPRA) through the FedAutoMoDL project, the Infosys Center for Artificial Intelligence (CAI) at IIIT-Delhi through the Scalable Federated Learning project. Geetika is partially supported by the INSPIRE fellowship No: DST/INSPIRE Fellowship/[IF220579] offered by the Department of Science \& Technology (DST), Government of India. Bapi Chatterjee also acknowledges support by Anusandhan National Research Foundation under project SRG/2022/002269.
} \\
Department of Computer Science and Engineering, IIIT Delhi \\
New Delhi, India \\
\texttt{\{geetikai, somya23005, bapi\}@iiitd.ac.in} \\
}
\def\eqref#1{equation~\ref{#1}}
\def\1{\bm{1}}
\def\vw{{\bm{w}}}
\def\mA{{\bm{A}}}
\def\mB{{\bm{B}}}
\def\mC{{\bm{C}}}
\def\mJ{{\bm{J}}}
\DeclareMathAlphabet{\mathsfit}{\encodingdefault}{\sfdefault}{m}{sl}
\SetMathAlphabet{\mathsfit}{bold}{\encodingdefault}{\sfdefault}{bx}{n}
\def\gA{{\mathcal{A}}}
\def\gC{{\mathcal{C}}}
\def\gD{{\mathcal{D}}}
\def\gF{{\mathcal{F}}}
\def\gG{{\mathcal{G}}}
\def\gL{{\mathcal{L}}}
\def\gM{{\mathcal{M}}}
\def\gO{{\mathcal{O}}}
\def\gR{{\mathcal{R}}}
\def\gT{{\mathcal{T}}}
\def\gZ{{\mathcal{Z}}}
\def\emLambda{{\Lambda}}
\newcommand{\E}{\mathbb{E}}
\DeclareMathOperator*{\argmax}{arg\,max}
\DeclareMathOperator*{\argmin}{arg\,min}
\definecolor{LightCyan}{rgb}{0.88,1,1}
\definecolor{Ao}{rgb}{0.0, 0.5, 0.0}
\definecolor{cadmiumorange}{rgb}{0.93, 0.53, 0.18}
\definecolor{cardinal}{rgb}{0.77, 0.12, 0.23}
\definecolor{byzantium}{rgb}{0.44, 0.16, 0.39}
\definecolor{gamboge}{rgb}{0.89, 0.61, 0.06}
\definecolor{goldenbrown}{rgb}{0.6, 0.4, 0.08}
\newcommand{\sgd}{\textsc{Sgd}\xspace}
\newcommand{\cvd}{\textsc{COVID-19}\xspace}
\newcommand{\fv}{\textsc{FedIV}\xspace}
\newcommand{\dv}{\textsc{DeepIV}\xspace}
\newcommand{\dgmm}{\textsc{DeepGMM}\xspace}
\newcommand{\agmm}{\textsc{AGMM}\xspace}
\newcommand{\oadam}{\textsc{OAdam}\xspace}
\newcommand{\sgda}{\textsc{SGDA}\xspace}
\newcommand{\fsgda}{\textsc{FedSGDA}\xspace}
\newcommand{\fiv}{\textsc{FedIV}\xspace}
\newcommand{\fgmm}{\textsc{FedGMM}\xspace}
\newcommand{\fdgmm}{\textsc{FedDeepGMM}\xspace}
\newcommand{\fgda}{\textsc{FedGDA}\xspace}
\newcommand{\gda}{\textsc{GDA}\xspace}
\newcommand{\scaf}{\textsc{Scaffold}\xspace}
\newcommand{\fprox}{\textsc{FedProx}\xspace}
\newcommand{\fopt}{\textsc{FedOpt}\xspace}
\newcommand{\tn}{\stackrel{\sim}{\smash{{\nabla}}\rule{0pt}{1.1ex}}\kern-1ex}
\newcommand{\nosemic}{\renewcommand{\@endalgocfline}{\relax}}
\newcommand{\dosemic}{\renewcommand{\@endalgocfline}{\algocf@endline}}%
\let\oldnl\nl
\newcommand{\nonl}{\renewcommand{\nl}{\let\nl\oldnl}}
\newtheorem{theorem}{Theorem}
\newtheorem{definition}{Definition}
\newtheorem{lemma}{Lemma}
\newtheorem{proposition}{Proposition}
\theoremstyle{definition}
\newtheorem{remark}{Remark}
\newtheorem{assumption}{Assumption}
\begin{document}
\maketitle

\begin{abstract}
	Instrumental variables (IV) analysis is an important applied tool for areas such as healthcare and consumer economics. For IV analysis in high-dimensional settings, the Generalized Method of Moments (GMM) using deep neural networks offers an efficient approach. With non-i.i.d. data sourced from scattered decentralized clients, federated learning is a popular paradigm for training the models while promising data privacy. However, to our knowledge, no federated algorithm for either GMM or IV analysis exists to date. In this work, we introduce federated instrumental variables analysis (\fiv) via federated generalized method of moments (\fgmm). We formulate \fgmm as a federated zero-sum game defined by a federated non-convex non-concave minimax optimization problem, which is solved using federated gradient descent ascent (\fgda) algorithm. One key challenge arises in theoretically characterizing the federated local optimality. To address this, we present properties and existence results of clients' local equilibria via \fgda limit points. Thereby, we show that the federated solution consistently estimates the local moment conditions of every participating client. The proposed algorithm is backed by extensive experiments to demonstrate the efficacy of our approach.
\end{abstract}

\keywords{Federated Learning \and Generalized Method of Moments \and Instrumental Variables Analysis \and Causal Inference}

\section{Introduction}
\label{sec:intro}
Federated Learning (FL) \cite{mcmahan2017communication} over scattered clients without data sharing is now an established paradigm for training Machine Learning (ML) models. The data privacy makes it attractive for applications to healthcare \cite{nguyen2022federated,antunes2022federated,oh2023federated}, finance and banking \cite{byrd2020differentially,long2020federated}, smart cities and mobility \cite{zheng2022applications,gecer2024federated}, drug discovery \cite{oldenhof2023industry} and many others \cite{ye2023heterogeneous}. However, the existing research in FL primarily focuses on supervised learning \cite{kairouz2021advances}, which struggles to predict the outcomes due to confounding variables not observed in training data. 

For example, consider the Nature Medicine report by \citet{dayan2021federated} on a global-scale FL to predict the effectiveness of oxygen administration (a treatment variable) to \cvd patients in the emergency rooms while maintaining their privacy. It is known that \cvd revival rates are highly influenced by lifestyle-related factors such as obesity and diabetes \cite{wang2021worldwide}, other co-morbidities \cite{russell2023comorbidities}, and the patients' conditions at the emergency care admission time \cite{izcovich2020prognostic}. Arguably, the \citet{dayan2021federated}'s approach may over- or under-estimate the effects of oxygen treatment.

One can address the above issue by observing and accommodating \textit{every} confounding latent factor that may influence the outcome. Thus, it may require that obesity, diabetes, overall health at the time of admission, and even genetic factors are accommodated; for example, using a technique such as matching \cite{kallus2020generalized,kallus2020deepmatch}. It may potentially render the treatment variable undergo a randomized controlled trial such as A/B testing \cite{kohavi2013online} on decentralized, scattered, and possibly private data. However, to our knowledge, these techniques are yet unexplored in the realms of FL. 

 Alternatively, one could assume conditional independence between unobserved confounders and the treatment variable, for example, the works by   \citet{shalit2017estimating,imai2023experimental}, etc. However, this may not be a fair approach for an application such as the federated estimation of effectiveness of oxygen therapy \cite{dayan2021federated}. To elaborate, \citet{liang2023interplay} suggests the hypoxia-inducible factors (HIF) -- a protein that controls the rate of transcription of genetic information from DNA to messenger RNA by binding to a specific DNA sequence \cite{latchman1993transcription} -- plays a vital role in oxygen consumption at the cellular level. The machine learning model developed by FL implementation of \citet{dayan2021federated} would miss the crucial counterfactual scenarios, such as HIF levels among patients undergoing oxygen therapy impacting morbidity outcomes, should it assume conditional independence between effects of oxygen treatment and every confounder. Such variables can be often traced in applications such as industry-scale federated drug discovery by AstraZeneca \cite{oldenhof2023industry}.

Instrumental variables (IV) provide a workaround to both the above issues under the assumption that the latent confounding factor influences only the treatment variable but does not directly affect the outcome. In the above example, the measure of HIF works as an instrumental variable that affects oxygen treatment as in its effective organ-level consumption but does not directly affect the mortality of the \cvd patient \cite{dayan2021federated}. IV can play an important role in a federated setting as the influence assumption between the confounders and the treatment variables will remain local to the clients.

IV analysis has been comprehensively explored in econometrics \cite{angrist2001instrumental,angrist2009mostly} with several decades of history such as works of \citet{wright1928tariff} and \citet{reiersol1945confluence}.  Its efficiency is now accepted for learning even high-dimensional complex causal relationships such as one in image datasets \cite{hartford2017deep,bennett2019deep}. Naturally, the growing demand of FL entails designing methods for federated IV analysis, which, to our knowledge, is yet unexplored.

In the centralized deep learning setting, \citet{hartford2017deep} introduced an IV analysis framework, namely \dv, which uses two stages of neural networks training -- first for the treatment prediction and the second with a loss function involving integration over the conditional treatment distribution. The two-stage process has precursors in applying least square regressions in the two phases \cite{angrist2009mostly}[4.1.1]. 

In the same setting, another approach for IV analysis applies the generalized method of moments (GMM) \cite{wooldridge2001applications}. GMM is a celebrated estimation approach in social sciences and economics. It was introduced by \citet{hansen1982large}, for which he won a Nobel Prize in Economics \cite{steif2014nobel}. Building on \cite{wooldridge2001applications}, \citet{bennett2019deep} introduced deep learning models to GMM estimation; they named their method \dgmm. Empirically, \dgmm outperformed \dv. \dgmm is solved as a smooth zero-sum game formulated as a minimax optimization problem. 

Prior to \dgmm, \citet{lewis2018adversarialgeneralizedmethodmoments} also employed neural networks for GMM estimation. Their method, called the adversarial generalized method of moments (\agmm), also formulated the problem as a minimax optimization to fit a GMM criterion function over a finite set of unconditional moments. \dgmm differs from \agmm in using a weighted norm to define the objective function. The experiments in \cite{bennett2019deep} showed that \dgmm outperformed \agmm for IV analysis, and both won against \dv. Nonetheless, to our knowledge, none of these methods have a federated counterpart.  

Minimax optimization has been studied in federated settings \cite{sharma2022federated, wu2024solving}, which potentially provides an underpinning for federated GMM. However, beyond the algorithm and its convergence results, there are a few key challenges: 
\begin{enumerate}[label=(\Alph*),parsep=0pt,topsep=0pt,leftmargin=*,itemsep=0pt]
\item For non-i.i.d. client-local data, describing common federated GMM estimators is not immediate. It requires characterizing a synchronized model state that fit moment conditions of every client.
\item To show that the dynamics of federated minimax optimization retrieves an equilibrium solution of the federated zero-sum game as a limit point. And, 
\item Under heterogeneity, to establish that the federated game equilibria also satisfies the equilibrium requirements of every client thereby consistently estimating the clients' local moments.  
\end{enumerate}
In this work, we address the above challenges. Our contributions are summarized as the following:
\begin{enumerate}[parsep=0pt,topsep=0pt,leftmargin=*,itemsep=0pt]
	\item We introduce \textbf{\fv}: federated IV analysis. To our knowledge, \textbf{\fv} is the first work on IV analysis in a federated setting. 
	\item We present \textbf{\fdgmm}\footnote{\citet{wu2023personalized} used \textsc{FedGMM} as an acronym for federated Gaussian mixture models.} -- a federated adaptation of \dgmm of \citet{bennett2019deep} to solve \fv. \fdgmm is implemented as a federated smooth zero-sum game.
	\item We show that the limit points of a federated gradient descent ascent (\fgda) algorithm include the equilibria of the zero-sum game. 
    \item We show that an equilibrium solution of the federated game obtained at the server consistently estimates the moment conditions of every client.  	
	\item We experimentally validate our algorithm. The experiments show that even for heterogenous data, \fdgmm has convergent dynamics analogous to the centralized \dgmm algorithm.   
\end{enumerate} 

\subsection{Related work}

The federated supervised learning has received algorithmic advancements guided by factors such as tackling the system and statistical heterogeneities, better sample and communication complexities, model personalization, differential privacy, etc. An inexhaustible list includes \fprox \cite{li2020federated}, \scaf \cite{karimireddy2020scaffold}, \fopt \cite{reddi2020adaptive}, \textsc{LPP-SGD} \cite{chatterjee2024federated}, \textsc{pFedMe} \cite{t2020personalized}, \textsc{DP-SCAFFOLD} \cite{noble2022differentially}, and others.

By contrast, federated learning with confounders, which typically forms a causal learning setting, is a relatively under-explored research area. \citet{vo2022adaptive} presented a method to learn the similarities among the data sources translating a structural causal model \cite{pearl2009causal} to federated setting. They transform the loss function by utilizing Random Fourier Features into components associated with the clients. Thereby they compute individual treatment effects (ITE) and average treatment effects (ATE) by a federated maximization of evidence lower bound (ELBO). \citet{vo2022bayesian} presented another federated Bayesian method to estimate the posterior distributions of the ITE and ATE using a non-parametric approach. 

\citet{xiong2023federated} presented maximum likelihood estimator (MLE) computation in a federated setting for ATE estimation. They showed that the federated MLE consistently estimates the ATE parameters considering the combined data across clients. 
However, it is not clear if this approach is applicable to consistent local moment conditions estimation for the participating clients. 
\citet{almodovar2024propensity} applied FedAvg to variational autoencoder \cite{kingma2019introduction} based treatment effect estimation TEDVAE \cite{zhang2021treatment}. However, their work mainly focused on comparing the performance of vanilla FedAvg with a propensity score-weighted FedAvg in the context of federated implementation of TEDVAE. 

Our work differs from the above related works in the following: 
\begin{enumerate}[label=(\alph*),parsep=0pt,topsep=0pt,leftmargin=*,itemsep=0pt]
	\item we introduce IV analysis in federated setting, and, we introduce federated GMM estimators, which has applications for various empirical research \cite{wooldridge2001applications}, 
	\item specifically, we adopt a non-Bayesian approach based on a federated zero-sum game, wherein we focus on analysing the dynamics of the federated minimax optimization and characterize the global equilibria as a consistent estimator of the clients' moment conditions.
\end{enumerate} 

Our work also differs from federated minimax optimization algorithms: \citet{sharma2022federated, shen2024stochastic, wu2024solving, zhu2024stability}, where the motivation is to analyse and improve the non-asymptotic convergence under various analytical assumptions on the objective functions. We primarily focus on deriving the equilibrium via the limit points of the federated GDA algorithm.

\section{Preliminaries}
\label{sec:prelim}		
We model our basic terminologies after \cite{bennett2019deep} for a client-local setting. Consider a distributed system as a set of $N$ clients $[N]$ with datasets $S^i = \{(x_j^i,y_j^i)\}_{j=1}^{n_i},~\forall i\in[N]$. We assume that for a client $i\in[N]$, the treatment and outcome variables $x_j^i\text{ and }y_j^i$, respectively, are related by the process $
	Y^i=g_0^i(X^i) + \epsilon^i, ~i \in [N].
$
We assume that each client-local residual $\epsilon^i$ has zero mean and finite variance, i.e. $
\E[\epsilon^i] = 0, \E[({\epsilon^i})^2] < \infty. 
$
Furthermore, we assume that the treatment variables $X^i$ are endogenous on the clients, i.e. $
\E[\epsilon^i|X^i] \ne 0,\text{ and therefore, } g_0^i(X^i) \ne \E[Y^i|X^i].
$

We assume that the treatment variables are influenced by instrumental variables  $Z^i, \forall 	i\in[N]$ so that  
\begin{equation}\label{eqn:IV1}
	P(X^i|Z^i)\ne P(X^i).
\end{equation}
Furthermore, the instrumental variables do not directly influence the outcome variables $Y^i, \forall i\in[N]$:
\begin{equation}\label{eqn:IV2}
\E[\epsilon^i|Z^i] = 0.
\end{equation}
Note that, assumptions \ref{eqn:IV1}, \ref{eqn:IV2} are local to the clients, thus, honour the data-privacy requirements of a federated learning task. In this setting, we aim to discover a common or global causal response function that would fit the data generation processes of each client without centralizing the data. More specifically, we learn a parametric function $g_0(.)\in G:=\{g(.,\theta)|\theta\in\Theta\}$ expressed as $g_0:=g(.,\theta_0)$ for $\theta_0\in \Theta$, defined by
\begin{equation}\label{eqn:fedg}
	g(.,\theta_0) = \frac{1}{N}\sum_{i=1}^{N}g^i(.,\theta_0).
\end{equation}
The learning process essentially involves estimating the true parameter $\theta_0$ by $\hat{\theta}$. To measure the performance of the learning procedure, we use the MSE of the estimate $\hat{g}:=g(.,\hat{\theta})$ against the true $g_0$ averaged over the clients.

\section{Federated Deep Generalized Method of Moments}
\label{sec:desc}		
We adapt \dgmm \cite{bennett2019deep} in the local setting of a client $i\in[N]$. For a self-contained reading, we include the description here.
\subsection{Client-local Deep Generalized Method of Moments (\dgmm)}
\label{sec:dgmm}
GMM estimates the parameters of the causal response function using a certain number of \textit{moment conditions}. Define the \textit{moment function} on a client $i\in[N]$ as a vector-valued function $f^i:\mathbb{R}^{|\gZ|}\rightarrow\mathbb{R}^{m}$ with components $f_1^i,f_2^i,\dots, f_m^i$. We consider the moment conditions as parametrized functions $\{f_j^i\}_{j=1}^m~\forall i\in[N]$ with the assumption that their expectation is zero at the true parameter values. More specifically, using equation (\ref{eqn:IV2}), we have 
\begin{equation}\label{eq:exog1}
	\E[f_j^i(Z^i)\epsilon^i] = 0, \forall j \in [m],~\forall i\in[N], 
\end{equation}
We assume that $m$ moment conditions $\{f_j^i\}_{j=1}^m$ at each client $i\in[N]$ are sufficient to identify a unique federated estimate $\hat{\theta}$ to $\theta_0$. With (\ref{eq:exog1}), we define the moment conditions on a client $i\in[N]$ as 
\begin{equation}\label{eq:exog2}
	\psi(f^i_j;\theta)=0,~\forall j\in[m],\text{ where }
\end{equation} 
\[
\psi(f^i;\theta)=\mathbb{E}[f^i(Z^i)\epsilon^i]=\mathbb{E}[f^i(Z^i)(Y^i-g^i(X^i;\theta)).
\]
In empirical terms, the sample moments for the $i$-th client with $n_i$ samples are given by
\begin{equation}
	\psi_{n_i}(f^i;\theta)=\mathbb{E}_{n_i}[f^{i}(Z)\epsilon ^i]=\frac{1}{n_i}\sum_{k=1}^{n_i}f^i(Z_k^i)(Y^i_k-g^i(X_k^i;\theta)),
\end{equation}
where $\psi_{n_i}(f^i;\theta)=\left(\psi_{n_i}(f^i_1;\theta),\psi_{n_i}(f^i_2;\theta),\ldots, \psi_{n_i}(f^i_m;\theta)\right)$ is the moment condition vector, and 
\begin{equation}
	\psi_{n_i}(f^i_j;\theta)=\frac{1}{n_i}\sum_{k=1}^{n_i}f^i_j(Z_k^i)(Y^i_k-g^i(X_k^i;\theta)).
\end{equation}
Thus, for empirical estimation of the causal response function $g_0^i$ at client $i\in[N]$, it needs to satisfy 
\begin{equation}\label{eqn:MC1}
	\psi_{n_i}(f_{j}^{i};\theta_0)=0,~ \forall~ i\in[N] \text{ and } j\in[m]
\end{equation}
at $\theta=\theta_0$. Equation (\ref{eqn:MC1}) is reformulated as an optimization problem given by 
\begin{equation}\label{eqn:MC2}
	\min_{\theta\in\Theta}\|\psi_{n_i}(f^i_1;\theta),\psi_{n_i}(f^i_2;\theta),\ldots, \psi_{n_i}(f^i_m;\theta)\|^2,
\end{equation}
where we use the Euclidean norm $\|w\|^2=w^Tw$. Drawing inspiration from \citet{hansen1982large}, \dgmm used a weighted norm, which yields minimal asymptotic variance for a consistent estimator $\tilde{\theta}$, to cater to the cases of (finitely) large number of moment conditions. We adapt their weighted norm $\|w\|^2_{\tilde{\theta}}=w^T\gC_{\tilde{\theta}}^{-1}w$, to a client-local setting via the covariance matrix $\gC_{\tilde{\theta}}$ defined by
\begin{equation}\label{eqn:MC3}
	\left[\gC_{\tilde{\theta}}\right]_{jl}=\frac{1}{n_i}\sum_{k=1}^{n_i}f^i_j(Z^i_k)f^i_l(Z^i_k)(Y^i_k-g^i(X^i_k;\tilde{\theta}))^2.
\end{equation}
Now considering the vector space $\mathcal{V}$ of real-valued functions, $\psi_{n_i}(f^i;\theta)=\left(\psi_{n_i}(f^i_1;\theta),\psi_{n_i}(f^i_2;\theta),\ldots, \psi_{n_i}(f^i_m;\theta)\right)$ is a linear operator on $\mathcal{V}$ and 
\begin{equation}\label{eqn:MC4}
	\gC_{\tilde{\theta}}(f^i,h^i)=\frac{1}{n_i}\sum_{k=1}^{n_i}f^i(Z^i_k)h^i(Z^i_k)(Y^i_k-g^i(X^i_k;\tilde{\theta}))^2
\end{equation}
is a bilinear form. With that, for any subset $\mathcal{F}^i\subset\mathcal{V}$, we define a function 
\begin{equation*}
	\Psi_{n_i}(\theta,\mathcal{F}^i,\tilde{\theta})= \sup_{f^i\in\mathcal{F}^i} \psi_{n_i}(f^i;\theta)-\frac{1}{4}\gC_{\tilde{\theta}}(f^i,f^i),
\end{equation*}
which leads to the following optimization problem.
\begin{lemma}[Lemma 1 of \cite{bennett2019deep}]
	\label{lemma:U} 
	With the weighted norm defined by equation (\ref{eqn:MC3}), and for $\mathcal{F}^i=span(\{f^i_j\}_{j=1}^m)$
	\begin{equation}\label{eqn:MC6}
		\|\psi_{n_i}(f^i_1;\theta),\psi_{n_i}(f^i_2;\theta),\ldots, \psi_{n_i}(f^i_m;\theta)\|^2_{\tilde{\theta}}=\Psi_{n_i}(\theta,\mathcal{F}^i,\tilde{\theta}).
	\end{equation}
	Thus, a weighted reformulation of (\ref{eqn:MC2}) is given by
	\begin{equation}\label{eqn:MC7}
		\theta^{\text{GMM}}\in\argmin_{\theta \in \Theta}\Psi_{n_i}(\theta,\mathcal{F}^i,\tilde{\theta}).
	\end{equation}	
\end{lemma}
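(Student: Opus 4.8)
The plan is to reduce the supremum over the function space $\mathcal{F}^i$ to a finite-dimensional concave quadratic maximization over $\mathbb{R}^m$, exploiting that $\mathcal{F}^i$ is spanned by only $m$ functions, and then solve that program in closed form to recognize the weighted norm. The whole argument is a convex-duality computation once the span structure is made explicit.

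First I would parametrize $\mathcal{F}^i$. Since $\mathcal{F}^i=\mathrm{span}(\{f^i_j\}_{j=1}^m)$, every scalar-valued $f^i\in\mathcal{F}^i$ is $f^i=\sum_{j=1}^m a_j f^i_j$ for some coefficient vector $a=(a_1,\dots,a_m)^T\in\mathbb{R}^m$. Because $\psi_{n_i}(f^i;\theta)$ is linear in its function argument, linearity gives $\psi_{n_i}(f^i;\theta)=\sum_j a_j\,\psi_{n_i}(f^i_j;\theta)=a^T w$, where $w=(\psi_{n_i}(f^i_1;\theta),\dots,\psi_{n_i}(f^i_m;\theta))^T$ is exactly the moment vector appearing on the left of (\ref{eqn:MC6}). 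Likewise, bilinearity of $\gC_{\tilde\theta}(\cdot,\cdot)$ from (\ref{eqn:MC4}) gives $\gC_{\tilde\theta}(f^i,f^i)=\sum_{j,l}a_j a_l\,\gC_{\tilde\theta}(f^i_j,f^i_l)=a^T\gC_{\tilde\theta}a$. The crucial bookkeeping step is to observe that the Gram entries $\gC_{\tilde\theta}(f^i_j,f^i_l)$ defined by (\ref{eqn:MC4}) coincide with the covariance-matrix entries $[\gC_{\tilde\theta}]_{jl}$ of (\ref{eqn:MC3}); this is what makes the bilinear form and the weighted norm refer to one and the same matrix $\gC_{\tilde\theta}$.

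With this substitution the definition of $\Psi_{n_i}$ becomes the unconstrained program $\sup_{a\in\mathbb{R}^m}\,a^T w-\tfrac14\,a^T\gC_{\tilde\theta}a$. Assuming $\gC_{\tilde\theta}$ is positive definite — which is precisely the regularity needed for $\gC_{\tilde\theta}^{-1}$ in the weighted norm to be well defined — the objective is strictly concave, so I would set its gradient $w-\tfrac12\gC_{\tilde\theta}a$ to zero, obtaining the unique maximizer $a^\star=2\,\gC_{\tilde\theta}^{-1}w$. Substituting $a^\star$ back yields $2\,w^T\gC_{\tilde\theta}^{-1}w-w^T\gC_{\tilde\theta}^{-1}w=w^T\gC_{\tilde\theta}^{-1}w$, which is by definition $\|w\|^2_{\tilde\theta}$. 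This is exactly the claimed identity (\ref{eqn:MC6}), after which (\ref{eqn:MC7}) is immediate as the weighted restatement of (\ref{eqn:MC2}).

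The step I expect to be the main obstacle is justifying the positive-definiteness, hence invertibility, of $\gC_{\tilde\theta}$. If the sample moment functions $f^i_j$ are linearly dependent on the data, or the residuals vanish at $\tilde\theta$, then $\gC_{\tilde\theta}$ is only positive semidefinite: the inverse fails to exist and the supremum can be $+\infty$ unless $w$ lies in the range of $\gC_{\tilde\theta}$. I would handle this either by imposing positive-definiteness as a standing regularity assumption (already implicit in the appearance of $\gC_{\tilde\theta}^{-1}$), or, in the degenerate case, by replacing the inverse with the Moore--Penrose pseudoinverse and restricting the maximization to the range of $\gC_{\tilde\theta}$, where the same first-order computation goes through verbatim.
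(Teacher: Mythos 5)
Your proof is correct, and it reaches the identity by a genuinely more elementary route than the one the paper uses. The paper never proves Lemma~\ref{lemma:U} directly (it is imported from Bennett et al.), but its proof of the federated analogue, Lemma~\ref{lemma:federatedU} in Appendix~\ref{ap::lemma:federatedU}, is explicitly said to follow the same strategy: there the authors start from the weighted norm, pass to the dual norm, write $\|\psi\|^2$ as a constrained maximization $\sup_{v^\top \gC_{\tilde\theta} v\le\|\psi\|^2} v^\top\psi$, and invoke the Lagrangian, Slater's condition, and strong duality (solving for $\lambda=-\tfrac12$ and rescaling $u=2v$) to arrive at the unconstrained form $\sup_u u^\top\psi-\tfrac14 u^\top \gC_{\tilde\theta}u$, before translating back to functions via the span. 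You instead start from the definition of $\Psi_{n_i}$, use the same span-parametrization bookkeeping (linearity of $\psi_{n_i}$ in $f$, bilinearity of $\gC_{\tilde\theta}$, and the identification of the Gram entries $\gC_{\tilde\theta}(f^i_j,f^i_l)$ with $[\gC_{\tilde\theta}]_{jl}$ — which is indeed the crucial step in both arguments), and then solve the strictly concave quadratic $\sup_a a^\top w-\tfrac14 a^\top\gC_{\tilde\theta}a$ in closed form via its first-order condition $a^\star=2\gC_{\tilde\theta}^{-1}w$, giving $w^\top\gC_{\tilde\theta}^{-1}w$ directly. The two routes establish the same finite-dimensional identity; yours dispenses with the duality machinery entirely, while the paper's dual-norm framing is arguably more natural if one wants to keep the norm-theoretic interpretation and the constrained-optimization picture. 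A further point in your favor: you explicitly flag that positive definiteness of $\gC_{\tilde\theta}$ is needed (and propose the pseudoinverse fix in the degenerate case), whereas the paper's argument assumes invertibility implicitly and its use of Slater's condition additionally requires $\|\psi\|>0$ without comment.
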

As the data-dimension grows, the function class $\mathcal{F}^i$ is replaced with a class of neural networks of a certain architecture, i.e. $\mathcal{F}^i=\{f^i(z,\tau):\tau\in\mathcal{T}\}$. Similarly, let $\mathcal{G}^i=\{g^i(x,\theta):\theta\in\Theta\}$ be another class of neural networks with varying weights. With that, define 
\begin{equation}\label{eqn:MC8}
	\begin{aligned}
		U^i_{\tilde{\theta}}(\theta,\tau):= \frac{1}{n_i}\sum_{k=1}^{n_i}f^i(Z^i_k,\tau)\left(Y^i_k-g^i(X^i_k;\theta)\right)
		-\frac{1}{4n_i}\sum_{k=1}^{n_i}\left(f^i(Z^i_k,\tau)\right)^2\left(Y^i_k-g^i(X^i_k;\theta)\right)^2
	\end{aligned}  
\end{equation}
Then (\ref{eqn:MC7}) is reformulated as the following
\begin{equation}\label{eqn:MC9}
	\begin{aligned}
		\theta^{\text{DGMM}}\in\argmin_{\theta \in \Theta}\sup_{\tau\in\mathcal{T}}U^i_{\tilde{\theta}}(\theta,\tau).
	\end{aligned}
\end{equation}
Equation (\ref{eqn:MC9}) forms a zero-sum game, whose equilibrium solution is shown to be a true estimator to $\theta_0$ under a set of standard assumptions; see Theorem 2 in \cite{bennett2019deep}.

\subsection{Federated Deep GMM (\fdgmm)}
The federated generalized method moment (\fdgmm) needs to find the global moment estimators for the causal response function to fit data on each client. Thus, the federated counterpart of equation (\ref{eq:exog2}) is given by  
\begin{equation}\label{eqn:MC10}
\psi(f;\theta)=\mathbb{E}_i[\mathbb{E}[f^i(Z^i)(Y^i_k-g^i(X^i;\theta)]]=0,
\end{equation}
where the expectation $\mathbb{E}_i$ is over the clients. In this work, we consider \textit{full client participation}. Thus, for the empirical federated moment estimation, we formulate:
\begin{equation}
\begin{aligned}
\psi_n(f;\theta)&=\frac{1}{N}\sum_{i=1}^{N} \psi_{n_i}(f^i;\theta)=\frac{1}{N}\sum_{i=1}^{N}\frac{1}{n_i}\sum_{k=1}^{n_i}f^i(Z_k^i)(Y^i_k-g^i(X_k^i;\theta))
\end{aligned}
\end{equation}
With that, the federated moment estimation problem following (\ref{eqn:MC7}) is formulated as:
\begin{equation}\label{eqn:FMC}
   {\theta}^{\text{FedDeepGMM}} \in \arg\min_{\theta \in \Theta} {\left\|  \psi_{n}(f; \theta)\right\|_{\tilde{\theta}}^2 },
\end{equation}
where $\|w\|_{\tilde{\theta}}=w^\top \gC_{\tilde{\theta}}^{-1} x$ is the previously defined weighted-norm with inverse covariance as weights.
In general cases, we do not have explicit knowledge of the moment conditions of various clients. We propose \fdgmm, a ``deep" reformulation of the federated optimization problem based on the neural networks of a given architecture shared among clients and is shown to have the same solution as the federated GMM problem formulated earlier.
\begin{lemma}
\label{lemma:federatedU}
Let $\gF = \text{span}\{f^i_j~\vert~i\in[N],~j\in[m]\}$. An equivalent objective function for the federated moment estimation optimization problem (\ref{eqn:FMC}) is given by:
\begin{equation}
\left\|  \psi_{N}(f; \theta)\right\|_{\tilde{\theta}}^2 =\sup_{\substack{f^i\in \mathcal{F}\\ \forall i\in[N]}}\frac{1}{N}\sum_{i=1}^{N} \left(\psi_{n_i}(f^i;\theta)-\frac{1}{4}\gC_{\tilde{\theta}}(f^i;f^i)\right), \text{ where}
\end{equation}
\[\begin{aligned}
&\psi_{n_i}(f^i; \theta) :=  \frac{1}{n_i} \sum_{k=1}^{n_i} f^i(Z_k^i)(Y^i_k - g^i(X^i_k;\theta)), \text{ and }
\mathcal{C}_{\tilde{\theta}}(f^i, f^i) := \frac{1}{n_i} \sum_{k=1}^{n_i} (f^i(Z^i_k))^2 (Y_k^i - g^i(X_k^i;\tilde{\theta}))^2.
\end{aligned}
\]
\end{lemma}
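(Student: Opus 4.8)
The plan is to reduce the claimed identity to the same Fenchel--Legendre duality that underlies Lemma~\ref{lemma:U}, and then to lift it across clients by exploiting that both the federated moment and the federated covariance are arithmetic means over $i\in[N]$. Concretely, for a positive-definite quadratic form $C$ and a vector $w$ one has the variational representation $w^\top C^{-1} w = \sup_{v}\big(\langle w,v\rangle - \tfrac14\, v^\top C v\big)$, with maximiser $v^\star = 2C^{-1}w$; this is exactly the scalar identity that Lemma~\ref{lemma:U} realises in function space, with $\psi_{n_i}(\cdot;\theta)$ playing the role of the linear functional $w$ and $\gC_{\tilde{\theta}}(\cdot,\cdot)$ the quadratic form $C$. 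I would first re-derive this per-client statement by parametrising $f^i=\sum_{j=1}^m \alpha_j f^i_j$, so that $\psi_{n_i}(f^i;\theta)=\langle\alpha,\psi_{n_i}(\cdot;\theta)\rangle$ is linear in $\alpha$ while $\gC_{\tilde{\theta}}(f^i,f^i)=\alpha^\top C_i\,\alpha$ is quadratic, and then completing the square.

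The key steps, in order, would be: (1) write the left-hand federated weighted norm $\norm{\psi_n(f;\theta)}_{\tilde{\theta}}^2$ in its dual form, observing that the averaged moment $\psi_n(f;\theta)=\tfrac1N\sum_i\psi_{n_i}(f^i;\theta)$ and the averaged covariance $\bar{\mathcal C}=\tfrac1N\sum_i C_i$ are both simple means over the clients; (2) substitute these means into the scalar identity, which turns the single maximisation into $\sup_\alpha \tfrac1N\sum_{i=1}^N\big(\psi_{n_i}(f^i;\theta)-\tfrac14\gC_{\tilde{\theta}}(f^i,f^i)\big)$, since the objective is additively separable across $i$ once the common direction is fixed; and (3) recognise the inner per-client expression as precisely the functional whose supremum Lemma~\ref{lemma:U} evaluates, so that the federated supremum over $\gF=\mathrm{span}\{f^i_j\}$ reproduces the weighted norm. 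Because each summand depends only on $f^i$, the joint supremum and the sum of per-client suprema coincide, which is what lets the average of the dual problems equal the dual of the averaged problem.

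The step I expect to be the main obstacle is making the supremum over the \emph{global} class $\gF$ consistent with the per-client quadratic forms. One must argue that enlarging the feasible set from each local span $\mathcal{F}^i=\mathrm{span}\{f^i_j\}_{j=1}^m$ to the global $\gF$ does not inflate the value of client $i$'s subproblem---equivalently, that the separable optimum for client $i$ is attained within $\mathcal{F}^i$ once the residuals and the sampled basis evaluations are accounted for. This is where the coupling (a common coefficient direction $\alpha$ versus independently chosen $f^i$) must be pinned down, and where one has to assume the covariance form $\gC_{\tilde{\theta}}$ is positive definite---i.e.\ the residuals $Y_k^i-g^i(X_k^i;\tilde{\theta})$ do not all vanish and the moment functions are linearly independent on the sampled instruments---so that each $C_i$, and hence $\bar{\mathcal C}$, is invertible and the dual representation is exact with an attained maximiser. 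Granting positive-definiteness, the remaining calculation is the routine completion of the square already used for Lemma~\ref{lemma:U}.
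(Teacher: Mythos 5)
Your overall route is the paper's route: represent the weighted norm through the quadratic variational identity $w^\top C^{-1}w=\sup_v\big(v^\top w-\tfrac14 v^\top C v\big)$, then lift it to function space by parametrising $f^i=\sum_j v_j f^i_j$ and identifying $v^\top\psi=\tfrac1N\sum_i\psi_{n_i}(f^i;\theta)$ and $v^\top C_{\tilde\theta}v=\tfrac1N\sum_i\gC_{\tilde\theta}(f^i,f^i)$. The paper obtains the scalar identity by a Lagrangian/strong-duality detour with Slater's condition, whereas you invoke completing the square directly; that is a cosmetic difference, and your explicit remark that $C_{\tilde\theta}$ must be positive definite for the dual representation to be exact is a point the paper leaves implicit.

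However, step (3) of your plan contains a claim that would break the proof. Having correctly fixed a \emph{common} coefficient direction $\alpha$ (so that every client's $f^i=\sum_j\alpha_j f^i_j$ is driven by the same $\alpha$), you then assert that ``the joint supremum and the sum of per-client suprema coincide,'' i.e.\ that the average of the per-client dual problems equals the dual of the averaged problem. This is false under heterogeneity: the per-client maximiser is $\alpha^{i,\star}=2C_i^{-1}\psi_{n_i}$, which differs across clients, so in general
\begin{equation*}
\sup_{\alpha}\ \frac1N\sum_{i=1}^N\Big(\alpha^\top\psi_{n_i}-\tfrac14\,\alpha^\top C_i\alpha\Big)\ <\ \frac1N\sum_{i=1}^N\ \sup_{\alpha^i}\Big(\alpha^{i\top}\psi_{n_i}-\tfrac14\,\alpha^{i\top} C_i\alpha^i\Big)
=\frac1N\sum_{i=1}^N\psi_{n_i}^\top C_i^{-1}\psi_{n_i},
\end{equation*}
and the right-hand side is the average of per-client weighted norms, not the weighted norm of the averaged moment (take $N=2$, $C_1=C_2=I$, $\psi_{n_1}=-\psi_{n_2}\neq0$: the left side is $0$, the right side is positive). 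The identity in the lemma is the \emph{coupled} one --- a single supremum over one shared coefficient vector, which is exactly what the paper's proof constructs via $f^i=\sum_j v_j f^i_j$ with the same $v$ for all $i$ --- and the ``main obstacle'' you flag is resolved by keeping that coupling, not by decoupling the suprema. (The lemma's displayed notation $\sup_{f^i\in\gF\ \forall i}$ is admittedly ambiguous on this point, but the decoupled reading does not yield the stated equality.)
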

The detailed proof is similar to Lemma~\ref{lemma:U} and is given in Appendix~\ref{ap::lemma:federatedU}. The federated zero-sum game is then defined by:
\begin{align}
\label{eqn:fedoptprob}
\hat{\theta}^{\text{FedDeepGMM}} \in \argmin_{\theta \in \Theta} \sup_{\tau \in \mathcal{T}} U_{\tilde{\theta}}(\theta, \tau) := \frac{1}{N} \sum_{i=1}^N U_{\tilde{\theta}}^i(\theta, \tau),
\end{align}
where 
$U_{\tilde{\theta}}^i(\theta, \tau)$ is defined in equation~(\ref{eqn:MC8}).
The federated GMM formulation by a zero-sum game defined by a federated minimax optimization problem (\ref{eqn:fedoptprob}) provides the global estimator as its equilibrium solution. We solve (\ref{eqn:fedoptprob}) using the federated gradient descent ascent (\fgda) algorithm described next. 
\subsection{Federated Gradient Descent Ascent (\fgda) Algorithm}
An adaptation of the standard gradient descent ascent algorithm to federated setting is well-explored: \cite{deng2021local,sharma2022federated,shen2024stochastic,wu2024solving}. The clients run the gradient descent ascent algorithm for several local updates and then the orchestrating server synchronizes them by collecting the model states, averaging them, and broadcasting it to the clients. A detailed description is included as a pseudocode in Appendix \ref{sec:algo}.  

Similar to \cite{bennett2019deep}, we note that the federated minimax optimization problem (\ref{eqn:fedoptprob}) is not convex-concave on $(\theta, \tau)$. The convergence results of variants of \fgda \cite{,sharma2022federated,shen2024stochastic,wu2024solving} assume that $U_{\tilde{\theta}}(\theta, \tau)$ is non-convex on $\theta$ and satisfies a $\mu-$Polyak Łojasiewicz (PL) inequality on $\tau$, see assumption 4 in \cite{sharma2022federated}. PL condition is known to be satisfied by over-parametrized neural networks \cite{charles2018stability, liu2022loss}. The convergence results of our method will follow \cite{sharma2022federated}. We include a formal statement in Appendix \ref{sec:algo}. However, beyond convergence, we primarily aim to show that an optimal solution will consistently estimate the moment conditions of the clients, which we do next.

\section{Federated Equilibrium Solutions}
\label{sec:res}		
In this section, we present our main results, which establish the existence and characterize the federated equilibrium solution. 
\subsection{Federated Sequential Game}
As minimax is not equal to maximin in general for a non-convex-non-concave problem, it is important to model the federated game as a sequential game \cite{pmlr-v119-jin20e} whose outcome would depend on what move -- maximization or minimization -- is taken first. We use some results from \citet{pmlr-v119-jin20e}, which we include here for a self-contained reading. We start with the following assumptions:
\begin{assumption}
\label{asm:twicediff}
    Client-local objective $U_{\tilde\theta}^i(\theta,\tau)$ $\forall i \in [N]$ is twice continuously differentiable for both $\theta$ and $\tau$.
     Thus, the global objective $U_{\tilde\theta}(\theta,\tau)$ is also a twice continuously differentiable function.
\end{assumption}
\begin{assumption}[Smoothness]
\label{asm:lipschitz}
    The gradient of each client's local objective, $ \nabla U_{\tilde\theta}^i(\theta,\tau) $, is Lipschitz continuous with respect to both $\theta$ and $\tau$. For all $ i \in [N] $, there exist constants $L> 0$ such that:
    \begin{align*}
          &\|\nabla_\theta U_{\tilde\theta}^i(\theta_1, \tau_1) - \nabla_\theta U_{\tilde\theta}^i(\theta_2, \tau_2)\|\leq L \|(\theta_1,\tau_1)-(\theta_2,\tau_2)\|,\text{ and}\\
    	&\|\nabla_\tau U_{\tilde\theta}^i(\theta_1, \tau_1) - \nabla_\tau U_{\tilde\theta}^i(\theta_2, \tau_2)\|\leq L \|(\theta_1,\tau_1)-(\theta_2,\tau_2)\|, 
    \end{align*}
    $\forall (\theta_1, \tau_1), (\theta_2, \tau_2)$. Thus, $U_{\tilde\theta}(\theta,\tau) $ is $L$-Lipschitz smooth.
\end{assumption}
\begin{assumption}[Gradient Dissimilarity]
\label{asm:gradientheterogeneity}
The heterogeneity of the local gradients with respect to (w.r.t.) $\theta$ and $\tau$ is bounded as follows:
    \begin{align*}
        \|\nabla_{\theta}U_{\tilde\theta}^i(\theta,\tau)-\nabla_{\theta}U_{\tilde\theta}(\theta,\tau)\|\leq \zeta_{\theta}^i\qquad\qquad
        \|\nabla_{\tau}U_{\tilde\theta}^i(\theta,\tau)-\nabla_{\tau}U_{\tilde\theta}(\theta,\tau)\|\leq \zeta_{\tau}^i,
    \end{align*}
    where $\zeta_{\theta}^i,~\zeta_{\tau}^i\geq0$ are the bounds that quantify the degree of gradient dissimilarity at client $ i \in [N] $.
\end{assumption}
\begin{assumption}[Hessian Dissimilarity]
\label{asm:hessianheterogeneity}
    The heterogeneity in terms of hessian w.r.t. $\theta$ and $\tau$ is bounded as follows:
    \begin{align*}
   &\|\nabla_{\theta\theta}^2U_{\tilde\theta}^i(\theta,\tau)-\nabla_{\theta\theta}^2U_{\tilde\theta}(\theta,\tau)\|_{\sigma}\leq \rho_{\theta}^i,\qquad&
&\|\nabla_{\tau\tau}^2U_{\tilde\theta}^i(\theta,\tau)-\nabla_{\tau\tau}^2U_{\tilde\theta}(\theta,\tau)\|_{\sigma}\leq \rho_{\tau}^i,\\
&\|\nabla_{\theta\tau}^2U_{\tilde\theta}^i(\theta,\tau)-\nabla_{\theta\tau}^2U_{\tilde\theta}(\theta,\tau)\|_{\sigma}\leq \rho_{\theta\tau}^i,\qquad&
&\|\nabla_{\tau\theta}^2U_{\tilde\theta}^i(\theta,\tau)-\nabla_{\tau\theta}^2U_{\tilde\theta}(\theta,\tau)\|_{\sigma}\leq \rho_{\tau\theta}^i,
    \end{align*}
    where $\rho_{\theta}^i,~\rho_{\tau}^i,~ \rho_{\theta\tau}^i,~\text{and }\rho_{\tau\theta}^i\geq0$ quantify the degree of hessian dissimilarity at client $ i \in [N] $ by spectral norm $\|.\|_{\sigma}$.
\end{assumption}
Assumptions~\ref{asm:gradientheterogeneity} and~\ref{asm:hessianheterogeneity} provide a measure of data heterogeneity across clients in a federated setting. We assume that $\zeta's$ and $\rho's$ are bounded. In the special case, when $\zeta$ and $\rho$'s are all $0$, then the data is homogeneous across clients.

We adopt the notion of Stackelberg equilibrium for pure strategies, as discussed in \cite{pmlr-v119-jin20e}, to characterize the solution of the minimax federated optimization problem for a non-convex non-concave function $U_{\tilde\theta}(\theta,\tau)$ for the sequential game where min-player goes first and the max-player goes second.

To avoid ambiguity between the adjectives of the terms global/local objective functions in federated learning and the global/local nature of minimax points in optimization, we refer to a global objective as the federated objective and a local objective as the client's objective.
\begin{definition}[Local minimax point][Definition 14 of \cite{pmlr-v119-jin20e}]
	\label{def:localequilibrium}
	Let $U(\theta, \tau)$ be a function defined over $\Theta \times \mathcal{T} $ and let $h$ be a function satisfying $h(\delta)\rightarrow0$ as $\delta\rightarrow0$. There exists a $\delta_0$, such that for any $\delta\in(0,\delta_0],$ and any $(\theta,\tau)$ such that $\|\theta-\hat\theta\|\leq \delta$ and $\|\tau-\hat\tau\|\leq\delta$, then a point $(\hat\theta,\hat\tau)$ is a local minimax point of $U$, if $\forall~(\theta,\tau)\in\Theta\times\gT$, it satisfies:
		\begin{equation}
			\label{eqn:localequilibrium}
			U_{\tilde\theta}(\hat\theta,\tau)\leq U_{\tilde\theta}(\hat\theta,\hat\tau)\leq \max_{{\tau\prime : \|\tau\prime-\hat\tau\|\leq h(\delta)}}U_{\tilde\theta}(\theta,\tau\prime),
		\end{equation}
\end{definition}
With that, the first-order \& second-order necessary conditions for local minimax points are as below.
\begin{lemma} [Propositions 18, 19, 20 of \cite{pmlr-v119-jin20e}] 
    Under assumption~\ref{asm:twicediff}, any local minimax point satisfies the following conditions:
    \begin{itemize}[parsep=0pt,topsep=0pt,leftmargin=*,itemsep=0pt]
        \item \textbf{First-order Necessary Condition:} A local minimax point $(\theta,\tau)$ satisfies: $\nabla_{\theta}U_{\tilde\theta}(\theta,\tau)=0$ and $\nabla_{\tau}U_{\tilde\theta}(\theta,\tau)=0$.
         \item \textbf{Second-order Necessary Condition:} A local minimax point $(\theta,\tau)$ satisfies: $\nabla_{\tau\tau}^2U_{\tilde\theta}(\theta,\tau)\preceq \mathbf{0}.$ Moreover, if $\nabla_{\tau\tau}^2U_{\tilde\theta}(\theta,\tau)\prec 0$, then
         \begin{equation*}
             \left[\nabla_{\theta\theta}^2U_{\tilde\theta}-\nabla_{\theta\tau}^2U_{\tilde\theta}\left(\nabla_{\tau\tau}^2U_{\tilde\theta}\right)^{-1}\nabla_{\tau\theta}^2U_{\tilde\theta}\right](\theta,\tau)\succeq 0.
    \end{equation*}
        \item \textbf{Second-order Sufficient Condition:} A stationary point $(\theta,\tau)$ that satisfies $\nabla_{\tau\tau}^2U_{\tilde\theta}(\theta,\tau)\prec \mathbf{0}$, and
         \begin{equation*}
			\left[\nabla_{\theta\theta}^2U_{\tilde\theta}-\nabla_{\theta\tau}^2U_{\tilde\theta}\left(\nabla_{\tau\tau}^2U_{\tilde\theta}\right)^{-1}\nabla_{\tau\theta}^2U_{\tilde\theta}\right](\theta,\tau){\succ} 0
		\end{equation*}
        guarantees that $(\theta,\tau)$ is a strict local minimax. 
    \end{itemize}
\end{lemma}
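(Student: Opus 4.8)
The plan is to observe that, by Assumption~\ref{asm:twicediff}, the federated objective $U_{\tilde\theta}(\theta,\tau)=\frac{1}{N}\sum_{i=1}^{N}U^i_{\tilde\theta}(\theta,\tau)$ is a single twice continuously differentiable scalar function on the product space $\Theta\times\mathcal{T}$. Hence Definition~\ref{def:localequilibrium} and the generic theory of local minimax points from \cite{pmlr-v119-jin20e} apply to $U_{\tilde\theta}$ verbatim, and the three conditions reduce to specializing Propositions~18--20 of \cite{pmlr-v119-jin20e} to this particular function. I would then indicate how each condition is obtained directly. Fix a local minimax point $(\hat\theta,\hat\tau)$. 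The left inequality in Definition~\ref{def:localequilibrium}, namely $U_{\tilde\theta}(\hat\theta,\tau)\leq U_{\tilde\theta}(\hat\theta,\hat\tau)$ for all $\tau$ in a $\delta$-ball, states exactly that $\hat\tau$ is a local maximizer of the smooth map $\tau\mapsto U_{\tilde\theta}(\hat\theta,\tau)$; first-order optimality then yields $\nabla_\tau U_{\tilde\theta}(\hat\theta,\hat\tau)=0$, and the second-order test yields $\nabla_{\tau\tau}^2 U_{\tilde\theta}(\hat\theta,\hat\tau)\preceq\mathbf{0}$.

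To obtain $\nabla_\theta U_{\tilde\theta}(\hat\theta,\hat\tau)=0$, I would introduce the envelope function $\phi_\delta(\theta):=\max_{\|\tau'-\hat\tau\|\leq h(\delta)}U_{\tilde\theta}(\theta,\tau')$; the right inequality says that $\hat\theta$ is a local minimizer of $\phi_\delta$. A directional Taylor expansion along $\theta=\hat\theta+tv$, combined with $\nabla_\tau U_{\tilde\theta}(\hat\theta,\hat\tau)=0$, shows that the one-sided directional derivatives of $\phi_\delta$ at $\hat\theta$ agree with $\langle\nabla_\theta U_{\tilde\theta}(\hat\theta,\hat\tau),\pm v\rangle$ up to $o(t)$. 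Since the minimizer property forces both of these to be nonnegative, I conclude $\nabla_\theta U_{\tilde\theta}(\hat\theta,\hat\tau)=0$, which completes the first-order necessary condition.

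For the second-order conditions I would pass to the regime $\nabla_{\tau\tau}^2 U_{\tilde\theta}(\hat\theta,\hat\tau)\prec\mathbf{0}$, where the implicit function theorem applied to $\nabla_\tau U_{\tilde\theta}(\theta,\tau)=0$ produces a smooth local maximizer $\tau^*(\theta)$ with $\tau^*(\hat\theta)=\hat\tau$. The reduced function $\varphi(\theta):=U_{\tilde\theta}(\theta,\tau^*(\theta))$ has, by the envelope identity, gradient $\nabla_\theta U_{\tilde\theta}(\theta,\tau^*(\theta))$ and Hessian at $\hat\theta$ equal to the Schur complement $\nabla_{\theta\theta}^2 U_{\tilde\theta}-\nabla_{\theta\tau}^2 U_{\tilde\theta}(\nabla_{\tau\tau}^2 U_{\tilde\theta})^{-1}\nabla_{\tau\theta}^2 U_{\tilde\theta}$. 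The right inequality makes $\hat\theta$ a local minimizer of $\varphi$, so this Schur complement is $\succeq\mathbf{0}$, giving the necessary condition; conversely, if $(\hat\theta,\hat\tau)$ is stationary with $\nabla_{\tau\tau}^2 U_{\tilde\theta}\prec\mathbf{0}$ and the Schur complement $\succ\mathbf{0}$, then $\hat\tau$ is a strict local maximizer in $\tau$ and $\varphi$ has a strict local minimizer at $\hat\theta$, so chaining the two strict inequalities verifies Definition~\ref{def:localequilibrium} strictly and $(\hat\theta,\hat\tau)$ is a strict local minimax.

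The hard part will be the $\theta$-direction step and the degenerate case of the necessary condition. Because the inner maximization in Definition~\ref{def:localequilibrium} is over a ball of radius $h(\delta)$ that shrinks with $\delta$, one must carefully control the interplay between $\delta$, $h(\delta)$, and the step size $t$ along $v$ to keep the inner maximizer interior and ensure the envelope derivative is governed by the unconstrained gradient. Moreover, when $\nabla_{\tau\tau}^2 U_{\tilde\theta}$ is only negative semidefinite the implicit function theorem is unavailable, so the clean reduced-function argument breaks down; this is precisely the situation handled by the more delicate perturbation analysis of \cite{pmlr-v119-jin20e}. Since $U_{\tilde\theta}$ satisfies their twice-differentiability hypothesis, I would simply invoke Propositions~18--20 of \cite{pmlr-v119-jin20e} for the degenerate regime rather than reprove it, thereby closing the argument.
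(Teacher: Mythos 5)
Your sketch is correct: the left inequality in Definition~\ref{def:localequilibrium} yields the $\tau$-side first- and second-order conditions, the envelope/implicit-function reduction with the Schur complement as the Hessian of $\varphi(\theta)=U_{\tilde\theta}(\theta,\tau^*(\theta))$ yields the $\theta$-side conditions, and you rightly flag the $\delta$ versus $h(\delta)$ interplay and the degenerate $\nabla^2_{\tau\tau}U_{\tilde\theta}\preceq\mathbf{0}$ regime as the delicate points. The paper supplies no proof of this lemma at all --- it is imported verbatim as Propositions 18--20 of \cite{pmlr-v119-jin20e} under Assumption~\ref{asm:twicediff} --- so your argument takes essentially the same route (ultimately an appeal to that source), merely making the intermediate steps explicit.
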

Now, in order to define the federated approximate equilibrium solutions, we first define an approximate local minimax point.
\begin{definition}[Approximate Local minimax point][An adaptation of definition 34 of \cite{pmlr-v119-jin20e}]
	\label{def:approxepsilonlocalequilibrium}
	Let $U(\theta, \tau)$ be a function defined over $\Theta \times \mathcal{T} $ and let $h$ be a function satisfying $h(\delta)\rightarrow0$ as $\delta\rightarrow0$. There exists a $\delta_0$, such that for any $\delta\in(0,\delta_0],$ and any $(\theta,\tau)$ such that $\|\theta-\hat\theta\|\leq \delta$ and $\|\tau-\hat\tau\|\leq\delta$, then a point $(\hat\theta,\hat\tau)$ is an $\varepsilon$-approximate local minimax point of $U$, if it satisfies:
		\begin{equation}
			\label{eqn:approxlocalequilibrium}
			U_{\tilde\theta}(\hat\theta,\tau)-\varepsilon\leq U_{\tilde\theta}(\hat\theta,\hat\tau)\leq {\max_{{\tau\prime : \|\tau\prime-\hat\tau\|\leq h(\delta)}}U_{\tilde\theta}(\theta,\tau\prime)}+\varepsilon,
		\end{equation}
\end{definition}
We aim to achieve approximate local minimax points for every client as a solution of the federated minimax optimization. With that, we characterize the federated solution as the following. 
\begin{definition}[$\mathcal{E}$-Approximate Federated Equilibrium Solutions]\label{def:epsilonequilibrium}    	  
Let $\mathcal{E}=\{\varepsilon^i\}^N_{i=1}$ be the approximation error vector for clients $[N] $. Let $ U_{\tilde\theta}^i(\theta, \tau) $ be a function defined over $ \Theta \times \mathcal{T} $ for a client $i\in[N]$. An $\mathcal{E}$-approximate federated equilibrium point $ (\hat\theta, \hat\tau) $ that is an $\varepsilon^i$-approximate local minimax point for every clients' objective $U_{\tilde{\theta}}^i$, where the federated objective is $U_{\tilde{\theta}}(\theta, \tau) := \frac{1}{N} \sum_{i=1}^N U_{\tilde{\theta}}^i(\theta, \tau)$, must follow the conditions below:
\begin{enumerate}[parsep=0pt,topsep=0pt,leftmargin=*,itemsep=0pt]
\label{def:eps_equilibrium}
    \item \textbf{$\varepsilon^i$- First-order Necessary Condition:}  
    The point $ (\hat\theta, \hat\tau) $ must be an $ \varepsilon^i $ stationary point for every client $i\in[N]$, i.e.,  
    \begin{equation*}
	    \|\nabla_{\theta} U_{\tilde\theta}^i(\hat\theta, \hat\tau)\| \leq \varepsilon^i, 
        \quad \text{ and } \quad 
        \|\nabla_{\tau} U_{\tilde\theta}^i(\hat\theta, \hat\tau)\| \leq \varepsilon^i.
    \end{equation*}  
    
    \item \textbf{Second-Order $\varepsilon^i$ Necessary Condition:}  
    The point $ (\hat\theta, \hat\tau) $ must satisfy the second-order conditions:  
    \begin{align*}
     \nabla^2_{\tau\tau} U^i_{\tilde\theta}(\hat\theta, \hat\tau) \preceq -\varepsilon^i I, 
        \quad\text{and}\quad \left[\nabla_{\theta\theta}^2U_{\tilde\theta}^i-\nabla_{\theta\tau}^2U_{\tilde\theta}^i\left(\nabla_{\tau\tau}^2U_{\tilde\theta}\right)^{-1}\nabla_{\tau\theta}^2U_{\tilde\theta}^i\right](\hat\theta,\hat\tau) \succeq \varepsilon^i I.
    \end{align*}  
    \item \textbf{Second-Order $\varepsilon^i$ Sufficient Condition:} An $\varepsilon^i$ stationary point $(\theta,\tau)$ that satisfies $\nabla_{\tau\tau}^2U_{\tilde\theta}^i(\hat\theta,\hat\tau)\prec-\varepsilon^i I$, and
         \begin{equation*}
		\left[\nabla_{\theta\theta}^2U_{\tilde\theta}-\nabla_{\theta\tau}^2U_{\tilde\theta}\left(\nabla_{\tau\tau}^2U_{\tilde\theta}\right)^{-1}\nabla_{\tau\theta}^2U_{\tilde\theta}\right](\hat\theta,\hat\tau)\succ \varepsilon^i I~~
        \end{equation*}
        guarantees that $(\hat\theta,\hat\tau)$ is a strict local minimax point $\forall i\in[N]$ that satisfies $\varepsilon^i$ approximate equilibrium as in definition~\ref{def:approxepsilonlocalequilibrium}.
\end{enumerate}
\end{definition}
We now state the main theoretical result of our work in the following theorem.    
    \begin{theorem}
    \label{sec::thm:globallocalequilibrium}
        Under assumptions~\ref{asm:twicediff},~\ref{asm:lipschitz},~\ref{asm:gradientheterogeneity} and~\ref{asm:hessianheterogeneity}, a minimax solution $(\hat\theta,\hat\tau)$ of federated optimization problem~(\ref{eqn:fedoptprob}) that satisfies the equilibrium condition as in definition~\ref{def:localequilibrium}: 
          \begin{equation*}
	     U_{\tilde\theta}(\hat\theta,\tau)\leq U_{\tilde\theta}(\hat\theta,\hat\tau)\leq \max_{{\tau\prime : \|\tau\prime-\hat\tau\|\leq h(\delta)}}U_{\tilde\theta}(\theta,\tau\prime),
		\end{equation*}
     is an $\mathcal{E}$-approximate federated equilibrium solution as defined in \ref{def:epsilonequilibrium}, where the approximation error $\varepsilon^i$ for each client $i\in[N]$ lies in:
\begin{equation*}
\max\{ \zeta_{\theta}^i, \zeta_{\tau}^i\}\le \varepsilon^i\le \min\{\alpha-\rho_{\tau}^i,\beta- B^i \}    
\end{equation*} 
 for $\rho_{\tau}^i<\alpha$ and $B^i>\beta$, such that $\alpha:=\left\vert\lambda_{\max}\left(\nabla_{\tau\tau}^2U_{\tilde\theta}(\hat\theta,\hat\tau)\right)\right\vert$, $\beta:=\lambda_{\min}\left(\left[\nabla_{\theta\theta}^2U_{\tilde\theta}-\nabla_{\theta\tau}^2U_{\tilde\theta}\left(\nabla_{\tau\tau}^2U_{\tilde\theta}\right)^{-1}\nabla_{\tau\theta}^2U_{\tilde\theta}\right](\hat\theta,\hat\tau)\right)$ and $B^i:=\rho_{\theta}^i+ L\rho_{\theta\tau}^i\frac{1}{\vert\lambda_{\max}(\nabla_{\tau\tau}^2 U_{\tilde\theta}^i)\vert}+L\rho_{\tau\theta}^i\frac{1}{\vert\lambda_{\max}(\nabla_{\tau\tau}^2 U_{\tilde\theta}^i)\vert}+ L^2\rho_{\tau}^i\frac{1}{\vert\lambda_{\max}(\nabla_{\tau\tau}^2 U_{\tilde\theta}^i)\cdot\lambda_{\max}(\nabla_{\tau\tau}^2 U_{\tilde\theta})\vert}$.
    \end{theorem}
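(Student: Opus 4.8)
The plan is to exploit the fact that the hypothesis already hands us a \emph{global} local minimax point of $U_{\tilde\theta}$ and then to transfer each of its optimality conditions to the individual client objectives $U_{\tilde\theta}^i$, paying a heterogeneity penalty controlled by the $\zeta$'s and $\rho$'s. Since $(\hat\theta,\hat\tau)$ satisfies the equilibrium condition of Definition~\ref{def:localequilibrium}, the necessary-condition lemma (Propositions 18--20 of \cite{pmlr-v119-jin20e}) applies to the federated objective, yielding the global stationarity $\nabla_\theta U_{\tilde\theta}(\hat\theta,\hat\tau)=0$ and $\nabla_\tau U_{\tilde\theta}(\hat\theta,\hat\tau)=0$, the curvature condition $\nabla_{\tau\tau}^2 U_{\tilde\theta}(\hat\theta,\hat\tau)\preceq 0$ (which we take to be strict, equivalently $\alpha>0$, so that the Schur complement is well defined), and the Schur-complement positivity whose smallest eigenvalue is $\beta$. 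The three bullets of Definition~\ref{def:epsilonequilibrium} are exactly the client analogues of these facts, so the proof reduces to showing that each analogue holds up to an additive slack and that the admissible slack is pinched into the claimed interval.

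\textbf{First-order and the $\tau\tau$ block.} For the first-order condition I would subtract the vanishing global gradient and invoke Assumption~\ref{asm:gradientheterogeneity}: $\|\nabla_\theta U_{\tilde\theta}^i(\hat\theta,\hat\tau)\|=\|\nabla_\theta U_{\tilde\theta}^i-\nabla_\theta U_{\tilde\theta}\|\le\zeta_\theta^i$ and likewise $\le\zeta_\tau^i$ in the $\tau$-direction, which forces $\varepsilon^i\ge\max\{\zeta_\theta^i,\zeta_\tau^i\}$, the lower endpoint. For the $\tau\tau$ block, writing $\nabla_{\tau\tau}^2 U_{\tilde\theta}(\hat\theta,\hat\tau)\preceq-\alpha I$ with $\alpha=|\lambda_{\max}(\nabla_{\tau\tau}^2 U_{\tilde\theta})|$ and combining Weyl's inequality with the bound $\|\nabla_{\tau\tau}^2 U_{\tilde\theta}^i-\nabla_{\tau\tau}^2 U_{\tilde\theta}\|_\sigma\le\rho_\tau^i$ of Assumption~\ref{asm:hessianheterogeneity} gives $\nabla_{\tau\tau}^2 U_{\tilde\theta}^i(\hat\theta,\hat\tau)\preceq-(\alpha-\rho_\tau^i)I$. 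This is the requested $\preceq-\varepsilon^i I$ whenever $\varepsilon^i\le\alpha-\rho_\tau^i$ (half of the upper endpoint), and the hypothesis $\rho_\tau^i<\alpha$ keeps this nonnegative and the client block invertible.

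\textbf{The Schur complement (the crux).} The hard part is the mixed term, since both the outer cross-Hessians and the inner inverse Hessian differ between client and server. I would bound the spectral distance between the client Schur complement and the global one $S:=\nabla_{\theta\theta}^2 U_{\tilde\theta}-\nabla_{\theta\tau}^2 U_{\tilde\theta}(\nabla_{\tau\tau}^2 U_{\tilde\theta})^{-1}\nabla_{\tau\theta}^2 U_{\tilde\theta}$ via a three-term telescoping split (all evaluated at $(\hat\theta,\hat\tau)$, with $\tilde\theta$ suppressed and $M^i=(\nabla_{\tau\tau}^2 U^i)^{-1}$, $M=(\nabla_{\tau\tau}^2 U)^{-1}$),
\begin{align*}
\nabla_{\theta\tau}^2 U^i M^i \nabla_{\tau\theta}^2 U^i-\nabla_{\theta\tau}^2 U\,M\,\nabla_{\tau\theta}^2 U
&=(\nabla_{\theta\tau}^2 U^i-\nabla_{\theta\tau}^2 U)M^i\nabla_{\tau\theta}^2 U^i\\
&\quad+\nabla_{\theta\tau}^2 U\,M^i(\nabla_{\tau\theta}^2 U^i-\nabla_{\tau\theta}^2 U)\\
&\quad+\nabla_{\theta\tau}^2 U\,(M^i-M)\,\nabla_{\tau\theta}^2 U.
\end{align*}
Taking spectral norms and using (i) Assumption~\ref{asm:hessianheterogeneity} on each cross difference, (ii) the smoothness Assumption~\ref{asm:lipschitz} to bound every cross-Hessian block by $\|\nabla_{\theta\tau}^2 U\|_\sigma\le L$, (iii) $\|M^i\|_\sigma=1/|\lambda_{\max}(\nabla_{\tau\tau}^2 U_{\tilde\theta}^i)|$ and $\|M\|_\sigma=1/|\lambda_{\max}(\nabla_{\tau\tau}^2 U_{\tilde\theta})|$ (valid since both blocks are negative definite), and (iv) the resolvent identity $M^i-M=-M^i(\nabla_{\tau\tau}^2 U^i-\nabla_{\tau\tau}^2 U)M$ to get $\|M^i-M\|_\sigma\le\rho_\tau^i/(|\lambda_{\max}(\nabla_{\tau\tau}^2 U_{\tilde\theta}^i)|\,|\lambda_{\max}(\nabla_{\tau\tau}^2 U_{\tilde\theta})|)$, and finally adding $\|\nabla_{\theta\theta}^2 U^i-\nabla_{\theta\theta}^2 U\|_\sigma\le\rho_\theta^i$, one lands on exactly $\|S^i-S\|_\sigma\le B^i$. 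Since $S\succeq\beta I$, this gives $S^i\succeq(\beta-B^i)I$, so the client Schur-complement condition holds whenever $\varepsilon^i\le\beta-B^i$ (the other half of the upper endpoint), which is precisely why one needs $B^i<\beta$ for a nonempty range.

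\textbf{Conclusion.} Collecting the three transfers, any $\varepsilon^i$ with $\max\{\zeta_\theta^i,\zeta_\tau^i\}\le\varepsilon^i\le\min\{\alpha-\rho_\tau^i,\beta-B^i\}$ makes $(\hat\theta,\hat\tau)$ simultaneously satisfy the first-order, the second-order necessary, and—under the strict global curvature inherited from Definition~\ref{def:localequilibrium}—the second-order sufficient conditions of Definition~\ref{def:epsilonequilibrium} for every $i\in[N]$, so it is an $\mathcal{E}$-approximate federated equilibrium. I expect the genuinely delicate step to be the Schur-complement bound: one must choose the telescoping order so that the surviving norms fall on exactly the client-versus-server $\lambda_{\max}$ factors appearing in $B^i$, and one must justify the resolvent estimate, which relies on the negative definiteness of both $\nabla_{\tau\tau}^2 U_{\tilde\theta}$ and $\nabla_{\tau\tau}^2 U_{\tilde\theta}^i$—the latter itself being an output of the $\tau\tau$ transfer above.
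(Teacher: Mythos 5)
Your proposal is correct and follows essentially the same route as the paper's proof: triangle inequality plus Assumption~\ref{asm:gradientheterogeneity} for the first-order transfer, Weyl's inequality plus Assumption~\ref{asm:hessianheterogeneity} for the $\tau\tau$ block, and the identical three-term telescoping of the Schur complements (with the resolvent identity for the inverse-Hessian difference) yielding $\|S^i-S\|_\sigma\le B^i$. Your first-order argument is in fact slightly cleaner than the paper's expansion of the squared norm, and you correctly read the condition on $B^i$ as $B^i<\beta$ (the statement's ``$B^i>\beta$'' is a typo), but there is no substantive difference in method.
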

    The proof of theorem \ref{sec::thm:globallocalequilibrium} is given in Appendix~\ref{ap::thm:globallocalequilibrium}. Note that when data is homogeneous (i.e., for each client $i$, $\zeta_{\theta}^i$, $\zeta_{\tau}^i$, $\rho_{\tau}^i$ and $B^i$ are all zeroes), each client satisfies an exact local minimax equilibrium.
    \begin{remark}
       In Theorem~\ref{sec::thm:globallocalequilibrium}, note that if the interval $[\max\{ \zeta_{\theta}^i, \zeta_{\tau}^i\}, \min\{\alpha-\rho_{\tau}^i,\beta- B^i \}]$ is empty, i.e. $\max\{ \zeta_{\theta}^i, \zeta_{\tau}^i\}>\min\{\alpha-\rho_{\tau}^i,\beta- B^i \}$,  then no such $\varepsilon^i$ exists and $(\hat\theta,\hat\tau)$ fails to be a local $\varepsilon^i$ approximate equilibrium point for that clients. It may happen in two cases:
       \begin{enumerate}[parsep=0pt,topsep=0pt,leftmargin=*,itemsep=0pt]
           \item The gradient dissimilarity $\zeta_{\theta}^i, \zeta_{\tau}^i$ is too large indicating high heterogeneity, then $(\hat\theta,\hat\tau)$- the solution to the federated objective would fail to become an approximate equilibrium point for the clients. It is a practical consideration for a federated convergence facing difficulty against high heterogeneity. 
           \item If $\alpha\approx\rho_{\tau}^i$ or $\beta\approx B^i$, indicating that the client’s local curvature structure significantly differs from the global curvature. In this case, the clients' objectives may be flatter or even oppositely curved compared to the global model, that is, the objectives are highly heterogeneous.
       \end{enumerate} 
    \end{remark}
    
    Now we state the result on the consistency of the estimator of the clients' moment conditions.

\begin{theorem}[Consistency][Adaptation of Theorem 2 of \cite{bennett2019deep}]
\label{thm:consistency}
    Let $\tilde\theta_{n}$ be a data-dependent choice for the federated objective that has a limit in probability. For each client $i\in[N]$, define $        m^i(\theta,\tau,\tilde{\theta}):=f^i(Z^i;\tau)(Y^i-g(X^i;\theta))-\frac{1}{4}f^i(Z^i;\tau)^2(Y^i-g(X^i;\tilde{\theta}))^2$, $ M^i(\theta)=\sup_{\tau \in \gT}\mathbb{E}[m^i(\theta,\tau,\tilde{\theta})]$ and  $\eta^i(\epsilon):=inf_{d(\theta,\theta_0)\ge\epsilon}M^i(\theta)-M^i(\theta_0)$ for every $\epsilon>0$. Let $(\hat\theta_{n},\hat\tau_{n})$ be a solution that satisfies the approximate equilibrium for each of the client $i\in[N]$ as
\begin{eqnarray*}
      \sup_{\tau\in\gT}  U_{\tilde\theta}^i(\hat\theta_{n},\tau)-\varepsilon^i - o_p(1)\leq  ~U^i_{\tilde\theta}(\hat\theta_{n},\hat\tau_{n})\leq ~\inf_{\theta\in\Theta} {\max_{{\tau\prime : \|\tau\prime-\hat\tau_{n}\|\leq h(\delta)}} U^i_{\tilde\theta}(\theta,\tau\prime)}+\varepsilon^i+o_p(1),
\end{eqnarray*}
     for some $\delta_0$, such that for any $\delta\in(0,\delta_0],$ and any $\theta,\tau$ such that $\|\theta-\hat\theta\|\leq \delta$ and $\|\tau-\hat\tau\|\leq\delta$  and a function $h(\delta)\rightarrow0$ as $\delta\rightarrow0$. Then, under similar assumptions as in Assumptions 1 to 5 of \cite{bennett2019deep}, the global solution $\hat\theta_{n}$ is a consistent estimator to the true parameter $\theta_0$, i.e. $\hat\theta_{n}\xrightarrow{p}\theta_0$ when the approximate error $\varepsilon^i<\frac{\eta^i(\epsilon)}{2}$ for every $\epsilon>0$ for each client $i\in[N]$.
        \end{theorem}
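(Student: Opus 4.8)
The plan is to treat $\hat\theta_{n}$ as an approximate extremum estimator and run the classical consistency argument for $M$-estimators separately for each client $i\in[N]$; since the true response parameter $\theta_0$ in~(\ref{eqn:fedg}) is shared, consistency obtained from any single client already yields $\hat\theta_{n}\xrightarrow{p}\theta_0$. Write the empirical profiled objective $\widehat{M}^i_{n}(\theta):=\sup_{\tau\in\gT}U^i_{\tilde\theta}(\theta,\tau)$ and observe that, because $U^i_{\tilde\theta}(\theta,\tau)$ is exactly the client-$i$ empirical mean of the per-sample quantity $m^i(\theta,\tau,\tilde\theta)$, $\widehat{M}^i_{n}$ is the empirical counterpart of $M^i(\theta)=\sup_{\tau\in\gT}\E[m^i(\theta,\tau,\tilde\theta)]$. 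The whole argument rests on two ingredients: a uniform law of large numbers for $\widehat{M}^i_{n}$, and the separation of $\theta_0$ quantified by $\eta^i(\epsilon)$.

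First I would reduce the two-sided approximate-equilibrium hypothesis to a one-sided near-optimality statement for $\hat\theta_{n}$. From the lower bound, $U^i_{\tilde\theta}(\hat\theta_{n},\hat\tau_{n})\ge \widehat{M}^i_{n}(\hat\theta_{n})-\varepsilon^i-o_p(1)$. For the upper bound, letting $\delta\to0$ so that $h(\delta)\to0$ and using continuity (Assumption~\ref{asm:twicediff}) to collapse the local maximum $\max_{\tau':\|\tau'-\hat\tau_{n}\|\le h(\delta)}U^i_{\tilde\theta}(\theta,\tau')$ to $U^i_{\tilde\theta}(\theta,\hat\tau_{n})$, together with the trivial chain $\inf_{\theta}U^i_{\tilde\theta}(\theta,\hat\tau_{n})\le U^i_{\tilde\theta}(\theta_0,\hat\tau_{n})\le \widehat{M}^i_{n}(\theta_0)$, yields
\[
\widehat{M}^i_{n}(\hat\theta_{n})\le \widehat{M}^i_{n}(\theta_0)+2\varepsilon^i+o_p(1),
\]
so $\hat\theta_{n}$ is a $2\varepsilon^i$-approximate minimizer of $\widehat{M}^i_{n}$.

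Next I would establish the uniform convergence $\sup_{\theta\in\Theta}\big|\widehat{M}^i_{n}(\theta)-M^i(\theta)\big|\xrightarrow{p}0$. Under the inherited regularity conditions (Assumptions 1--5 of~\cite{bennett2019deep}: compactness of $\Theta$ and $\gT$, continuity of the moment map, an integrable envelope for the neural-network classes, and $\tilde\theta_{n}\xrightarrow{p}\tilde\theta^{*}$), the per-sample family $\{m^i(\theta,\tau,\cdot)\}$ obeys a uniform LLN jointly in $(\theta,\tau)$, so $U^i_{\tilde\theta}$ converges uniformly to $\E[m^i]$; because the map $\phi\mapsto\sup_{\tau}\phi(\cdot,\tau)$ is $1$-Lipschitz in the sup-norm, this transfers to the profiles, giving $\widehat{M}^i_{n}\to M^i$ uniformly. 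I would also record that the moment condition~(\ref{eq:exog1}) makes the first term of $m^i$ vanish at $\theta_0$ and the second term nonpositive, pinning down $M^i(\theta_0)$ as the value attained by the null test function.

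Finally I would combine the pieces: uniform convergence turns the $2\varepsilon^i$-approximate minimization into $M^i(\hat\theta_{n})-M^i(\theta_0)\le 2\varepsilon^i+o_p(1)$. Fix $\epsilon>0$; on the event $\{d(\hat\theta_{n},\theta_0)\ge\epsilon\}$ the definition of $\eta^i$ forces $M^i(\hat\theta_{n})-M^i(\theta_0)\ge \eta^i(\epsilon)$, so that event entails $\eta^i(\epsilon)\le 2\varepsilon^i+o_p(1)$. Since the hypothesis $\varepsilon^i<\eta^i(\epsilon)/2$ leaves a strictly positive margin $\eta^i(\epsilon)-2\varepsilon^i>0$, the $o_p(1)$ slack eventually cannot bridge it, whence $P(d(\hat\theta_{n},\theta_0)\ge\epsilon)\to0$, i.e. $\hat\theta_{n}\xrightarrow{p}\theta_0$. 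The main obstacle I anticipate is the uniform-LLN step: controlling the neural-network classes $\gF$ and $\{g(\cdot,\theta)\}$ (via covering/bracketing or Glivenko--Cantelli conditions) and, in particular, handling the \emph{data-dependent} plug-in weight $\tilde\theta_{n}$ uniformly over $(\theta,\tau)$ while justifying the interchange of $\sup_{\tau}$ with the limit; by contrast the federated layer is benign, since both the hypotheses and the conclusion are per client and the shared $\theta_0$ lets the single-client argument suffice.
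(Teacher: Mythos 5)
Your proposal is correct and follows essentially the same route as the paper's proof: the same reduction of the two-sided approximate-equilibrium hypothesis to $\widehat{M}^i_{n}(\hat\theta_{n})\le\widehat{M}^i_{n}(\theta_0)+2\varepsilon^i+o_p(1)$ (the paper bounds the local max by the global $\sup_\tau$ rather than collapsing it via $\delta\to0$, but both yield the same inequality), the same uniform convergence $\sup_\theta|\widehat{M}^i_{n}(\theta)-M^i(\theta)|\xrightarrow{p}0$, and the same final separation argument via $\eta^i(\epsilon)$ and the positive margin $\eta^i(\epsilon)-2\varepsilon^i>0$. The two steps you defer --- the uniform law of large numbers in the presence of the data-dependent plug-in $\tilde\theta_{n}$, and the strict positivity of $\eta^i(\epsilon)$ --- are exactly what the paper's Claims 1 and 2 supply (via symmetrization, bounded Rademacher complexity and McDiarmid for the former, and the identification plus star-shaped assumptions for the latter), so your outline matches the paper's proof in both structure and substance.
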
 
        
The assumptions and the proof of Theorem~\ref{thm:consistency} are included in Appendix~\ref{ap:consistency}.
        
\begin{remark}
Theorem~\ref{thm:consistency} formalizes a tradeoff between data heterogeneity and the consistency of the global estimator in federated learning. If the approximation error $\varepsilon^i$ is large for a client $i\in[N]$, then the solution $\hat\theta_n$ may fail to consistently estimate the true parameter of client $i$. In contrast, when data across clients have similar distribution (i.e., case for low heterogeneity), the federated optimal model $\hat\theta_n$ is consistent across clients.

\end{remark}
 Now, we discuss that the limit points of \fgda will retrieve the local minimax points of the federated optimization problem. 

\subsection{Limit Points of \fgda}
Let $\alpha_1= \frac{\eta}{\gamma}, \alpha_2={\eta}$ be the learning rates for gradient updates to $\theta$ and $\tau$, respectively. For details, refer to Algorithm~\ref{alg:fedgda} in Appendix \ref{sec:algo}. Without loss of generality the \fgda updates are:
\begin{align}
\theta_{t+1}=\theta_{t}-\eta\frac{1}{\gamma}\frac{1}{N}\sum_{i\in[N]}\sum_{r=1}^{R}\nabla_{\theta}U_{\tilde\theta}^i(\theta_{t,r}^i, \tau_{t,r}^i)\text{  and }
\tau_{t+1}=\tau_{t}+\eta\frac{1}{N}\sum_{i\in[N]}\sum_{r=1}^{R}\nabla_{\tau}U_{\tilde\theta}^i(\theta_{t,r}^i, \tau_{t,r}^i)
\end{align}
We call it $\gamma$-\fgda, where $\gamma$ is the ratio of $\alpha_1$ to $\alpha_2$. As $\eta\rightarrow 0$ corresponds to \fgda-flow,
under the smoothness of $U_{\tilde\theta}^i$, Assumption~\ref{asm:gradientheterogeneity} and for some fixed $R$, \fgda-flow becomes:
\begin{align}
    \frac{d\theta}{dt} = - \frac{1}{\gamma} {R}\nabla_{\theta} U_{\tilde\theta}(\theta, \tau)+ \gO\left(\frac{R}{\gamma}  \zeta_{\theta}\right), \text{  and }
    \frac{d\tau}{dt} =  R \nabla_{\tau} U_{\tilde\theta}(\theta, \tau)+ \gO(R\zeta_{\tau}).
\end{align}
We further elaborate on \fgda-flow in Appendix~\ref{ap:fgdaflow}.
\begin{proposition}\label{prop:stricteqm}
    Given the Jacobian matrix for $\gamma-$\fgda flow as
    \begin{equation*}
        \mJ=\begin{pmatrix}
            {-\frac{1}{\gamma}R\nabla_{\theta\theta}^2U_{\tilde\theta}(\theta, \tau)}&{-\frac{1}{\gamma}R\nabla_{\theta\tau}^2U_{\tilde\theta}(\theta, \tau)}\\
            {R \nabla_{\tau\theta}^2 U_{\tilde\theta}(\theta, \tau)}& {R \nabla_{\tau\tau}^2 U_{\tilde\theta}(\theta, \tau)}   
        \end{pmatrix},
    \end{equation*}
     a point $(\theta,\tau)$ is a strictly linearly stable equilibrium of the $\gamma-$\fgda flow if and only if the real parts of all eigenvalues of $\mathbf{J}$ are negative, i.e., 
    $\operatorname{Re}(\emLambda_j) < 0 \quad \text{for all } j.$
    \end{proposition}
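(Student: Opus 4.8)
The plan is to treat the $\gamma$-\fgda flow as an autonomous dynamical system $\dot{w}=F(w)$ on $\Theta\times\gT$ with $w=(\theta,\tau)$ and leading vector field
\[
F(\theta,\tau)=\left(-\tfrac{1}{\gamma}R\,\nabla_{\theta}U_{\tilde\theta}(\theta,\tau),\; R\,\nabla_{\tau}U_{\tilde\theta}(\theta,\tau)\right),
\]
and then to invoke the classical linear-stability (Lyapunov indirect) criterion. First I would identify the equilibria: $(\theta,\tau)$ is an equilibrium precisely when $F(\theta,\tau)=0$, which—up to the heterogeneity correction $\gO(\zeta)$ appearing in the flow—forces $\nabla_{\theta}U_{\tilde\theta}=0$ and $\nabla_{\tau}U_{\tilde\theta}=0$, i.e. a stationary point of the federated objective. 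Differentiating $F$ block-wise in $(\theta,\tau)$ then reproduces exactly the stated $\mJ$, since $\partial_\theta(-\tfrac{1}{\gamma}R\,\nabla_\theta U_{\tilde\theta})=-\tfrac{1}{\gamma}R\,\nabla^2_{\theta\theta}U_{\tilde\theta}$ and analogously for the other three blocks; the $\gO(\zeta)$ terms, being to leading order independent of the local state, drop out of the linearization.

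Next I would fix the definition of strict linear stability: the equilibrium is strictly linearly stable iff the linearized flow $\dot{\delta}=\mJ\,\delta$, with solution $\delta(t)=e^{\mJ t}\delta(0)$, is asymptotically stable, i.e. $\|e^{\mJ t}\|\to 0$ as $t\to\infty$ for every initial perturbation. The key tool is the spectral (Jordan) decomposition of $\mJ$, which expresses each entry of $e^{\mJ t}$ as a finite combination of terms of the form $t^{k}e^{\emLambda_j t}$, whose growth is governed entirely by $\operatorname{Re}(\emLambda_j)$.

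I would then prove both implications. For the \emph{if} direction, if $\operatorname{Re}(\emLambda_j)<0$ for all $j$, then every mode obeys $|t^{k}e^{\emLambda_j t}|\to 0$ exponentially, so $\|e^{\mJ t}\|\to 0$ and the equilibrium is asymptotically—hence strictly linearly—stable. For the \emph{only if} direction, I would argue by contraposition: if some eigenvalue has $\operatorname{Re}(\emLambda_j)\ge 0$, then choosing the initial perturbation along the corresponding (generalized) eigenvector yields a solution whose norm behaves like $e^{\operatorname{Re}(\emLambda_j)t}$, which does not decay to $0$, contradicting asymptotic stability. Hence strict linear stability is equivalent to $\operatorname{Re}(\emLambda_j)<0$ for all $j$.

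The hard part will be the rigorous passage from stability of the nonlinear flow to the spectrum of its linearization, which is exactly Lyapunov's indirect method together with the Hartman–Grobman theorem; this transfer is clean only at a hyperbolic equilibrium (no eigenvalue on the imaginary axis), and the adjective ``strictly'' in the statement is precisely what excludes the borderline $\operatorname{Re}(\emLambda_j)=0$ case, where linearization is inconclusive. A secondary technical point is to justify, under Assumption~\ref{asm:gradientheterogeneity}, that the $\gO(\zeta_\theta)$ and $\gO(\zeta_\tau)$ heterogeneity terms in the \fgda-flow contribute only a bounded perturbation that does not enter the leading-order Jacobian $\mJ$, so that the spectral criterion $\operatorname{Re}(\emLambda_j)<0$ remains the governing condition.
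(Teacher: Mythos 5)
Your argument is correct, but it is worth noting that the paper itself supplies no proof here: it states that Proposition~\ref{prop:stricteqm} ``essentially defines'' strict linear stability and defers to the strategy of \cite{pmlr-v119-jin20e}, where (for the continuous-time flow) the condition $\operatorname{Re}(\emLambda_j)<0$ for all $j$ is effectively taken as the definition. What you do differently is to start from the more primitive notion --- asymptotic stability of the linearized system $\dot{\delta}=\mJ\delta$ --- and derive the spectral criterion via the Jordan decomposition of $e^{\mJ t}$, with the contrapositive along a (generalized) eigenvector for the ``only if'' direction. That is the standard Lyapunov linearization argument, and it correctly reproduces the block structure of $\mJ$ from the leading vector field. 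Two caveats. First, your claim that the $\gO(\zeta_\theta)$, $\gO(\zeta_\tau)$ heterogeneity terms ``drop out of the linearization'' because they are ``independent of the local state'' is not justified: these are error \emph{bounds} on state-dependent quantities $\frac{1}{N}\sum_i(\nabla U^i-\nabla U)(\theta,\tau)$, whose derivatives need not vanish; they are controlled only if one additionally invokes the Hessian-dissimilarity bounds of Assumption~\ref{asm:hessianheterogeneity} (or simply defines $\mJ$ as the Jacobian of the leading-order flow, as the paper implicitly does). Second, the Hartman--Grobman/indirect-method discussion, while correct, is extraneous: the proposition concerns \emph{linear} stability only, so no transfer to the nonlinear flow is required, and the hyperbolicity caveat you raise is already absorbed into the strictness of the inequality.
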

Proposition \ref{prop:stricteqm} essentially defines a strictly linearly stable equilibrium of the $\gamma-$\fgda flow. The proof follows a strategy similar to \cite{pmlr-v119-jin20e}. 

With that, let $\gamma$-$\gF\gG\gD\gA$ be the set of strictly linearly stable points of the $\gamma$-\fgda flow, $\mathcal{L}\text{oc}\gM\text{inimax}$ be the set of local minimax points of the federated zero-sum game. Define 
\begin{align*}
	\overline{\infty-\mathcal{FGDA}}&:=\limsup_{\gamma\rightarrow\infty}\gamma-\mathcal{FGDA}:=\cap_{\gamma_0>0}\cup_{\gamma>\gamma_0}\gamma-\mathcal{FGDA}, \text{ and }\\
	\underline{\infty-\mathcal{FGDA}}&:=\liminf_{\gamma\rightarrow\infty}\gamma-\mathcal{FGDA}:=\cup_{\gamma_0>0}\cap_{\gamma>\gamma_0}\gamma-\mathcal{FGDA}.
\end{align*}
We now state the theorem that establishes the stable limit points of $\infty$-$\mathcal{FGDA}$ as local minimax points, up to some degenerate cases. This theorem ensures that solutions to a minimax problem obtained using \fgda in the limit $\gamma\rightarrow\infty$ correspond to equilibrium points.

 \begin{theorem}
 \label{thm:inftyfgda}
     Under Assumption~\ref{asm:twicediff}, $\mathcal{L}\text{oc}\gM\text{inimax}\subset\underline{\infty-\mathcal{FGDA}}\subset\overline{\infty-\mathcal{FGDA}}\subset\mathcal{L}\text{oc}\gM\text{inimax}\cup\gA$, where $\gA:=\{(\theta,\tau)|(\theta,\tau)\text{ is stationary and }\nabla_{\tau\tau}^2U_{\tilde\theta}(\theta, \tau)\text{ is degenerate}\}$. Moreover, if the hessian $\nabla_{\tau\tau}^2 U_{\tilde\theta}(\theta, \tau)$ is smooth, then $\gA$ has measure zero in $\Theta\times\gT\subset\mathbb{R}^{d} \times \mathbb{R}^k$.
 \end{theorem}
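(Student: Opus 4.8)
The plan is to follow the strategy of \citet{pmlr-v119-jin20e} for centralized GDA, transported to the federated flow through the Jacobian $\mathbf{J}$ of Proposition~\ref{prop:stricteqm}. The middle inclusion $\underline{\infty-\mathcal{FGDA}}\subset\overline{\infty-\mathcal{FGDA}}$ is immediate, since the $\liminf$ of a family of sets is always contained in its $\limsup$. The two outer inclusions both reduce to one technical fact: as $\gamma\to\infty$ the spectrum of $\mathbf{J}$ separates into a \emph{fast} group converging to the eigenvalues of $R\,\nabla_{\tau\tau}^2U_{\tilde\theta}(\theta,\tau)$ and a \emph{slow} group, each of which is asymptotically $\tfrac{1}{\gamma}$ times an eigenvalue of $-R\,S(\theta,\tau)$, where $S:=\nabla_{\theta\theta}^2U_{\tilde\theta}-\nabla_{\theta\tau}^2U_{\tilde\theta}\left(\nabla_{\tau\tau}^2U_{\tilde\theta}\right)^{-1}\nabla_{\tau\theta}^2U_{\tilde\theta}$ is the Schur complement appearing in the second-order conditions. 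Strict linear stability for all large $\gamma$ is thus governed precisely by the sign patterns $\nabla_{\tau\tau}^2U_{\tilde\theta}\prec 0$ and $S\succ 0$.

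For the first inclusion $\mathcal{L}\text{oc}\mathcal{M}\text{inimax}\subset\underline{\infty-\mathcal{FGDA}}$, I would take a (strict) local minimax point $(\hat\theta,\hat\tau)$, which by the second-order sufficient condition recalled above satisfies $\nabla_{\tau\tau}^2U_{\tilde\theta}(\hat\theta,\hat\tau)\prec 0$ and $S(\hat\theta,\hat\tau)\succ 0$. Feeding these into the fast/slow splitting, the fast eigenvalues have strictly negative real part (they approach the negative eigenvalues of $R\,\nabla_{\tau\tau}^2U_{\tilde\theta}$) and the slow eigenvalues, being $\tfrac{1}{\gamma}$ times the eigenvalues of $-R\,S\prec 0$, are also strictly negative once $\gamma$ is large. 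Hence there is $\gamma_0$ with $(\hat\theta,\hat\tau)\in\gamma\text{-}\mathcal{FGDA}$ for every $\gamma>\gamma_0$ by Proposition~\ref{prop:stricteqm}, i.e. $(\hat\theta,\hat\tau)\in\underline{\infty-\mathcal{FGDA}}$. For the third inclusion $\overline{\infty-\mathcal{FGDA}}\subset\mathcal{L}\text{oc}\mathcal{M}\text{inimax}\cup\gA$, I would take $(\hat\theta,\hat\tau)\in\overline{\infty-\mathcal{FGDA}}$, which is a strictly linearly stable equilibrium along some sequence $\gamma_n\to\infty$. Being an equilibrium forces stationarity, $\nabla_\theta U_{\tilde\theta}=\nabla_\tau U_{\tilde\theta}=0$. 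Passing to the limit in the stability conditions, the fast group yields $\nabla_{\tau\tau}^2U_{\tilde\theta}(\hat\theta,\hat\tau)\preceq 0$; if this block is degenerate the point lies in $\gA$ by definition, and otherwise $\nabla_{\tau\tau}^2U_{\tilde\theta}\prec 0$ and the slow group yields $S(\hat\theta,\hat\tau)\succeq 0$. Together with stationarity these are exactly the first- and second-order necessary conditions for a local minimax point, so $(\hat\theta,\hat\tau)\in\mathcal{L}\text{oc}\mathcal{M}\text{inimax}$.

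Finally, for the measure-zero claim I would observe that $\gA\subseteq\{(\theta,\tau):\det\nabla_{\tau\tau}^2U_{\tilde\theta}(\theta,\tau)=0\}$. When $\nabla_{\tau\tau}^2U_{\tilde\theta}$ is smooth, the map $(\theta,\tau)\mapsto\det\nabla_{\tau\tau}^2U_{\tilde\theta}(\theta,\tau)$ is a non-trivial scalar field on $\Theta\times\gT\subseteq\mathbb{R}^d\times\mathbb{R}^k$, and, by the standard measure-zero property of the degeneracy locus of such a determinant, its zero level set is a null set; hence so is $\gA$. The main obstacle is the singular-perturbation eigenvalue analysis underlying the first paragraph: because $\mathbf{J}$ is non-symmetric, one must justify the fast/slow spectral splitting and the $\tfrac{1}{\gamma}$-scaling rigorously (for instance via a Schur-complement factorization of the characteristic polynomial combined with continuity of eigenvalues) and verify that the strict inequalities persist uniformly for all sufficiently large $\gamma$. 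This is also where one must check that the heterogeneity terms $\gO(\zeta_\theta),\gO(\zeta_\tau)$ in the \fgda-flow enter only as lower-order drift and do not perturb the leading linearization encoded by $\mathbf{J}$.
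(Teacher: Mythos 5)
Your overall strategy --- the fast/slow spectral splitting of the Jacobian $\mathbf{J}$ as $\gamma\to\infty$, with the fast eigenvalues tracking those of $R\,\nabla_{\tau\tau}^2U_{\tilde\theta}$ and the slow ones being asymptotically $\tfrac{1}{\gamma}$ times the eigenvalues of $-R\,S$ for the Schur complement $S$ --- is exactly the paper's route (it justifies the splitting through a polynomial root-perturbation lemma, following \citet{pmlr-v119-jin20e}), and your middle and first inclusions match the paper's argument. There is, however, a genuine gap in your third inclusion. Strict linear stability along a sequence $\gamma_n\to\infty$ only yields $S(\hat\theta,\hat\tau)\succeq 0$ in the limit, and you then place the point in $\mathcal{L}\text{oc}\gM\text{inimax}$ because it satisfies the first- and second-order \emph{necessary} conditions. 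That implication is invalid: the necessary conditions carve out a superset of the local minimax points, so satisfying them does not certify membership. The paper closes exactly this gap by excluding a zero eigenvalue of $S$: if $S\vw=0$ for some unit vector $\vw$, then $(\vw,\,-\mB^{-1}\mC^{\top}\vw)^{\top}$ is an exact null vector of $\mJ_{\epsilon}$ for every $\epsilon$, contradicting strict linear stability (since $\operatorname{Re}(\emLambda_j)<0$ for all $j$ forces $\mJ_{\epsilon}$ to be nonsingular). Hence $S\succ 0$, and the second-order \emph{sufficient} condition then certifies a strict local minimax point. You need this extra step for the inclusion $\overline{\infty-\mathcal{FGDA}}\subset\mathcal{L}\text{oc}\gM\text{inimax}\cup\gA$ to go through.

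Two smaller points. Your measure-zero step asserts that the zero level set of a ``non-trivial'' smooth scalar field is null; this is false for merely smooth functions (a smooth function can vanish on a set of positive measure), so the claim needs an actual argument --- the paper argues via Sard's theorem applied to $G(\theta,\tau)=\det\bigl(\nabla_{\tau\tau}^2U_{\tilde\theta}(\theta,\tau)\bigr)$, and you should supply something of that kind rather than cite a ``standard property.'' On the other hand, you are right to flag that the heterogeneity drifts $\gO(\zeta_{\theta}),\gO(\zeta_{\tau})$ and the deviation of client iterates from the synchronized state must be shown not to contaminate the linearization; the paper addresses the latter in its \fgda-flow derivation (the deviations vanish as $\eta\to 0$ for fixed $R$) and, like you, performs the stability analysis on the Jacobian of the leading term only.
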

Essentially, Theorem \ref{thm:inftyfgda} states that the limit points of \fgda are the local minimax solutions, and thereby the equilibrium solution of the federated zero-sum game at the server, up to some degenerate cases with measure 0. The proof of Theorem \ref{thm:inftyfgda} is included in Appendix~\ref{ap:limitpoint}.

Theorems \ref{sec::thm:globallocalequilibrium}, \ref{thm:consistency}, and \ref{thm:inftyfgda} together complete the theoretical foundation of the pipeline in our work. Obtaining the equilibrium solution of the federated zero-sum game at the server via the \fgda limit points, using Theorem \ref{sec::thm:globallocalequilibrium} we get $\mathcal{E}$-approximate federated equilibrium solutions, wherefrom we obtain clients' approximate local minimax. Finally, applying Theorem \ref{thm:consistency} we retrieve the consistent estimators for GMM at the clients.
\section{Experiments}
\label{sec:experiments}
In the experiments, we extend the experimental evaluations of \cite{bennett2019deep} to a federated setting. We discuss this benchmark choice further in Appendix \ref{ap:relatedexp}. More specifically, we evaluate the ability of \fgmm to fit low and high dimensional data to demonstrate that it converges analogous to the centralized algorithm \dgmm.   
Similar to \cite{bennett2019deep}, we assess two scenarios in regards to $\left((X,Y),Z\right)$: 
\begin{enumerate}[label=(\alph*),parsep=0pt,topsep=0pt,leftmargin=*,itemsep=0pt]
	\item \textbf{The instrumental and treatment variables $Z$ and $X$ are both low-dimensional.} In this case, we use 1-dimensional synthetic datasets corresponding to the following functions: 	
	\begin{enumerate*}[parsep=0pt,topsep=0pt,leftmargin=*,itemsep=0pt]
		\item \textbf{Absolute}: \( g_0(x) = |x| \),
		\item \textbf{Step}: \( g_0(x) = 1_{\{x \geq 0\}} \),		
		\item \textbf{Linear}: \( g_0(x) = x \).
	\end{enumerate*}
	
	To generate the synthetic data, similar to \cite{bennett2019deep,lewis2018adversarialgeneralizedmethodmoments} we apply the following generation process:
	\begin{align}\label{en:lddata}
		Y &= g_0(X) + e + \delta&\text{  and }&X = Z^{(1)}+Z^{(2)} + e + \gamma  \\\label{en:lddata1}
		(Z^{(1)},Z^{(2)}) &\sim \text{Uniform}([-3, 3]^2)&\text{  and }& e \sim \mathcal{N}(0, 1), \quad \gamma, \delta \sim \mathcal{N}(0, 0.1)
	\end{align}
	\item \textbf{$Z$ and $X$ are low-dimensional or high-dimensional or both.} First, $Z$ and $X$ are generated as in (\ref{en:lddata},\ref{en:lddata1}). Then for high-dimensional data, we map $Z$ and $X$ to an image using the mapping:
	\[		
		\text{Image}(x) = \text{Dataset}\left(\text{round}\left(\min\left(\max(1.5x+5, 0), 9\right)\right)\right),		
	\]
	where $(\text{round}(\min(\max(1.5x+5, 0), 9))) $ returns an integer between 0 and 9. Essentially, the function Dataset $(.)$ randomly selects an image following its index. We use datasets FEMNIST (Federated Extended MNIST) and CIFAR10 \cite{caldas2018leaf} for images of size $28\times28$ and $3\times32\times32$, respectively. Thus, we have the following cases:
	\begin{enumerate*}[parsep=0pt,topsep=0pt,leftmargin=*,itemsep=0pt]
		\item $\mathbf{Dataset_z}$: $X = X^{\text{low}}, Z = \text{Image}(Z^{\text{low}})$,
		\item $\mathbf{Dataset_x}$: $Z = Z^{\text{low}}, X = \text{Image}(X^{\text{low}})$, and 
		\item $\mathbf{Dataset_{x,z}}$: $Z = \text{Image}(Z^{\text{low}})$, $X = \text{Image}(X^{\text{low}})$,
	\end{enumerate*}
	where $\mathbf{Dataset}$ takes values $\mathbf{FEMNIST}$ and $\mathbf{CIFAR10}$ and the superscript $low$ indicates the values generated using the process in low-dimensional case.  
\end{enumerate}	
\cite{bennett2019deep} used Optimistic Adam (\oadam), a variant of Adam \cite{kingma2014adam} based stochastic gradient descent ascent algorithm \cite{daskalakis2018training}, which applies mirror descent based gradient updates. It guarantees the last iteration convergence of a GAN \cite{goodfellow2014generative} training problem. It is known that a well-tuned \sgd outperforms Adam in over-parametrized settings \cite{wilson2017marginal}, closely resembling our \fgmm implementation, where the size of neural networks often exceeds the data available on the clients. Considering that, we explored the comparative performance of \gda and \sgda against \oadam for a centralized \dgmm implementation. Note that \gda also aligns with the analytical discussion presented in Section (\ref{sec:res}). We then implemented the federated versions of each of these methods and benchmarked them for solving the federated minimax optimization problem for the \fdgmm algorithm.
\begin{figure*}[htbp!]
	\centering
	\begin{subfigure}[t]{0.30\textwidth}
		\centering
		\includegraphics[width=\textwidth]{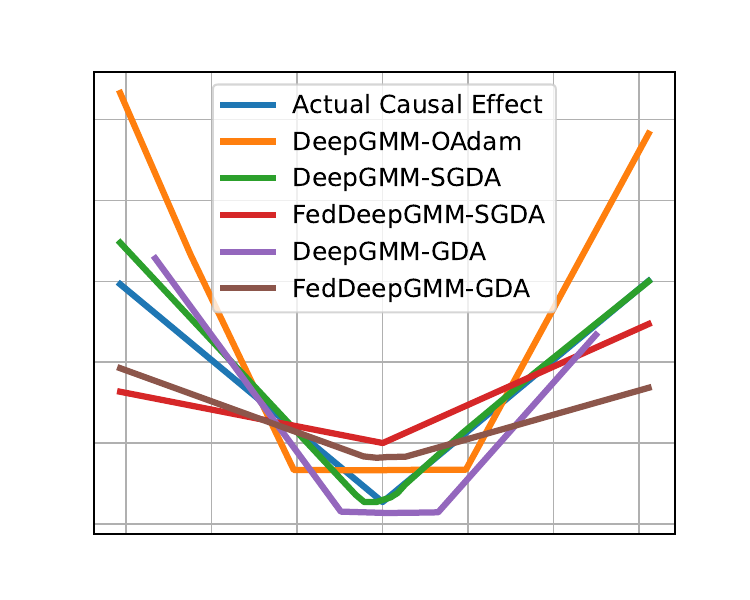 } 
		\caption{\textbf{Absolute}}		
	\end{subfigure}
	\begin{subfigure}[t]{0.30\textwidth}
	\centering
	\includegraphics[width=\textwidth]{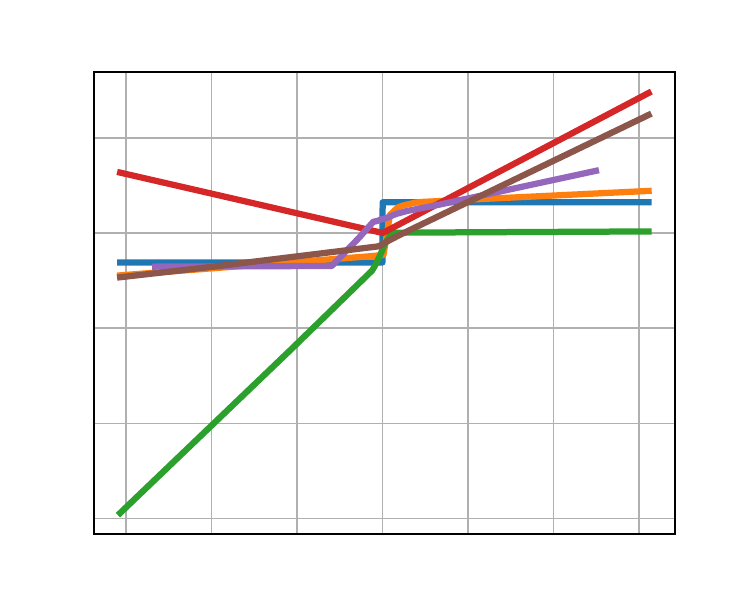}
	\caption{\textbf{Step}}
	\end{subfigure}
	\begin{subfigure}[t]{0.30\textwidth}
		\centering
		\includegraphics[width=\textwidth]{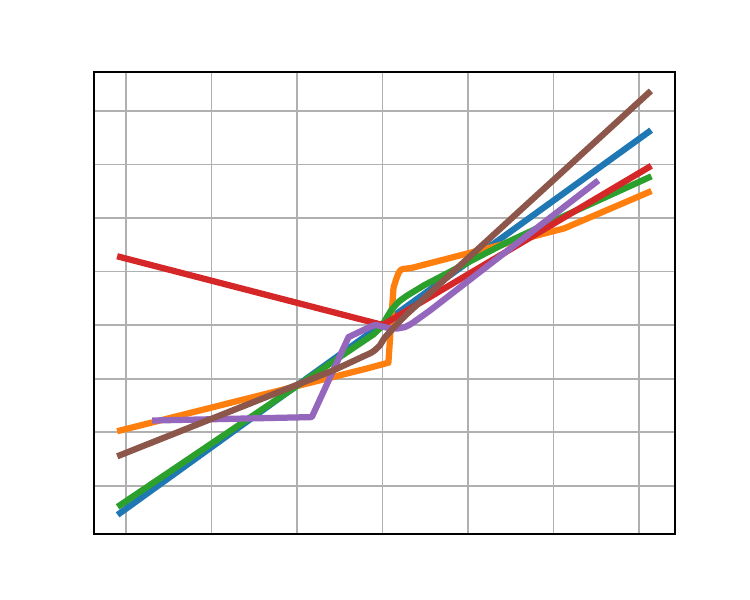 }
		\caption{\textbf{Linear}}
	\end{subfigure}
	\caption{\small Estimated $\hat{g}$ compared to true $g$ in low-dimensional scenarios}\label{exp1}
\end{figure*}
For high-dimensional scenarios, we implement a convolutional neural network (CNN) architecture to process images, while for low-dimensional scenarios, we use a multilayer perceptron (MLP). Code is available at \url{https://github.com/dcll-iiitd/FederatedDeepGMM}.
\begin{table*}[htbp!]
	\small
	\centering
	\begin{tabular}{|p{2cm}|p{1.8cm}|p{1.8cm}|p{1.8cm}|p{2.1cm}|p{1.8cm}|p{2.1cm}|}
		\hline
		\textbf{Estimations} &\textbf{\dgmm-OAdam}&\textbf{\dgmm-GDA} & \textbf{{F}\dgmm-GDA}  & \textbf{\dgmm-SGDA}&\textbf{{F}\dgmm-SGDA}  \\ 
		\hline
		\hline
		\textbf{Absolute}  &0.03 $\pm$ 0.01& 0.013 $\pm$ .01  & 0.4 $\pm$ 0.01 & 0.009 $\pm$ 0.01 & 0.2 $\pm$ 0.00\\
		\hline
		\textbf{Step}  & 0.3 $\pm$ 0.00& 0.03 $\pm$ 0.00  & 0.04 $\pm$ 0.01 & 0.112 $\pm$ 0.00 & 0.23 $\pm$ 0.01\\
		\hline
		\textbf{Linear} & 0.01 $\pm$ 0.00& 0.02 $\pm$ 0.00  & 0.01 $\pm$ 0.00 & 0.03 $\pm$ 0.00& 0.04 $\pm$ 0.00 \\
		\hline
		$\mathbf{FEMNIST_x}$ & 0.50 $\pm$ 0.00 & 1.11 $\pm$ 0.01  & 0.21 $\pm$ 0.02 & 0.40$\pm$ 0.01 & 0.19 $\pm$ 0.01\\
		\hline
		$\mathbf{FEMNIST_{x,z}}$  & 0.24 $\pm$ 0.00& 0.46 $\pm$ 0.09 & 0.19 $\pm$ 0.03 & 0.14$\pm$ 0.02 & 0.20 $\pm$ 0.00\\
		\hline
		$\mathbf{FEMNIST_z}$ & 0.10 $\pm$ 0.00& 0.42 $\pm$ 0.01 & 0.24 $\pm$ 0.01 & 0.11$\pm$ 0.02 & 0.23 $\pm$ 0.01 \\
		\hline
		$\mathbf{CIFAR10_x}$  & 0.55 $\pm$ 0.30  & 0.19 $\pm$ 0.01 & 0.25$\pm$ 0.03 & 0.20 $\pm$ 0.08 &0.22 $\pm$ 0.08\\
		\hline
		$\mathbf{CIFAR10_{x,z}}$   & 0.40 $\pm$ 0.11 & 0.24 $\pm$ 0.00 & 0.24$\pm$ 0.03 & 0.19 $\pm$ 0.03 &0.22 $\pm$ 0.02\\
		\hline
		$\mathbf{CIFAR10_{z}}$ & 0.13 $\pm$ 0.03 & 0.13 $\pm$ 0.01 & 1.70$\pm$ 2.60 & 0.24 $\pm$ 0.01 &0.52 $\pm$ 0.60\\
		\hline
	\end{tabular}
	\caption{The averaged Test MSE with standard deviation on the low- and high-dimensional scenarios.}
	\label{tab:example_table_1}
\end{table*}\\
\textbf{Non-i.i.d. data.} We sample the train, test and validation sets similar to \cite{bennett2019deep}. For the low-dimensional scenario, we sample $n = 20000$ points for each train, validation, and test set, while, for the high-dimensional scenario, we have $n = 20000$ for the train set and $n=10000$ for the validation and test set. To set up a non-i.i.d. distribution of data between clients, samples were divided amongst the clients using a Dirichlet distribution $Dir_S(\alpha)$ \cite{wang2019federated}, where $\alpha$ determines the degree of heterogeneity across $S$ clients. We used $Dir_S(\alpha) = 0.3$ for each train, test, and validation samples.\\
\textbf{Hyperparameters.} We perform extensive grid-search to tune the learning rate. For \fsgda, we use a minibatch-size of 256. To avoid numerical instability, we standardize the observed $Y$ values by removing the mean and scaling to unit variance. We perform five runs of each experiment and present the mean and standard deviation of the results.\\
\textbf{Observations and Discussion.} In figure (\ref{exp1}), we first observe that \sgda and \gda algorithms perform at par with \oadam to fit the \dgmm estimator. It establishes that hyperparameter tuning is effective. With that, we further observe that the federated algorithms efficiently fit the estimated function to the true data-generating process competitive to the centralized algorithms even though the data is decentralized and non-i.i.d.. Thus, it shows that the federated algorithm converges effectively. In Table~\ref{tab:example_table_1} we present the test mean squared error (MSE) values. The MSE values indicate that the federated implementation achieves competitive convergence to their centralized counterpart. These experiments establish the efficacy of our method.

\section*{An Open Problem}
In this work, we characterized the equilibrium solutions of federated zero-sum games in consideration of local minimax solutions for non-convex non-concave minimax optimization problems. Regardless of the analytical assumptions over the objective, the mixed strategy solutions for zero-sum games exist. However, unlike the pure strategy solutions, where the standard heterogeneity considerations over gradients and Hessians across clients, translates a local minimax solution for the federated objective to approximate local solutions for the clients, it is not immediate how a mixed strategy solution as a probability measure can be translated to that for clients. It leaves an interesting open problem to characterize the mixed startegy solutions for federated zero-sum games.     
\printbibliography

\clearpage
\begin{center}
	\LARGE \textsc{Appendix}
\end{center}
\startcontents[sections]
\printcontents[sections]{l}{1}{\setcounter{tocdepth}{3}}
\appendix
\onecolumn
\section{The Experimental Benchmark Design}
\label{ap:relatedexp}		
It is standard in this area to perform experimental analysis on synthetic datasets for unavailability of ground truth for causal inference; for example see Section 4.1.1 of \citet{vo2022bayesian}. Nonetheless, an experimental comparison of our work with recent works on federated methods for causal effect estimations is not direct. More specifically, see the following:
\begin{enumerate}[label=(\roman*),parsep=0pt,topsep=0pt,leftmargin=*,itemsep=0pt]
	\item \textbf{\textsc{CausalRFF} \cite{vo2022adaptive} and \textsc{FedCI} \cite{vo2022bayesian}.} The aim of \textsc{CausalRFF} \cite{vo2022adaptive} is to estimate the conditional average treatment effect (CATE) and average treatment effect (ATE), whereas \textsc{FedCI} \cite{vo2022bayesian} aims to estimate individual treatment effect (ITE) and ATE. For this, \cite{vo2022adaptive} consider a setting of $Y$ , $W$, and $X$ to be random variables denoting the outcome, treatment, and proxy variable,
	respectively. Along with that, they also consider a confounding variable $Z$. However, their causal dependency builds on the dependence of each of $Y$ , $W$, and $X$ on $Z$ besides dependency of $Y$ on $W$. Consequently, to compute CATE and ATE, they need to estimate the conditional probabilities $p(w^i|x^i)$, $p(y^i|x^i, w^i)$, $p(z^i|x^i, y^i, w^i)$, $p(y^i|w^i, z^i)$, where the superscript $i$ represents a client. Their experiments compare the estimates of CATE and ATE with the Bayesian baselines \cite{hill2011bayesian}, \cite{shalit2017estimating}, \cite{louizos2017causal}, etc. in a centralized setting without any consideration of data decentralization or heterogeneity native to federated learning. Further, they compare against the same baselines in a \textit{one-shot federated} setting, where at the end of training on separate data sources independently, the predicted treatment effects are averaged. Similar is the experimental evaluation of \cite{vo2022bayesian}.
	
	By contrast, the setting of IV analysis as in our work does not consider dependency of the outcome variable $Y$ on the confounder $Z$, though the treatment variable $X$ could be endogenous and depend on $Z$. For us, computing the treatment effects and thereby comparing it against these works is not direct. Furthermore, it is unclear, if the approach of \cite{vo2022adaptive} and \cite{vo2022bayesian}, where the predicted inference over a number of datasets is averaged as the final result, would be comparable to our approach where the problem is solved using a federated maximin optimization with multiple synchronization rounds among the clients. For us, the federated optimization subsumes the experimental of comparing the average predicted values after independent training with the predicted value over the entire data. This is the reason that our centralized counterpart i.e. \dgmm \cite{bennett2019deep}, do not experimentally compare against the baselines of \cite{vo2022adaptive} and \cite{vo2022bayesian}.
	
	In summary, for us the experimental benchmarks were guided by showing the efficient fit of the GMM estimator in a federated setting.
	\item \textbf{\textsc{TEDVAE} \cite{zhang2021treatment}.} As mentioned above, their aim was to showcase the advantage of a weighted averaging over the vanilla averaging of FedAvg. By contrast, our experiments tried to showcase that even in a federated setting, the maximin optimization converges analogous to the centralized counterpart.
\end{enumerate}

\section{Federated Gradient Descent Ascent Algorithm Description}
\label{sec:algo}
\begin{algorithm}[ht]
\caption{\fgda running on a federated learning server to solve the minimax problem (\ref{eqn:fedoptprob})}\label{alg:fedgda}
\textbf{Server Input}: initial global estimate $\theta_1,\tau_1$; constant local learning rate $\alpha_1, \alpha_2$; total $N$ clients\\
\textbf{Output}: global model states $\theta_{T+1}, \tau_{T+1}$
\begin{algorithmic}[1] 
\For{synchronization round $t=1,\ldots,T$}
\State server sends $\theta_t, \tau_t$ to all clients
\For {each $i\in [N]$ in parallel}
\State $\theta_{t,1}^i\leftarrow \theta_t$, $\tau_{t,1}^i\leftarrow \tau_t$
\For{$r=1,2,\ldots,R$}
\State $\theta_{t,r+1}^i= \theta_{t,r}^i-\alpha_1\nabla_{\theta} f_i(\theta_{t,r}^i, \tau_{t,r}^i)$
\State $\tau_{t,r+1}^i= \tau_{t,r}^i+\alpha_2\nabla_{\tau} f_i(\theta_{t,r}^i, \tau_{t,r}^i)$
\EndFor
\State $(\Delta \theta_{t}^i, \Delta \tau_{t})\leftarrow (\theta_{t,R+1}^i-\theta_{t}, \tau_{t,R+1}^i-\tau_{t})$
\EndFor
\State $(\Delta \theta_{t}, \Delta \tau_{t})\leftarrow\frac{1}{N}\sum_{i\in[N]}(\Delta \theta_{t}^i, \Delta \tau_{t}^i)$
\State $\theta_{t+1}\leftarrow (\theta_t + \Delta \theta_{t})$, $\tau_{t+1}\leftarrow (\tau_t + \Delta \tau_{t})$
\EndFor
\State \textbf{return} $\theta_{T+1}; \tau_{T+1}$
\end{algorithmic}
\end{algorithm}

We adapt the proof of Theorem 1 in \cite{sharma2022federated} for the \sgda algorithm proposed in \cite{deng2021local} for the \fgda algorithm~\ref{alg:fedgda} for smooth non-convex- PL problems.

\begin{assumption}[Polyak Łojaisiewicz (PL) condition in $\tau$]
\label{asm:PL}The function $U_{\tilde\theta}$ satisfyies $\mu-PL$ condition in $\tau$, $\mu>0$, if for any fixed $\theta$, $\argmax_{\tau^\prime}U_{\tilde\theta}(\theta,\tau^\prime)\neq \phi$ and $\|\nabla_{\tau}U_{\tilde\theta}(\theta,\tau)\|^2\geq 2\mu\left(\max_{\tau^\prime}U_{\tilde\theta}(\theta,\tau^\prime)-U_{\tilde\theta}(\theta,\tau)\right)$.
    
\end{assumption}

\begin{theorem}
    Let the local loss functions $U_{\tilde\theta}^i$ for all $i\in\{1,2,\dots,N\}$ satisfy assumption~\ref{asm:lipschitz} and~\ref{asm:gradientheterogeneity}. The federated objective function satisfies assumption~\ref{asm:PL}. Suppose $\alpha_2\leq \frac{1}{8LR}$, $\frac{\alpha_1}{\alpha_2}\leq\frac{1}{8\kappa^2},$ where $\kappa=\frac{L}{\mu}$ is the condition number. Let $\bar{\theta}_{T+1}$ is drawn uniformly at random from $\{\theta_t\}_{t=1}^{T+1}$, then the following holds:
    \begin{align*}
        \|\nabla\tilde\Phi(\bar\theta_{T+1})\|^2\leq \gO\left(\kappa^2\left[\frac{\Delta_{\tilde\Phi}}{\alpha_2R(T+1)}\right]\right)+\mathcal{O}\left(\kappa^2(R-1)^2[\alpha_2^2\zeta_\tau^2+\alpha_1^2\zeta_{\theta}^2]\right),
    \end{align*}
    where $\nabla\tilde\Phi(.):=\max_{\tau}U_{\tilde\theta}(.,\tau)$ is the envelope function, $\Delta_{\tilde\Phi}:=\tilde\Phi(\theta_0)-\min_{\theta}\tilde\Phi(\theta),$ and $\zeta_{\theta}:=\frac{1}{N}\sum_{i=1}^{N}\zeta_{\theta}^i,~\zeta_{\tau}:=\frac{1}{N}\sum_{i=1}^{N}\zeta_{\tau}^i$. Using $\alpha_1=\mathcal{O}\left(\frac{1}{\kappa^2}\sqrt{\frac{N}{R(T+1)}}\right)$, $\alpha_2=\mathcal{O}\left(\sqrt{\frac{N}{R(T+1)}}\right)$, $\|\nabla\tilde\Phi(\bar\theta_{T+1})\|^2$ can be bounded as
    \begin{align*}
        \mathcal{O}\left(\frac{\kappa^2\Delta_{\tilde\Phi}}{\sqrt{NR(T+1)}}+\kappa^2(R-1)^2\frac{NR(\zeta_{\theta}^2+\zeta_{\tau}^2)}{R(T+1)}\right).
    \end{align*}
\end{theorem}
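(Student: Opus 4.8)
The plan is to follow the deterministic (full-gradient) specialization of the non-convex--PL minimax analysis of \citet{sharma2022federated}, since \fgda with exact local gradients is precisely the noiseless case of their \sgda method. The central object is the envelope function $\tilde\Phi(\theta):=\max_{\tau}U_{\tilde\theta}(\theta,\tau)$. First I would establish, via an envelope (Danskin-type) argument, that $\tilde\Phi$ is differentiable with $\nabla\tilde\Phi(\theta)=\nabla_{\theta}U_{\tilde\theta}(\theta,\tau^*(\theta))$ for any $\tau^*(\theta)\in\argmax_{\tau}U_{\tilde\theta}(\theta,\tau)$, and that under the $L$-smoothness of Assumption~\ref{asm:lipschitz} together with the $\mu$-PL condition of Assumption~\ref{asm:PL} the envelope is $L_{\Phi}$-smooth with $L_{\Phi}=\gO(\kappa L)$. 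This smoothness is exactly what reduces the minimax problem to a single-variable non-convex descent analysis on $\tilde\Phi$, whose stationarity measure $\norm{\nabla\tilde\Phi(\bar\theta_{T+1})}^2$ is the quantity we must bound.

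Next I would apply the $L_{\Phi}$-smoothness descent inequality to consecutive synchronized iterates, where $\theta_{t+1}-\theta_{t}=-\frac{\alpha_1}{N}\sum_{i\in[N]}\sum_{r=1}^{R}\nabla_{\theta}U_{\tilde\theta}^i(\theta_{t,r}^i,\tau_{t,r}^i)$. The crux is to relate the aggregated local update direction to the true envelope gradient $\nabla\tilde\Phi(\theta_t)$; the difference decomposes into two error sources that must be controlled separately: (i) the inner-maximization tracking error $\tilde\Phi(\theta_t)-U_{\tilde\theta}(\theta_t,\tau_t)$, which the PL condition lets me contract geometrically along the $\tau$-ascent and whose required decay rate dictates the two-timescale stepsize constraint $\alpha_1/\alpha_2\le 1/(8\kappa^2)$; and (ii) the client-drift error, namely the deviation of the local trajectories $\norm{\theta_{t,r}^i-\theta_t}$ and $\norm{\tau_{t,r}^i-\tau_t}$ from their synchronization anchors.

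The main obstacle I expect is bounding the client drift. I would set up a coupled recursion over the local-step index $r$ for $\sum_{i}\norm{\theta_{t,r}^i-\theta_t}^2$ and $\sum_{i}\norm{\tau_{t,r}^i-\tau_t}^2$, using $L$-Lipschitz smoothness to bound per-step gradient changes and the dissimilarity bounds $\zeta_{\theta},\zeta_{\tau}$ of Assumption~\ref{asm:gradientheterogeneity} to control the gap between each client's gradient and the federated gradient. Unrolling this recursion under the stepsize cap $\alpha_2\le 1/(8LR)$ yields drift of order $(R-1)^2[\alpha_1^2\zeta_{\theta}^2+\alpha_2^2\zeta_{\tau}^2]$, which is exactly the heterogeneity term in the statement. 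Substituting the tracking and drift bounds into the descent inequality, telescoping over $t=1,\dots,T+1$, and dividing by the number of rounds (hence reporting the uniformly drawn index $\bar\theta_{T+1}$) produces the optimization term $\gO\!\big(\kappa^2\Delta_{\tilde\Phi}/(\alpha_2 R(T+1))\big)$ alongside the stated heterogeneity term. Finally I would substitute $\alpha_1=\gO\!\big(\kappa^{-2}\sqrt{N/(R(T+1))}\big)$ and $\alpha_2=\gO\!\big(\sqrt{N/(R(T+1))}\big)$ to collapse the two contributions into the simplified $\gO\!\big(\kappa^2\Delta_{\tilde\Phi}/\sqrt{NR(T+1)}\big)$ rate.
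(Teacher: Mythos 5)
Your outline is correct and matches the paper's approach exactly: the paper does not give a standalone proof but instead adapts Theorem 1 of \citet{sharma2022federated} (envelope smoothness via the PL condition, descent on $\tilde\Phi$, geometric contraction of the inner-maximization gap under the two-timescale ratio, and a client-drift recursion yielding the $(R-1)^2$ heterogeneity term), specialized to the deterministic case $\sigma=0$ with the per-client dissimilarity bounds $\zeta_\theta^i,\zeta_\tau^i$ replacing the averaged ones. Your reconstruction of that argument, including the role of $\alpha_2\le 1/(8LR)$ and $\alpha_1/\alpha_2\le 1/(8\kappa^2)$, is faithful to the cited proof.
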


Although the original assumption uses the supremum of average squared deviations, say $\zeta_{\theta}^\prime$ and $\zeta_{\tau}^\prime$, we use per-client dissimilarity bounds $ \zeta_\theta^i,~\zeta_\tau^i$ and upper bound their quantity as $ {\zeta_{\theta}^\prime}^2\leq \frac{1}{N} \sum_{i=1}^N (\zeta_\theta^i)^2:= {\zeta_\theta}^2$ and ${\zeta_{\tau}^\prime}^2 \leq \frac{1}{N} \sum_{i=1}^N (\zeta_\tau^i)^2:= {\zeta_\tau}^2$. Since there is no stochasticity, we used the bounded variance $\sigma=0$. For details, refer to proof of Theorem 1 in \cite{sharma2022federated}.

\section{Proofs}
\label{ap:proofs}
\subsection{Proof of Lemma~\ref{lemma:federatedU}}
\label{ap::lemma:federatedU}
\begin{lemma}[Restatement of Lemma \ref{lemma:federatedU}]
	Let $\gF = \text{span}\{f^i_j~\vert~i\in[N],~j\in[m]\}$. An equivalent objective function for the federated moment estimation optimization problem (\ref{eqn:FMC}) is given by:
	\begin{equation}
		\left\|  \psi_{N}(f; \theta)\right\|_{\tilde{\theta}}^2 =\sup_{\substack{f^i\in \mathcal{F}\\ \forall i\in[N]}}\frac{1}{N}\sum_{i=1}^{N} \left(\psi_{n_i}(f^i;\theta)-\frac{1}{4}\gC_{\tilde{\theta}}(f^i;f^i)\right), \text{ where}
	\end{equation}
	\[\begin{aligned}
		&\psi_{n_i}(f^i; \theta) :=  \frac{1}{n_i} \sum_{k=1}^{n_i} f^i(Z_k^i)(Y^i_k - g^i(X^i_k;\theta)), \text{ and }
		\mathcal{C}_{\tilde{\theta}}(f^i, f^i) := \frac{1}{n_i} \sum_{k=1}^{n_i} (f^i(Z^i_k))^2 (Y_k^i - g^i(X_k^i;\tilde{\theta}))^2.
	\end{aligned}
	\]
\end{lemma}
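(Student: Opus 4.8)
The plan is to follow the proof of Lemma~\ref{lemma:U} almost verbatim, the one new ingredient being that the averaged objective separates across clients. The two structural facts I would exploit are that, for fixed $\theta$ and $\tilde\theta$, the map $f^i\mapsto\psi_{n_i}(f^i;\theta)$ is \emph{linear} in $f^i$, while $f^i\mapsto\gC_{\tilde\theta}(f^i,f^i)$ is a positive semidefinite \emph{quadratic} form. Hence, once each $f^i$ is restricted to the finite-dimensional span $\mathcal{F}=\gF$, the inner functional maximization collapses to a finite-dimensional concave quadratic program solvable in closed form.

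First I would observe that the right-hand objective is additively separable in $(f^1,\dots,f^N)$: each summand $\psi_{n_i}(f^i;\theta)-\tfrac14\gC_{\tilde\theta}(f^i,f^i)$ depends only on $f^i$. Since the supremum is taken over the product set $\mathcal{F}^N$ with no coupling constraint, it factorizes, giving
\[
\sup_{f^i\in\mathcal{F},\,\forall i\in[N]}\frac{1}{N}\sum_{i=1}^{N}\Big(\psi_{n_i}(f^i;\theta)-\tfrac14\gC_{\tilde\theta}(f^i,f^i)\Big)=\frac{1}{N}\sum_{i=1}^{N}\Psi_{n_i}(\theta,\mathcal{F},\tilde\theta),
\]
so each inner supremum is exactly the centralized quantity $\Psi_{n_i}(\theta,\mathcal{F},\tilde\theta)$ of Lemma~\ref{lemma:U}, now evaluated over the \emph{combined} span $\mathcal{F}$ rather than the client's own $\mathcal{F}^i$.

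Next I would evaluate each inner supremum by completing the square. Fixing a basis $\{\phi_s\}$ of $\mathcal{F}$ and writing $f^i=\sum_s a_s\phi_s$, linearity gives $\psi_{n_i}(f^i;\theta)=a^\top v^i$ with $v^i_s=\psi_{n_i}(\phi_s;\theta)$, and bilinearity gives $\gC_{\tilde\theta}(f^i,f^i)=a^\top C^i a$ with $C^i_{st}=\gC_{\tilde\theta}(\phi_s,\phi_t)$. The scalar identity $\sup_{a}\big(a^\top v^i-\tfrac14 a^\top C^i a\big)=(v^i)^\top (C^i)^{-1} v^i$, attained at $a=2(C^i)^{-1}v^i$, then identifies each inner supremum with the client-$i$ weighted norm $\|v^i\|^2_{\tilde\theta}$. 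Summing over $i$ and invoking the definition of the federated weighted norm $\|\psi_N(f;\theta)\|^2_{\tilde\theta}$ recovers the claimed equality.

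The step I expect to be the main obstacle is twofold. First, the conjugate identity requires $C^i$ to be invertible; when $\gC_{\tilde\theta}(\cdot,\cdot)$ is only positive semidefinite on $\mathcal{F}$, I would restrict to the range of $C^i$ (equivalently, replace the inverse by a pseudoinverse) and argue, exactly as in Lemma~\ref{lemma:U}, that the unconstrained supremum is attained there and is finite precisely when $v^i$ lies in that range. Second, and more substantively, I must reconcile the resulting \emph{average of per-client} weighted norms with the single aggregate object $\|\psi_N(f;\theta)\|^2_{\tilde\theta}$ on the left: this amounts to checking that the federated weighted norm is, by construction, the block-aggregate form $\tfrac1N\sum_i (v^i)^\top(C^i)^{-1}v^i$ induced by the client covariances, so that the separated expression and the left-hand norm coincide term by term.
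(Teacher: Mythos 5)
There is a genuine gap, and it sits exactly where you locate ``the main obstacle.'' Your first move --- factorizing the supremum over the product set $\mathcal{F}^N$ into $\frac{1}{N}\sum_i \sup_{f^i}(\cdots)$ and solving each client's concave quadratic program separately --- produces the \emph{average of per-client weighted norms}, $\frac{1}{N}\sum_{i}(v^i)^\top (C^i)^{-1}v^i$ with $v^i_j=\psi_{n_i}(f^i_j;\theta)$ and $C^i$ the client-$i$ covariance. But the left-hand side of the lemma, as defined in (\ref{eqn:FMC}), is the single quadratic form $\bar v^\top \bar C^{-1}\bar v$ in the \emph{averaged} moment vector $\bar v=\frac{1}{N}\sum_i v^i$ with the \emph{averaged} covariance $\bar C=\frac{1}{N}\sum_i C^i$. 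These two quantities do not coincide in general: the matrix-fractional function $(v,C)\mapsto v^\top C^{-1}v$ is jointly convex, so the decoupled value dominates the coupled one, with equality only when all clients share the same maximizing coefficients --- essentially the homogeneous case. So the ``reconciliation'' you defer to the end is not a definitional check; it is a false identity under heterogeneity, and no per-client completion of squares will recover the aggregate norm.

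The paper's proof avoids this by never decoupling. It applies a Lagrangian/dual-norm argument to the single $m$-dimensional aggregate problem to obtain $\|\bar v\|^2_{\tilde{\theta}}=\sup_{v\in\mathbb{R}^m}\bigl(v^\top\bar v-\frac{1}{4}v^\top\bar C v\bigr)$, and only afterwards re-expresses $v^\top\bar v=\frac{1}{N}\sum_i\psi_{n_i}(f^i;\theta)$ and $v^\top\bar C v=\frac{1}{N}\sum_i\gC_{\tilde{\theta}}(f^i,f^i)$, where \emph{all} the $f^i=\sum_j v_jf^i_j$ are induced by the one shared coefficient vector $v$. In other words, the supremum that actually equals the left-hand side ranges over coupled tuples $(f^1,\dots,f^N)$ parametrized by a common $v$, not over independent choices of $f^i$. (The lemma's display, read literally as independent suprema, is itself looser than what the paper's proof establishes; your proposal commits to that independent reading and then cannot close the gap back to the aggregate norm.) To salvage your route you must either tie the coefficients across clients from the outset, or change the left-hand side to the average of per-client weighted norms; the choice between completing the square and the paper's Lagrangian duality is, by contrast, immaterial.
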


\begin{proof}
Let $\psi=(\frac{1}{N}\sum_{i=1}^{N}\psi_{n_i}(f_1^i;\theta), \frac{1}{N}\sum_{i=1}^{N}\psi_{n_i}(f_2^i;\theta), \dots, \frac{1}{N}\sum_{i=1}^{N}\psi_{n_i}(f_m^i;\theta))$.

We know that $\|v\|^2= v^\top C_{\tilde{\theta}}^{-1}v$ and the associated dual norm is obtained as $\|v\|_*= \sup_{\|v\|\leq 1} v^\top v= v^\top C_{\tilde{\theta}}v$.

Using the definition of the dual norm,
    \begin{align}
        \|\psi\|&=  \sup_{\|v\|_*\leq 1} v^\top \psi\nonumber\\
        \|\psi\|^2&=  \sup_{\|v\|_*\leq \|\psi\|} v^\top \psi\nonumber\\
        \|\psi\|^2&=  \sup_{v^\top C_{\tilde{\theta}}v \leq \|\psi\|^2} v^\top \psi\label{eqn:primal-max}.
    \end{align}
    We now find the equivalent dual optimization problem for~(\ref{eqn:primal-max}).
    
    The Lagrangian of the constrained maximization problem~(\ref{eqn:primal-max}) is given as
    \begin{equation*}
        \gL(v,\lambda)= v^\top \psi+\lambda(v^\top C_{\tilde{\theta}}v - \|\psi\|^2),~\text{where}~\lambda \le 0.
    \end{equation*} 
    To maximize $ \gL(v,\lambda)$ w.r.t. $v$, put $\frac{   \partial\gL}{\partial v}= \psi+2\lambda C_{\tilde{\theta}}v=0$ to obtain $v=\frac{-1}{2\lambda}C_{\tilde{\theta}}^{-1}\psi.$

    When $\|\psi\|> 0$, $v=0$ satisfies the Slater's condition as a strictly feasible interior point of the constraint $v^\top C_{\tilde{\theta}}v - \|\psi\|^2\leq 0$. Thus, strong duality holds.
    Substituting $v=\frac{-1}{2\lambda}C_{\tilde{\theta}}^{-1}\psi$ in the Lagrangian gives
     \begin{align*}
        \gL^*(\lambda)&= \frac{-1}{2\lambda}{\psi}^\top C_{\tilde{\theta}}^{-1}\psi+\frac{1}{4\lambda}{\psi}^\top C_{\tilde{\theta}}^{-1}\psi - \lambda\|\psi\|^2\\
       &= -\frac{\|\psi\|^2}{4\lambda} - \lambda\|\psi\|^2 .
    \end{align*}
    Hence, the dual becomes $\|\psi\|^2=inf_{\lambda <0}\left\{ \gL^*(\lambda)\right\}$. Thus, the equivalent dual optimization problem for~(\ref{eqn:primal-max}) is given as
    \begin{equation}
        \|\psi\|^2=\inf_{\lambda <0}\left\{-\frac{\|\psi\|^2}{4\lambda} - \lambda\|\psi\|^2 \right\}.
    \end{equation}

    Putting $\frac{   \partial\gL}{\partial \lambda}= \frac{\|\psi\|^2}{4\lambda^2} -\|\psi\|^2=0$ gives $\lambda=\frac{-1}{2}.$ Thus, due to strong duality $\|\psi\|^2= \sup_{v} \gL(v,\frac{-1}{2})= \sup_{v} v^\top \psi-\frac{1}{2}(v^\top C_{\tilde{\theta}}v - \|\psi\|^2).$ 
    
    Rewriting it $\frac{1}{2}\|\psi\|^2= \sup_{v}   v^\top \psi-\frac{1}{2}v^\top C_{\tilde{\theta}}v$ and substituting $u=2v$
    \begin{equation*}
        \|\psi\|^2= \sup_{u}   u^\top \psi-\frac{1}{4}u^\top C_{\tilde{\theta}}u .
    \end{equation*}
    Using change of variables $u\rightarrow v$
       \begin{equation*}
        \|\psi\|^2= \sup_{v}   v^\top \psi-\frac{1}{4}v^\top C_{\tilde{\theta}}v.
    \end{equation*}
    Now, we want to find a function form for the optimization problem mentioned above.

    Consider a finite-dimensional functional spaces $\gF^i=\text{span}\{f_1^i, f_2^i, \dots, f_m^i\}$ for each client $i$. Hence, for $f^i\in\gF^i$ 
    \begin{equation*}
        f^i=\sum_{j=1}^{m}v_jf^i_j.
    \end{equation*}
Since all the clients share the same neural network architecture, we define a global functional space $\gF$ as 
\begin{equation*}
    \gF=\text{span}\{f^i_j~\vert~i\in[N],~j\in[m]\}.
\end{equation*}
Therefore, $v$ corresponds to $f^i$ such that \begin{equation*}
    f^i=\sum_{c=1}^{N}\sum_{j=1}^{m}v^i_j f^c_j,
\text{ where }
    v^i_j=
 \begin{cases}
        v_j & \text{if } c =i\\
        0 & \text{if } c \neq i
    \end{cases}
    \end{equation*}
Hence, 
\begin{align*}
v^\top\psi&=\frac{1}{N}\sum_{i=1}^{N}\sum_{j=1}^{m}v_j\psi_{n_i}(f^i_j;\theta)\\
&=\frac{1}{N}\sum_{i=1}^{N}\frac{1}{n_i}\sum_{k=1}^{n_i}f^i(Z_k^i)(Y^i_k-g^i(X^i_k;\theta)).
\end{align*} 
Similarly, 
\begin{align*}
v^\top C_{\tilde{\theta}}v&=\sum_{p=1}^{m}\sum_{q=1}^{m}v_pv_q[C_{\tilde{\theta}}]pq\\
&=\sum_{p=1}^{m}\sum_{q=1}^{m}v_pv_q\frac{1}{N}\sum_{i=1}^{N}\frac{1}{n_i}\sum_{k=1}^{n_i}f^i_p(Z^i_k)f_q^i(Z_k^i)(Y_k^i-g^i(X_k^i;\tilde\theta))\\
&=\frac{1}{N}\sum_{i=1}^{N}\frac{1}{n_i}\sum_{k=1}^{n_i}\sum_{p=1}^{m}v_pf^i_p(Z^i_k)\sum_{q=1}^{m}v_q f_q^i(Z_k^i)(Y_k^i-g^i(X_k^i;\tilde\theta))^2\\
&=\frac{1}{N}\sum_{i=1}^{N}\frac{1}{n_i}\sum_{k=1}^{n_i}(f^i(Z^i_k))^2(Y_k^i-g^i(X_k^i;\tilde\theta))^2\\
&=\frac{1}{N}\sum_{i=1}^{N}\gC_{\tilde\theta}(f^i,f^i).
\end{align*}
Thus, applying the Riesz Representation theorem using the representations $v^\top\psi=\frac{1}{N}\sum_{i=1}^{N}\psi_{n_i}(f^i;\theta)$ and $v^\top C_{\tilde{\theta}}v=\frac{1}{N}\sum_{i=1}^{N}\gC_{\tilde{\theta}}(f^i,f^i)$, we can write the objective in functional form as
      \begin{equation*}
    \|\psi\|^2 = 
    \sup_{\substack{f^i\in \mathcal{F}\\ \forall i\in[N]}}
       \frac{1}{N} \sum_{i=1}^N\left( 
  \psi_{n_i}(f^i; \theta) 
    - \frac{1}{4}\mathcal{C}_{\tilde{\theta}}(f^i, f^i) 
    \right).
    \end{equation*}
This gives us the desired result.

\end{proof}

\subsection{Proof of Theorem~\ref{sec::thm:globallocalequilibrium}}
\label{ap::thm:globallocalequilibrium}
\begin{theorem}[Restatement of Theorem~\ref{sec::thm:globallocalequilibrium}]
	Under assumptions~\ref{asm:twicediff},~\ref{asm:lipschitz},~\ref{asm:gradientheterogeneity} and~\ref{asm:hessianheterogeneity}, a minimax solution $(\hat\theta,\hat\tau)$ of federated optimization problem~(\ref{eqn:fedoptprob}) that satisfies the equilibrium condition as in definition~\ref{def:localequilibrium}: 
		$
		U_{\tilde\theta}(\hat\theta,\tau)\leq U_{\tilde\theta}(\hat\theta,\hat\tau)\leq \max_{{\tau\prime : \|\tau\prime-\hat\tau\|\leq h(\delta)}}U_{\tilde\theta}(\theta,\tau\prime),
		$
	is an $\mathcal{E}$-approximate federated equilibrium solution as defined in \ref{def:epsilonequilibrium}, where the approximation error $\varepsilon^i$ for each client $i\in[N]$ lies in:
	$ \max\{ \zeta_{\theta}^i, \zeta_{\tau}^i\}\le \varepsilon^i\le \min\{\alpha-\rho_{\tau}^i,\beta- B^i \}    
	$
	for $\rho_{\tau}^i<\alpha$ and $B^i>\beta$, such that $\alpha:=\left\vert\lambda_{\max}\left(\nabla_{\tau\tau}^2U_{\tilde\theta}(\hat\theta,\hat\tau)\right)\right\vert$, $\beta:=\lambda_{\min}\left(\left[\nabla_{\theta\theta}^2U_{\tilde\theta}-\nabla_{\theta\tau}^2U_{\tilde\theta}\left(\nabla_{\tau\tau}^2U_{\tilde\theta}\right)^{-1}\nabla_{\tau\theta}^2U_{\tilde\theta}\right](\hat\theta,\hat\tau)\right)$ and $B^i:=\rho_{\theta}^i+ L\rho_{\theta\tau}^i\frac{1}{\vert\lambda_{\max}(\nabla_{\tau\tau}^2 U_{\tilde\theta}^i)\vert}+L\rho_{\tau\theta}^i\frac{1}{\vert\lambda_{\max}(\nabla_{\tau\tau}^2 U_{\tilde\theta}^i)\vert}+ L^2\rho_{\tau}^i\frac{1}{\vert\lambda_{\max}(\nabla_{\tau\tau}^2 U_{\tilde\theta}^i)\cdot\lambda_{\max}(\nabla_{\tau\tau}^2 U_{\tilde\theta})\vert}$.
\end{theorem}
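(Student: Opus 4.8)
The strategy is to verify, for each client $i\in[N]$, the three conditions defining an $\varepsilon^i$-approximate local minimax point in Definition~\ref{def:epsilonequilibrium}, by transporting the \emph{exact} optimality conditions that the global local minimax point $(\hat\theta,\hat\tau)$ satisfies onto each client's objective through the heterogeneity bounds of Assumptions~\ref{asm:gradientheterogeneity} and~\ref{asm:hessianheterogeneity}. Since $(\hat\theta,\hat\tau)$ is a local minimax point of $U_{\tilde\theta}$, it obeys the first- and second-order necessary conditions of Propositions~18--20 of \cite{pmlr-v119-jin20e}: the global gradients $\nabla_\theta U_{\tilde\theta}$ and $\nabla_\tau U_{\tilde\theta}$ vanish at $(\hat\theta,\hat\tau)$, the block $\nabla^2_{\tau\tau}U_{\tilde\theta}(\hat\theta,\hat\tau)\preceq\mathbf{0}$, and, in the non-degenerate regime $\nabla^2_{\tau\tau}U_{\tilde\theta}(\hat\theta,\hat\tau)\prec\mathbf{0}$ (so $\alpha>0$), the global Schur complement $S$ is well defined and satisfies $S\succeq\beta I$ with $\beta\ge 0$. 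These three facts are the anchors for the three conditions to establish.

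First I would dispatch the first-order condition. Because $\nabla_\theta U_{\tilde\theta}(\hat\theta,\hat\tau)=0$, gradient dissimilarity gives $\|\nabla_\theta U_{\tilde\theta}^i(\hat\theta,\hat\tau)\|=\|\nabla_\theta U_{\tilde\theta}^i(\hat\theta,\hat\tau)-\nabla_\theta U_{\tilde\theta}(\hat\theta,\hat\tau)\|\le\zeta_\theta^i$, and symmetrically $\|\nabla_\tau U_{\tilde\theta}^i(\hat\theta,\hat\tau)\|\le\zeta_\tau^i$. Both are at most $\max\{\zeta_\theta^i,\zeta_\tau^i\}\le\varepsilon^i$, which is precisely the lower endpoint of the admissible interval; this is exactly the $\varepsilon^i$-stationarity required at every client. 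Next I would handle the curvature in $\tau$: from $\nabla^2_{\tau\tau}U_{\tilde\theta}(\hat\theta,\hat\tau)\preceq\mathbf{0}$ and $\alpha=|\lambda_{\max}(\nabla^2_{\tau\tau}U_{\tilde\theta}(\hat\theta,\hat\tau))|$ we obtain $\nabla^2_{\tau\tau}U_{\tilde\theta}(\hat\theta,\hat\tau)\preceq-\alpha I$, and the spectral-norm Hessian dissimilarity bound yields $\nabla^2_{\tau\tau}U_{\tilde\theta}^i(\hat\theta,\hat\tau)\preceq\nabla^2_{\tau\tau}U_{\tilde\theta}(\hat\theta,\hat\tau)+\rho_\tau^i I\preceq-(\alpha-\rho_\tau^i)I$. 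Hence $\nabla^2_{\tau\tau}U_{\tilde\theta}^i(\hat\theta,\hat\tau)\preceq-\varepsilon^i I$ whenever $\varepsilon^i\le\alpha-\rho_\tau^i$; this simultaneously shows $\nabla^2_{\tau\tau}U_{\tilde\theta}^i\prec\mathbf{0}$, hence invertible, as soon as $\alpha>\rho_\tau^i$, which is needed below.

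The core of the argument, and the step I expect to be the main obstacle, is the Schur-complement inequality, since it couples the off-diagonal Hessian blocks with the inverse of $\nabla^2_{\tau\tau}$. Writing $U,U^i$ for $U_{\tilde\theta},U_{\tilde\theta}^i$ and $A=\nabla^2_{\tau\tau}U$, $A^i=\nabla^2_{\tau\tau}U^i$, and denoting by $S^i$ the client Schur complement, I would bound $\|S^i-S\|_\sigma\le B^i$ and conclude $S^i\succeq(\beta-B^i)I\succeq\varepsilon^i I$ for $\varepsilon^i\le\beta-B^i$. The bound comes from splitting $S^i-S$ into the $\theta\theta$-block difference, controlled by $\rho_\theta^i$, and the difference of the quadratic forms, which I would telescope as
\begin{align*}
\nabla^2_{\theta\tau}U^i(A^i)^{-1}\nabla^2_{\tau\theta}U^i-\nabla^2_{\theta\tau}U\,A^{-1}\nabla^2_{\tau\theta}U
&=\nabla^2_{\theta\tau}U^i(A^i)^{-1}\big(\nabla^2_{\tau\theta}U^i-\nabla^2_{\tau\theta}U\big)\\
&\quad+\big(\nabla^2_{\theta\tau}U^i-\nabla^2_{\theta\tau}U\big)(A^i)^{-1}\nabla^2_{\tau\theta}U\\
&\quad+\nabla^2_{\theta\tau}U\big[(A^i)^{-1}-A^{-1}\big]\nabla^2_{\tau\theta}U.
\end{align*}

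Bounding each factor with $\|\nabla^2_{\theta\tau}U^i-\nabla^2_{\theta\tau}U\|_\sigma\le\rho_{\theta\tau}^i$ and $\|\nabla^2_{\tau\theta}U^i-\nabla^2_{\tau\theta}U\|_\sigma\le\rho_{\tau\theta}^i$, the smoothness estimate $\|\nabla^2 U\|_\sigma,\|\nabla^2 U^i\|_\sigma\le L$ of Assumption~\ref{asm:lipschitz} for the retained blocks, the operator-norm identities $\|A^{-1}\|_\sigma=1/|\lambda_{\max}(A)|$ and $\|(A^i)^{-1}\|_\sigma=1/|\lambda_{\max}(A^i)|$ for the negative-definite $\tau\tau$-Hessians, and the inverse-perturbation identity $(A^i)^{-1}-A^{-1}=(A^i)^{-1}(A-A^i)A^{-1}$ together with $\|A-A^i\|_\sigma\le\rho_\tau^i$, reproduces precisely the four summands of $B^i$. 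The delicate points will be guaranteeing invertibility of both $A$ and $A^i$ (which follows from the previous step once $\alpha>\rho_\tau^i$) and arranging the telescoping so that the reciprocal-eigenvalue denominators match those stated in $B^i$. Finally, the second-order sufficient condition of Definition~\ref{def:epsilonequilibrium} follows by repeating the last two estimates with strict inequalities, certifying a strict local minimax at every client and completing the proof; in the homogeneous limit all $\zeta$'s and $\rho$'s vanish, forcing $B^i=0$ and recovering exact local minimax points.
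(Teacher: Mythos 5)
Your proposal is correct and follows essentially the same route as the paper's proof: first-order stationarity transported via the gradient-dissimilarity bound at the global critical point, the $\tau\tau$-curvature via Weyl's inequality combined with the Hessian-dissimilarity bound, and the Schur-complement estimate via a three-term telescoping of $S^i-S$ using the inverse-perturbation identity $(A^i)^{-1}-A^{-1}=(A^i)^{-1}(A-A^i)A^{-1}$, yielding exactly the four summands of $B^i$. Your first-order step is a cleaner, more direct version of the paper's expand-and-rearrange computation, and your telescoping distributes the difference factors slightly differently, but both arrangements give the same bound under the same assumptions.
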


\begin{proof}
    The pure-strategy Stackelberg equilibrium for the federated objective is:
\begin{equation}
\label{eqn:equilibrium}
         U_{\tilde\theta}(\hat\theta,\tau)\leq U_{\tilde\theta}(\hat\theta,\hat\tau)\leq \max_{{\tau\prime : \|\tau\prime-\tau^*\|\leq h(\delta)}}U_{\tilde\theta}(\theta,\tau\prime),
     \end{equation}
We want to show that the $\epsilon^i$- approximate equilibrium for each client's objective $U_{\tilde{\theta}}^i $ also hold individually.

The first-order necessary condition for (\ref{eqn:equilibrium}) to hold is $\nabla_{\theta}U_{\tilde\theta}(\hat\theta,\hat\tau)=0$ and $\nabla_{\tau}U_{\tilde\theta}(\hat\theta,\hat\tau)=0$. Thus, $\left\|\nabla_{\theta}U_{\tilde\theta}(\hat\theta,\hat\tau)\right\|^2=0$.

Consider
\begin{align*}
\left\|\nabla_{\theta}U_{\tilde\theta}(\hat\theta,\hat\tau)\right\|^2&=
\left\|\nabla_{\theta}U_{\tilde\theta}(\hat\theta,\hat\tau)-\nabla_{\theta}U_{\tilde\theta}^i(\hat\theta,\hat\tau)+\nabla_{\theta}U_{\tilde\theta}^i(\hat\theta,\hat\tau)\right\|^2\\
&=\left\|\nabla_{\theta}U_{\tilde\theta}(\hat\theta,\hat\tau)-\nabla_{\theta}U_{\tilde\theta}^i(\hat\theta,\hat\tau)\right\|^2+\left\|\nabla_{\theta}U_{\tilde\theta}^i(\hat\theta,\hat\tau)\right\|^2\\
&+2\left(\nabla_{\theta}U_{\tilde\theta}(\hat\theta,\hat\tau)-\nabla_{\theta}U_{\tilde\theta}^i(\hat\theta,\hat\tau)\right)^\top\left(\nabla_{\theta}U_{\tilde\theta}^i(\hat\theta,\hat\tau)\right)
\end{align*}
Rearranging
\begin{align*}
 2\left(\nabla_{\theta}U_{\tilde\theta}^i(\hat\theta,\hat\tau)-\nabla_{\theta}U_{\tilde\theta}(\hat\theta,\hat\tau)\right)^\top\left(\nabla_{\theta}U_{\tilde\theta}^i(\hat\theta,\hat\tau)\right)-\left\|\nabla_{\theta}U_{\tilde\theta}^i(\hat\theta,\hat\tau)\right\|^2&= \left\|\nabla_{\theta}U_{\tilde\theta}(\hat\theta,\hat\tau)-\nabla_{\theta}U_{\tilde\theta}^i(\hat\theta,\hat\tau)\right\|^2\\
\left\|\nabla_{\theta}U_{\tilde\theta}^i(\hat\theta,\hat\tau)\right\|^2-2\left(\nabla_{\theta}U_{\tilde\theta}(\hat\theta,\hat\tau)\right)^\top\left(\nabla_{\theta}U_{\tilde\theta}^i(\hat\theta,\hat\tau)\right)&=  \left\|\nabla_{\theta}U_{\tilde\theta}(\hat\theta,\hat\tau)-\nabla_{\theta}U_{\tilde\theta}^i(\hat\theta,\hat\tau)\right\|^2
\end{align*}
Using gradient heterogeneity assumption~(\ref{asm:gradientheterogeneity}) on R.H.S.
\begin{equation*}
 \left\|\nabla_{\theta}U_{\tilde\theta}(\hat\theta,\hat\tau)-\nabla_{\theta}U_{\tilde\theta}^i(\hat\theta,\hat\tau)\right\|^2\leq (\zeta_{\theta}^i)^2
 \end{equation*}
Thus, we obtain $\left\|\nabla_{\theta}U_{\tilde\theta}^i(\hat\theta,\hat\tau)\right\| \leq \zeta_{\theta}^i.$ Similarly, $\left\|\nabla_{\tau}U_{\tilde\theta}^i(\hat\theta,\hat\tau)\right\| \leq \zeta_{\tau}^i$.

In the special case, when $\zeta_{\theta}^i=0$ and $\zeta_{\tau}^i=0$, thus we will have $\left\|\nabla_{\theta}U_{\tilde\theta}^i(\hat\theta,\hat\tau)\right\|^2=\left\|\nabla_{\tau}U_{\tilde\theta}^i(\hat\theta,\hat\tau)\right\|^2=0$ for all $i\in [N]$, which gives $\nabla_{\theta}U_{\tilde\theta}^i(\hat\theta,\hat\tau)=\nabla_{\tau}U_{\tilde\theta}^i(\hat\theta,\hat\tau)=0$ for all clients $i$.

Next, we prove that each client satisfies the second-order necessary condition approximately.
Since $(\hat\theta,\hat\tau)$ satisfy the equilibrium condition~(\ref{eqn:equilibrium}), the second-order necessary condition holds for the global function $U_{\tilde\theta}$, i.e. $\nabla_{\tau\tau}^2U_{\tilde\theta}(\hat\theta,\hat\tau)\preceq \mathbf{0}.$ We now prove that $\nabla_{\tau\tau}^2U_{\tilde\theta}^i(\hat\theta,\hat\tau)\preceq \mathbf{0}.$

Using assumption~\ref{asm:twicediff}, the hessian is symmetric. Thus, $\nabla_{\tau\tau}^2U_{\tilde\theta}(\hat\theta,\hat\tau)\preceq \mathbf{0}$ implies $\lambda_{\max}(\nabla_{\tau\tau}^2U_{\tilde\theta}(\hat\theta,\hat\tau))\leq 0$, where $\lambda_{\max}$ is the largest eigenvalue of the hessian. Suppose,  $\lambda_{\max}(\nabla_{\tau\tau}^2U_{\tilde\theta}(\hat\theta,\hat\tau))=-\alpha$, for some $\alpha\geq 0$.

We can write $\nabla_{\tau\tau}^2U_{\tilde\theta}^i(\hat\theta,\hat\tau)=\nabla_{\tau\tau}^2U_{\tilde\theta}^i(\hat\theta,\hat\tau)-\nabla_{\tau\tau}^2U_{\tilde\theta}(\hat\theta,\hat\tau)+\nabla_{\tau\tau}^2U_{\tilde\theta}(\hat\theta,\hat\tau)$.

Using a corollary of Weyl's theorem \cite{Horn_Johnson_2012} for real symmetric matrices $A$ and $B$, $ \lambda_{\max}(A+B)\leq \lambda_{\max}(A)+\lambda_{\max}(B)$. Hence,
\begin{equation*}
  \lambda_{\max}(\nabla_{\tau\tau}^2U_{\tilde\theta}^i(\hat\theta,\hat\tau))\leq \lambda_{\max}(\nabla_{\tau\tau}^2U_{\tilde\theta}^i(\hat\theta,\hat\tau)-\nabla_{\tau\tau}^2U_{\tilde\theta}(\hat\theta,\hat\tau))+\lambda_{\max}(\nabla_{\tau\tau}^2U_{\tilde\theta}(\hat\theta,\hat\tau)).
\end{equation*}
Thus, $\lambda_{\max}(\nabla_{\tau\tau}^2U_{\tilde\theta}^i(\hat\theta,\hat\tau))\leq \lambda_{\max}(\nabla_{\tau\tau}^2U_{\tilde\theta}^i(\hat\theta,\hat\tau)-\nabla_{\tau\tau}^2U_{\tilde\theta}(\hat\theta,\hat\tau))-\alpha$.

Since the spectral norm of a real symmetric matrix A is given as $\lVert A\rVert_{\sigma}=\max\{\vert\lambda_{\max}(A)\vert, \vert\lambda_{\min}(A)\vert\}$.

Under hessian heterogeneity assumption~\ref{asm:hessianheterogeneity}
\begin{align*}
\|\nabla_{\tau\tau}^2U_{\tilde\theta}^i(\hat\theta,\hat\tau)-\nabla_{\tau\tau}^2U_{\tilde\theta}(\hat\theta,\hat\tau)\|_{\sigma}
&=\max\left\{\left\vert\lambda_{\max}(\nabla_{\tau\tau}^2U_{\tilde\theta}^i(\theta,\tau)-\nabla_{\tau\tau}^2U_{\tilde\theta}(\theta,\tau))\right\vert,\right.\\ &\left.\left\vert\lambda_{\min}(\nabla_{\tau\tau}^2U_{\tilde\theta}^i(\theta,\tau)-\nabla_{\tau\tau}^2U_{\tilde\theta}(\theta,\tau))\right\vert\right\}\\
&\leq \rho_{\tau}^i.
\end{align*}

By definition of the spectral norm $\|\nabla_{\tau\tau}^2U_{\tilde\theta}^i(\hat\theta,\hat\tau)-\nabla_{\tau\tau}^2U_{\tilde\theta}(\hat\theta,\hat\tau)\|_{\sigma}=\lambda_{max}(\nabla_{\tau\tau}^2U_{\tilde\theta}^i(\hat\theta,\hat\tau)-\nabla_{\tau\tau}^2U_{\tilde\theta}(\hat\theta,\hat\tau))$,
\begin{align*}
    \lambda_{max}(\nabla_{\tau\tau}^2U_{\tilde\theta}^i(\hat\theta,\hat\tau)-\nabla_{\tau\tau}^2U_{\tilde\theta}(\hat\theta,\hat\tau))
    &\leq\max\left\{\left\vert\lambda_{\max}(\nabla_{\tau\tau}^2U_{\tilde\theta}^i(\hat\theta,\hat\tau)-\nabla_{\tau\tau}^2U_{\tilde\theta}(\hat\theta,\hat\tau))\right\vert,\right.\\ &\left.\left\vert\lambda_{\min}(\nabla_{\tau\tau}^2U_{\tilde\theta}^i(\hat\theta,\hat\tau)-\nabla_{\tau\tau}^2U_{\tilde\theta}(\hat\theta,\hat\tau))\right\vert\right\}\\
&\leq \rho_{\tau}^i.
\end{align*}

Thus, $\lambda_{\max}(\nabla_{\tau\tau}^2U_{\tilde\theta}^i(\hat\theta,\hat\tau))\leq \lambda_{\max}(\nabla_{\tau\tau}^2U_{\tilde\theta}^i(\hat\theta,\hat\tau)-\nabla_{\tau\tau}^2U_{\tilde\theta}(\hat\theta,\hat\tau))-\alpha\leq \rho_{\tau}^i-\alpha$, where $\rho_{\tau}^i\geq 0$.
Hence, \[\nabla_{\tau\tau}^2U_{\tilde\theta}^i(\hat\theta,\hat\tau)\preceq (\rho_{\tau}^i-\alpha)\mathbf{I}.\]
When $\rho_{\tau}^i\le\alpha$, then $\nabla_{\tau\tau}^2U_{\tilde\theta}^i(\hat\theta,\hat\tau)\preceq 0$.

\noindent Now, since $(\hat\theta,\hat\tau)$ satisfy the equilibrium condition~(\ref{eqn:equilibrium}), thus $\nabla_{\tau\tau}^2U_{\tilde\theta}(\hat\theta,\hat\tau)\prec 0$ and the Schur complement of $\nabla_{\tau\tau}^2U_{\tilde\theta}(\hat\theta,\hat\tau)$ is positive semi-definite. 
Now when $\rho_{\tau}^i<\alpha$, it follows from above that $\nabla_{\tau\tau}^2U_{\tilde\theta}^i(\hat\theta,\hat\tau)\prec 0$, hence $\left(\nabla_{\tau\tau}^2U_{\tilde\theta}^i(\hat\theta,\hat\tau)\right)^{-1}$ exists.
Now, we need to show that Schur complement of $\nabla_{\tau\tau}^2U_{\tilde\theta}^i(\hat\theta,\hat\tau)$ is positive semi-definite.

\noindent Since,
$ S(\hat\theta,\hat\tau):=\left[\nabla_{\theta\theta}^2U_{\tilde\theta}-\nabla_{\theta\tau}^2U_{\tilde\theta}\left(\nabla_{\tau\tau}^2U_{\tilde\theta}\right)^{-1}\nabla_{\tau\theta}^2U_{\tilde\theta}\right](\hat\theta,\hat\tau)\succ 0.$ 

Define $S^i:=\left[\nabla_{\theta\theta}^2U_{\tilde\theta}^i-\nabla_{\theta\tau}^2U_{\tilde\theta}^i\left(\nabla_{\tau\tau}^2U_{\tilde\theta}^i\right)^{-1}\nabla_{\tau\theta}^2U_{\tilde\theta}^i\right]$. We aim to prove $\lambda_{\min}(S^i)\geq0$ to show $S^i$ is positive semidefinite (PSD).

\noindent Analogous to the above part, using corollary to Weyl's theorem, we have
\begin{equation*}
\lambda_{\min}(S^i-S)+\lambda_{\min}(S)\leq   \lambda_{\min}(S^i).
\end{equation*}
Let $\lambda_{\min}(S)=\beta$, where $\beta\ge 0$. Moreover, $\|S^i-S\|_{\sigma}=\max\left\{\left\vert\lambda_{\max}(S^i-S)\right\vert, \left\vert\lambda_{\min}(S^i-S)\right\vert\right\}$, thus $\lambda_{\min}(S^i-S)\geq -\|S^i-S\|_{\sigma}$.

\noindent Thus, we have
\begin{equation*}
-\|(S^i-S)\|_{\sigma}+\beta\leq   \lambda_{\min}(S^i).
\end{equation*}
We can write $S^i-S$ as
   \begin{align*}
        S^i - S &= (\nabla_{\theta\theta}^2 U_{\tilde\theta}^i - \nabla_{\theta\theta}^2 U_{\tilde\theta})- \Big[ (\nabla_{\theta\tau}^2 U_{\tilde\theta}^i - \nabla_{\theta\tau}^2 U_{\tilde\theta}) (\nabla_{\tau\tau}^2 U_{\tilde\theta}^i)^{-1} \nabla_{\tau\theta}^2 U_{\tilde\theta}^i \\
        &\quad + \nabla_{\theta\tau}^2 U_{\tilde\theta} (\nabla_{\tau\tau}^2 U_{\tilde\theta}^i)^{-1} (\nabla_{\tau\theta}^2 U_{\tilde\theta}^i - \nabla_{\tau\theta}^2 U_{\tilde\theta}) + \nabla_{\theta\tau}^2 U_{\tilde\theta} \Big( (\nabla_{\tau\tau}^2 U_{\tilde\theta}^i)^{-1} - (\nabla_{\tau\tau}^2 U_{\tilde\theta})^{-1} \Big) \nabla_{\tau\theta}^2 U_{\tilde\theta} \Big].
    \end{align*}
Hence,
   \begin{align*}
        \|S^i - S\|_{\sigma} &\leq \|\nabla_{\theta\theta}^2 U_{\tilde\theta}^i - \nabla_{\theta\theta}^2 U_{\tilde\theta}\|_{\sigma}+\underbrace{\|(\nabla_{\theta\tau}^2 U_{\tilde\theta}^i - \nabla_{\theta\tau}^2 U_{\tilde\theta}) (\nabla_{\tau\tau}^2 U_{\tilde\theta}^i)^{-1} \nabla_{\tau\theta}^2 U_{\tilde\theta}^i\|_{\sigma}}_{\text{$T_1$}} \\
        &\quad +\underbrace{\|\nabla_{\theta\tau}^2 U_{\tilde\theta} (\nabla_{\tau\tau}^2 U_{\tilde\theta}^i)^{-1} (\nabla_{\tau\theta}^2 U_{\tilde\theta}^i - \nabla_{\tau\theta}^2 U_{\tilde\theta})\|_{\sigma}}_{\text{$T_2$}} \\
        &\quad + \underbrace{\|\nabla_{\theta\tau}^2 U_{\tilde\theta} \Big( (\nabla_{\tau\tau}^2 U_{\tilde\theta}^i)^{-1} - (\nabla_{\tau\tau}^2 U_{\tilde\theta})^{-1} \Big) \nabla_{\tau\theta}^2 U_{\tilde\theta} \|_{\sigma}}_{\text{$T_3$}}.
    \end{align*}
Note that the eigenvalue of $(\nabla_{\tau\tau}^2 U_{\tilde\theta}^i)^{-1}$is $\lambda\left((\nabla_{\tau\tau}^2 U_{\tilde\theta}^i)^{-1}\right)=\frac{1}{\lambda\left(\nabla_{\tau\tau}^2 U_{\tilde\theta}^i\right)}$, hence $\|(\nabla_{\tau\tau}^2 U_{\tilde\theta}^i)^{-1}\|_{\sigma}=\frac{1}{\vert\lambda_{\max}(\nabla_{\tau\tau}^2 U_{\tilde\theta}^i)\vert}$ as $\nabla_{\tau\tau}^2 U_{\tilde\theta}^i$ is negative definite.
By Assumption~\ref{asm:lipschitz}, each client's function $U^i$ is $L$-Lipschitz  thus $\|\nabla^2 U_{\tilde\theta}^i\|_{\sigma}\leq L$. 
Since the Hessian \( \nabla^2 U_{\tilde\theta}^i \) is a block matrix of the form:
\begin{align*} 
    \nabla^2 U_{\tilde\theta}^i =
\begin{bmatrix}
\nabla_{\theta\theta}^2 U_{\tilde\theta}^i & \nabla_{\theta\tau}^2 U_{\tilde\theta}^i \\
\nabla_{\tau\theta}^2 U_{\tilde\theta}^i & \nabla_{\tau\tau}^2 U_{\tilde\theta}^i
\end{bmatrix},
\end{align*}
The norm of Hessian is at least the norm of one of its components
\begin{equation*}
\|\nabla_{\theta\theta}^2 U_{\tilde\theta}^i\|_{\sigma} \leq L, \quad
\|\nabla_{\theta\tau}^2 U_{\tilde\theta}^i\|_{\sigma} \leq L, \quad
\|\nabla_{\tau\theta}^2 U_{\tilde\theta}^i\|_{\sigma} \leq L, \quad
\|\nabla_{\tau\tau}^2 U_{\tilde\theta}^i\|_{\sigma} \leq L.
\end{equation*}
\noindent Thus, each Hessian block is individually bounded by $L$.
Additionally, $U$ is $L$-Lipschitz too.
\noindent Using Assumption~\ref{asm:hessianheterogeneity}, bounding $T_1$
\begin{align*}
    T_1&=\|(\nabla_{\theta\tau}^2 U_{\tilde\theta}^i - \nabla_{\theta\tau}^2 U_{\tilde\theta}) (\nabla_{\tau\tau}^2 U_{\tilde\theta}^i)^{-1} \nabla_{\tau\theta}^2 U_{\tilde\theta}^i\|_{\sigma}\\
    &\leq \|(\nabla_{\theta\tau}^2 U_{\tilde\theta}^i - \nabla_{\theta\tau}^2 U_{\tilde\theta}) \|_{\sigma}\cdot\|(\nabla_{\tau\tau}^2 U_{\tilde\theta}^i)^{-1}\|_{\sigma}\cdot\|\nabla_{\tau\theta}^2 U_{\tilde\theta}^i\|_{\sigma}\\
    &\leq L\rho_{\theta\tau}^i\frac{1}{\vert\lambda_{\max}(\nabla_{\tau\tau}^2 U_{\tilde\theta}^i)\vert}
\end{align*}
Similarly, bounding $T_2$
\begin{align*}
    T_2&=\|\nabla_{\theta\tau}^2 U_{\tilde\theta} (\nabla_{\tau\tau}^2 U_{\tilde\theta}^i)^{-1} (\nabla_{\tau\theta}^2 U_{\tilde\theta}^i - \nabla_{\tau\theta}^2 U_{\tilde\theta})\|_{\sigma}\\
    &\leq \|\nabla_{\theta\tau}^2 U_{\tilde\theta}\|_{\sigma}\cdot \|(\nabla_{\tau\tau}^2 U_{\tilde\theta}^i)^{-1}\|_{\sigma}\cdot\|(\nabla_{\tau\theta}^2 U_{\tilde\theta}^i - \nabla_{\tau\theta}^2 U_{\tilde\theta})\|_{\sigma}\\
    &\leq L\rho_{\tau\theta}^i\frac{1}{\vert\lambda_{\max}(\nabla_{\tau\tau}^2 U_{\tilde\theta}^i)\vert}
\end{align*}
Lastly we bound $T_3$, it is easy to verify that $\mA^{-1}-\mB^{-1}=\mA^{-1}\left(\mB-\mA\right)\mB^{-1}$
\begin{align*}
    T_3&=\|\nabla_{\theta\tau}^2 U_{\tilde\theta} \Big( (\nabla_{\tau\tau}^2 U_{\tilde\theta}^i)^{-1} - (\nabla_{\tau\tau}^2 U_{\tilde\theta})^{-1} \Big) \nabla_{\tau\theta}^2 U_{\tilde\theta} \|_{\sigma}\\
    &\leq \|\nabla_{\theta\tau}^2 U_{\tilde\theta}\|_{\sigma}\cdot\| (\nabla_{\tau\tau}^2 U_{\tilde\theta}^i)^{-1} - (\nabla_{\tau\tau}^2 U_{\tilde\theta})^{-1} \|_{\sigma}\cdot\|\nabla_{\tau\theta}^2 U_{\tilde\theta}\|_{\sigma}\\
     &= \|\nabla_{\theta\tau}^2 U_{\tilde\theta}\|_{\sigma}\cdot\| (\nabla_{\tau\tau}^2 U_{\tilde\theta}^i)^{-1} (\nabla_{\tau\tau}^2 U_{\tilde\theta}-\nabla_{\tau\tau}^2 U_{\tilde\theta}^i) (\nabla_{\tau\tau}^2 U_{\tilde\theta})^{-1} \|_{\sigma}\cdot\|\nabla_{\tau\theta}^2 U_{\tilde\theta}\|_{\sigma}\\
      &\leq \|\nabla_{\theta\tau}^2 U_{\tilde\theta}\|_{\sigma}\cdot\| (\nabla_{\tau\tau}^2 U_{\tilde\theta}^i)^{-1}\|_{\sigma}\cdot \|\nabla_{\tau\tau}^2 U_{\tilde\theta}-\nabla_{\tau\tau}^2 U_{\tilde\theta}^i\|_{\sigma}\cdot
      \|(\nabla_{\tau\tau}^2 U_{\tilde\theta})^{-1} \|_{\sigma}\cdot\|\nabla_{\tau\theta}^2 U_{\tilde\theta}\|_{\sigma}\\
     &\leq L^2\rho_{\tau}^i\frac{1}{\vert\lambda_{\max}(\nabla_{\tau\tau}^2 U_{\tilde\theta}^i)\cdot\lambda_{\max}(\nabla_{\tau\tau}^2 U_{\tilde\theta})\vert}
\end{align*}
Using bounds for $T_1,~T_2$ and $T_3$, we can obtain a bound on $\|S^i-S\|_{\sigma}\leq B^i$, where $B^i=\rho_{\theta}^i+ L\rho_{\theta\tau}^i\frac{1}{\vert\lambda_{\max}(\nabla_{\tau\tau}^2 U_{\tilde\theta}^i)\vert}+L\rho_{\tau\theta}^i\frac{1}{\vert\lambda_{\max}(\nabla_{\tau\tau}^2 U_{\tilde\theta}^i)\vert}+ L^2\rho_{\tau}^i\frac{1}{\vert\lambda_{\max}(\nabla_{\tau\tau}^2 U_{\tilde\theta}^i)\cdot\lambda_{\max}(\nabla_{\tau\tau}^2 U_{\tilde\theta})\vert}$.
Consider $\rho^i=\max\{\rho_{\theta}^i,~\rho_{\tau\theta}^i, ~\rho_{\theta\tau}^i,~\rho_{\tau}^i\}$. Hence, $B^i\le\rho^i\left(1+\frac{L}{\lambda_{\max}(\nabla_{\tau\tau}^2 U_{\tilde\theta}^i)}\left(2+\frac{1}{\lambda_{\max}(\nabla_{\tau\tau}^2 U_{\tilde\theta}}\right)\right)$.
Hence, we obtain
\begin{equation*}
  \lambda_{\min}(S^i)\geq -B^i+\beta,
\end{equation*}
where $\lambda_{\max}(S)=\beta$ such that $\beta\ge 0$.
Hence, we obtain $\left[\nabla_{\theta\theta}^2U_{\tilde\theta}^i-\nabla_{\theta\tau}^2U_{\tilde\theta}^i\left(\nabla_{\tau\tau}^2U_{\tilde\theta}^i\right)^{-1}\nabla_{\tau\theta}^2U_{\tilde\theta}^i\right](\hat\theta,\hat\tau)\succeq(\beta-B^i)I$.
When $\beta\geq B^i$, then  $S^i$ is positive semi-definite. 
When $B^i=0$, hence $\left[\nabla_{\theta\theta}^2U_{\tilde\theta}^i-\nabla_{\theta\tau}^2U_{\tilde\theta}^i\left(\nabla_{\tau\tau}^2U_{\tilde\theta}^i\right)^{-1}\nabla_{\tau\theta}^2U_{\tilde\theta}^i\right](\hat\theta,\hat\tau)\succeq \beta I$, thus it will be positive semidefinite. When $\rho_{\tau}^i<\alpha $ and $\beta>B^i$, then the suuficient condition for $\varepsilon^i$-approximate equilibrium is satisfied. And we obtain the result.

Thus, for each client $i$, any approximation error $\varepsilon^i$ that satisfies:
\begin{equation*}
 \max\{ \zeta_{\theta}^i, \zeta_{\tau}^i\}\le \varepsilon^i\le \min\{\alpha-\rho_{\tau}^i,\beta- B^i \}.  
\end{equation*}
for $\rho_{\tau}^i<\alpha$ and $B^i>\beta$, then $(\hat\theta,\hat\tau)$ is an $\varepsilon^i$-approximate local equilibrium point for client $i$.
\end{proof}

\subsection{Consistency}
\label{ap:consistency}
\subsubsection{Assumptions}
We first state the assumptions that are necessary to establish the consistency of the estimated parameter.

\begin{assumption}[Identification]
\label{asm:identification_consistency}
    $\theta_0$ is the unique $\theta\in\Theta$ such that $\psi(f^i;\theta)=0$ for all $f^i\in \gF$, where $i\in[n]$.
\end{assumption}

\begin{assumption}[Absolutely Star Shaped]
\label{asm:starshaped_consistency}
   For every $f^i\in\gF^i$ and $|c|\leq 1$, we have $cf^i\in \gF^i$.
\end{assumption}

\begin{assumption}[Continuity]
\label{asm:continuity_consistency}
    For any $x$, $g^i(x;\theta),~f^i(x;\tau)$ are continuous in $\theta$ and $\tau$, respectively for all $i\in[N]$.
\end{assumption}

\begin{assumption}[Boundedness]
\label{asm:boundedness_consistency}
    $Y^i$, $\sup_{\theta\in\Theta}|g^i(X;\theta)|,~\sup_{\tau\in\gT}|f^i(Z;\tau)|$ are bounded random variables for all $i\in[N]$.
\end{assumption}

\begin{assumption}[Bounded Complexity]
\label{asm:complexity_consistency}
    $\gF^i$ and $\gG^i$ have bounded Rademacher complexities:
    \begin{equation*}
        \frac{1}{2^{n_i}}
    \sum_{\xi_i\in\{-1,+1\}^{n_i}}\mathbb{E}\sup_{\tau\in\gT}\frac{1}{n_i}\sum_{k=1}^{n_i}\xi_if^i(Z_k;\tau)\rightarrow 0, \quad \frac{1}{2^{n_i}}\sum_{\xi_i\in\{-1,+1\}^{n_i}}\mathbb{E}\sup_{\theta\in\Theta}\frac{1}{n_i}\sum_{k=1}^{n_i}\xi_ig^i(X_k;\theta)\rightarrow 0.
    \end{equation*}
\end{assumption}
\subsubsection{Proof of Theorem~\ref{thm:consistency}}
\begin{theorem}[Restatement of of Theorem~\ref{thm:consistency}]
	Let $\tilde\theta_{n}$ be a data-dependent choice for the federated objective that has a limit in probability. For each client $i\in[N]$, define $        m^i(\theta,\tau,\tilde{\theta}):=f^i(Z^i;\tau)(Y^i-g(X^i;\theta))-\frac{1}{4}f^i(Z^i;\tau)^2(Y^i-g(X^i;\tilde{\theta}))^2$, $ M^i(\theta)=\sup_{\tau \in \gT}\mathbb{E}[m^i(\theta,\tau,\tilde{\theta})]$ and  $\eta^i(\epsilon):=inf_{d(\theta,\theta_0)\ge\epsilon}M^i(\theta)-M^i(\theta_0)$ for every $\epsilon>0$. Let $(\hat\theta_{n},\hat\tau_{n})$ be a solution that satisfies the approximate equilibrium for each of the client $i\in[N]$ as
		\[
		\sup_{\tau\in\gT}  U_{\tilde\theta}^i(\hat\theta_{n},\tau)-\varepsilon^i - o_p(1)\leq  ~U^i_{\tilde\theta}(\hat\theta_{n},\hat\tau_{n})\leq ~\inf_{\theta\in\Theta} {\max_{{\tau\prime : \|\tau\prime-\hat\tau_{n}\|\leq h(\delta)}} U^i_{\tilde\theta}(\theta,\tau\prime)}+\varepsilon^i+o_p(1),
		\]
	for some $\delta_0$, such that for any $\delta\in(0,\delta_0],$ and any $\theta,\tau$ such that $\|\theta-\hat\theta\|\leq \delta$ and $\|\tau-\hat\tau\|\leq\delta$  and a function $h(\delta)\rightarrow0$ as $\delta\rightarrow0$. Then, under similar assumptions as in Assumptions 1 to 5 of \cite{bennett2019deep}, the global solution $\hat\theta_{n}$ is a consistent estimator to the true parameter $\theta_0$, i.e. $\hat\theta_{n}\xrightarrow{p}\theta_0$ when the approximate error $\varepsilon^i<\frac{\eta^i(\epsilon)}{2}$ for every $\epsilon>0$ for each client $i\in[N]$.
\end{theorem}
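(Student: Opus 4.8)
The plan is to run the standard extremum-estimator (M-estimator) consistency argument, in the style of Theorem~2 of \cite{bennett2019deep}, but carried out \emph{per client}. For each $i\in[N]$ I regard $\sup_{\tau\in\gT}U_{\tilde\theta}^i(\theta,\tau)$ as the empirical criterion at $\theta$ and $M^i(\theta)=\sup_{\tau\in\gT}\mathbb{E}[m^i(\theta,\tau,\tilde\theta)]$ as its population limit. The proof rests on three ingredients: that $\theta_0$ is a \emph{well-separated} minimizer of $M^i$, that the empirical criterion converges uniformly to $M^i$, and that the approximate-equilibrium inequalities make the common $(\hat\theta_n,\hat\tau_n)$ a near-minimizer of the empirical criterion. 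Well-separation is immediate from the hypotheses: the identification assumption (Assumption~\ref{asm:identification_consistency}) makes $\theta_0$ the unique minimizer of $M^i$, and the absolutely-star-shaped property (Assumption~\ref{asm:starshaped_consistency}) guarantees $\tau=0\in\gT$ is admissible so that the minimum value is attained; hence $\eta^i(\epsilon)>0$ for every $\epsilon>0$.

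Next I would extract a one-sided near-optimality bound from the two equilibrium inequalities. Bounding the infimum over $\theta$ in the upper inequality by its value at $\theta=\theta_0$, and the constrained inner maximum by the unconstrained supremum, gives
\[
U_{\tilde\theta}^i(\hat\theta_n,\hat\tau_n)\le \sup_{\tau\in\gT}U_{\tilde\theta}^i(\theta_0,\tau)+\varepsilon^i+o_p(1).
\]
The lower inequality reads
\[
\sup_{\tau\in\gT}U_{\tilde\theta}^i(\hat\theta_n,\tau)\le U_{\tilde\theta}^i(\hat\theta_n,\hat\tau_n)+\varepsilon^i+o_p(1),
\]
and chaining the two yields
\[
\sup_{\tau\in\gT}U_{\tilde\theta}^i(\hat\theta_n,\tau)\le \sup_{\tau\in\gT}U_{\tilde\theta}^i(\theta_0,\tau)+2\varepsilon^i+o_p(1),
\]
which says precisely that $\hat\theta_n$ nearly minimizes the empirical criterion, up to a $2\varepsilon^i$ slack.

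I would then invoke uniform convergence. Combining the bounded-Rademacher-complexity assumption (Assumption~\ref{asm:complexity_consistency}) with boundedness (Assumption~\ref{asm:boundedness_consistency}), continuity (Assumption~\ref{asm:continuity_consistency}), and the assumed convergence in probability of $\tilde\theta_n$, a symmetrization argument gives $\sup_{\theta\in\Theta}\left|\sup_{\tau\in\gT}U_{\tilde\theta}^i(\theta,\tau)-M^i(\theta)\right|=o_p(1)$. Feeding this into the chained bound turns it into $M^i(\hat\theta_n)-M^i(\theta_0)\le 2\varepsilon^i+o_p(1)$. To conclude, fix $\epsilon>0$: on the event $\{d(\hat\theta_n,\theta_0)\ge\epsilon\}$ well-separation gives $M^i(\hat\theta_n)-M^i(\theta_0)\ge\eta^i(\epsilon)$, so $\eta^i(\epsilon)\le 2\varepsilon^i+o_p(1)$. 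Since by hypothesis $\varepsilon^i<\eta^i(\epsilon)/2$, the right-hand side is eventually strictly smaller than $\eta^i(\epsilon)$, forcing $\mathbb{P}(d(\hat\theta_n,\theta_0)\ge\epsilon)\to0$, i.e.\ $\hat\theta_n\xrightarrow{p}\theta_0$.

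The main obstacle is the uniform convergence step, since the inner supremum over $\tau\in\gT$ must be controlled uniformly in $\theta$ while the weighting enters through the \emph{random, data-dependent} $\tilde\theta_n$ appearing inside $m^i$. I would address this by bounding the empirical process over the product class $\gF^i\times\gG^i$ via its Rademacher complexity, using boundedness to make the integrand uniformly bounded, and transferring $\tilde\theta_n\xrightarrow{p}\tilde\theta^\ast$ through a continuity/equicontinuity argument so that replacing $\tilde\theta_n$ by its limit costs only $o_p(1)$; the supremum over $\tau$ is handled by noting that $\theta\mapsto\sup_{\tau}(\cdot)$ is nonexpansive, so uniform-in-$(\theta,\tau)$ convergence of the averages passes to the criterion. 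The only genuinely federated point is bookkeeping: the single common estimator $\hat\theta_n$ must meet $\varepsilon^i<\eta^i(\epsilon)/2$ for \emph{every} client simultaneously, which is exactly the stated hypothesis, so the per-client arguments combine into one consistency statement.
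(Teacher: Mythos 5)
Your proposal follows essentially the same route as the paper's proof: the three ingredients you list (well-separation of $\theta_0$ as the minimizer of $M^i$; uniform convergence $\sup_{\theta}\vert M_{n_i}(\theta)-M^i(\theta)\vert\xrightarrow{p}0$ via symmetrization/Rademacher bounds together with a dominated-convergence argument for the data-dependent $\tilde\theta_n$; and the chained equilibrium inequalities yielding $M^i(\hat\theta_{n})-M^i(\theta_0)\le 2\varepsilon^i+o_p(1)$) are exactly the paper's Claims 1--3, combined identically at the end via $\varepsilon^i<\eta^i(\epsilon)/2$. The only place your sketch is lighter than the paper is the well-separation step, where ``immediate'' hides the scaling argument (replacing $f^i$ by $cf^i$ for small $c$, using Assumption~\ref{asm:starshaped_consistency}, so that the linear moment term dominates the quadratic penalty) needed to pass from identification of $\theta_0$ as the unique zero of the moments to its being the unique minimizer of $M^i$, followed by the sequential continuity/contradiction argument that upgrades uniqueness to $\eta^i(\epsilon)>0$.
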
 

\begin{proof}
The proof follows from the result of \citet{bennett2019deep} that established the consistency of the \dgmm estimator. 

First, we define the following terms for the ease of analysis:
    \begin{align*}
        m^i(\theta,\tau,\tilde{\theta})&=f^i(Z^i;\tau)(Y^i-g(X^i;\theta))-\frac{1}{4}f^i(Z^i;\tau)^2(Y^i-g(X^i;\tilde{\theta}))^2\\
        M^i(\theta)&=\sup_{\tau \in \gT}\mathbb{E}[m^i(\theta,\tau,\tilde{\theta})]\\
        M_{n_i}(\theta)&=\sup_{\tau \in \gT}\mathbb{E}_{n_i}[m^i(\theta,\tau,\tilde{\theta}_n)]
    \end{align*}
    Note that $\tilde{\theta}_n$ is a data-dependent sequence for the global model. {Practically, the previous global iterate is used as $\tilde{\theta}$. Thus, we can define for the federated setting $\tilde{\theta}_n=\frac{1}{N}\sum_{i=1}^{N}\tilde{\theta}_{n_i}$.} Let's assume $\tilde{\theta}_n \xrightarrow{p} \tilde{\theta}$.
    
    \textbf{Claim 1:}\label{claim1} $\sup_{\theta}\vert M_{n_i}(\theta)-M^i(\theta)\vert\xrightarrow{p}0.$
    \begin{align*}
        \sup_{\theta}\vert M_{n_i}(\theta)-M^i(\theta)\vert&=  \sup_{\theta}\left\vert \sup_{\tau \in \gT}\mathbb{E}_{n_i}[m^i(\theta,\tau,\tilde{\theta}_{n})]-\sup_{\tau \in \gT}\mathbb{E}[m^i(\theta,\tau,\tilde{\theta})]\right\vert\\
        &\leq\sup_{\theta, \tau} \left\vert \mathbb{E}_{n_i}[m^i(\theta,\tau,\tilde{\theta}_{n})]-\mathbb{E}[m^i(\theta,\tau,\tilde{\theta})]\right\vert\\
        &\leq\sup_{\theta, \tau}\left\vert \mathbb{E}_{n_i}[m^i(\theta,\tau,\tilde{\theta}_{n})]-\mathbb{E}[m^i(\theta,\tau,\tilde{\theta}_n)]\right\vert
        + \sup_{\theta, \tau} \left\vert \mathbb{E}[m^i(\theta,\tau,\tilde{\theta}_{n})]-\mathbb{E}[m^i(\theta,\tau,\tilde{\theta})]\right\vert\\
        &\leq\sup_{\theta_1,\theta_2, \tau} \left\vert \mathbb{E}_{n_i}[m^i(\theta_1,\tau,\theta_{2})]-\mathbb{E}[m^i(\theta_1,\tau,\theta_2)]\right\vert
        + \sup_{\theta, \tau} \left\vert \mathbb{E}[m^i(\theta,\tau,\tilde{\theta}_{n})]-\mathbb{E}[m^i(\theta,\tau,\tilde{\theta})]\right\vert\\
    \end{align*}
    We will now handle the two terms in the above equation separately.

    We will take the first term and call it $B_1$. For $m^i(\theta,\tau,\tilde{\theta}_n)$, we constitute its empirical counterpart $m^i_k(\theta,\tau,\tilde{\theta}_n)=f^i(Z^i_k;\tau)(Y^i_k-g^i(X_k^i;\theta))-\frac{1}{4}f^i(Z^i_k;\tau)^2(Y^i_k-g^i(X_k^i;\tilde{\theta}))^2$ and using ${m^i_k}^{\prime}(\theta,\tau,{\tilde{\theta}}^{\prime}_n)$ with ghost variables $\tilde{\theta}^{\prime}_n$ for symmetrization and $\epsilon_k$ as $k$ i.i.d. Rademacher random variables , we obtain
    \begin{align*}
        \mathbb{E}[B_1]&=\mathbb{E}\left[\sup_{\theta_1,\theta_2, \tau} \left\vert\frac{1}{n_i}\sum_{k=1}^{n_i}m^i_k(\theta_1,\tau,\theta_{2})-\mathbb{E}\left[{m^i_k}^{\prime}(\theta_1,\tau,{{\theta}}^{\prime}_2)\right]\right\vert\right]\\
        &\le\mathbb{E}\left[\sup_{\theta_1,\theta_2, \tau} \left\vert\frac{1}{n_i}\sum_{k=1}^{n_i}\left(m^i_k(\theta_1,\tau,\theta_{2})-{m^i_k}^{\prime}(\theta_1,\tau,{{\theta}}^{\prime}_2)\right)\right\vert\right]\\
        &\le\mathbb{E}\left[\sup_{\theta_1,\theta_2, \tau} \left\vert\frac{1}{n_i}\sum_{k=1}^{n_i}\epsilon
        _k \left(m^i_k(\theta_1,\tau,\theta_{2})-{m^i_k}^{\prime}(\theta_1,\tau,{{\theta}}^{\prime}_2)\right)\right\vert\right]\\
        &\le2\mathbb{E}\left[\sup_{\theta_1,\theta_2, \tau} \left\vert\frac{1}{n_i}\sum_{k=1}^{n_i}\epsilon
        _k m^i_k(\theta_1,\tau,\theta_{2})\right\vert\right]\\
         &\le2\mathbb{E}\left[\sup_{\theta,\tau}\left\vert\frac{1}{n_i}\sum_{k=1}^{n_i}\epsilon
        _k f^i(Z^i_k;\tau)(Y^i_k-g^i(X_k^i;\theta))\right\vert\right]\\
        &\quad +\frac{1}{2}\mathbb{E}\left[\sup_{\theta,\tau} \left\vert \frac{1}{n_i}\sum_{k=1}^{n_i}\epsilon
        _k  f^i(Z^i_k;\tau)^2(Y^i_k-g^i(X_k^i;\tilde{\theta}))^2\right\vert\right]\\
        &\le2\mathbb{E}\left[\sup_{\theta,\tau}\left\vert\frac{1}{n_i}\sum_{k=1}^{n_i}\epsilon
        _k \left(\frac{1}{2}f^i(Z^i_k;\tau)^2+\frac{1}{2}(Y^i_k-g^i(X_k^i;\theta))^2\right)\right\vert\right]\\
        &\quad +\frac{1}{2}\mathbb{E}\left[\sup_{\theta,\tau} \left\vert \frac{1}{n_i}\sum_{k=1}^{n_i}\epsilon
        _k \left(\frac{1}{2} f^i(Z^i_k;\tau)^4+\frac{1}{2}(Y^i_k-g^i(X_k^i;\tilde{\theta}))^4\right)\right\vert\right]\\
        &\le\mathbb{E}\left[\sup_{\theta,\tau} \left\vert\frac{1}{n_i}\sum_{k=1}^{n_i}\epsilon
        _k f^i(Z^i_k;\tau)^2\right\vert\right]+\mathbb{E}\left[\sup_{\theta,\tau} \left\vert\frac{1}{n_i}\sum_{k=1}^{n_i}\epsilon_k(Y^i_k-g^i(X_k^i;\theta))^2\right\vert\right]\\
        &\quad +\frac{1}{4}\mathbb{E}\left[\sup_{\theta,\tau}\left\vert \frac{1}{n_i}\sum_{k=1}^{n_i}\epsilon
        _k  f^i(Z^i_k;\tau)^4\right\vert\right]+\frac{1}{4}\mathbb{E}\left[\sup_{\theta,\tau} \left\vert \frac{1}{n_i}\sum_{k=1}^{n_i}\epsilon
        _k (Y^i_k-g^i(X_k^i;\tilde{\theta}))^4\right\vert\right]
    \end{align*}
    Using boundedness assumption~\ref{asm:boundedness_consistency}, we consider the mapping from $f^i(Z^i_k;\tau)$ and $g^i(X_k^i;\tilde{\theta})$ to the summation terms in the last inequality as Lipschitz functions, hence for any functional class $\gF^i$ and $L$- Lipschitz function $\phi$, $\gR_{n_i}(\phi~\circ~f^i)\leq L\gR_{n_i}(\gF^i),$  where $\gR_{n_i}(\gF^i)$ is the Rademacher complexity of class $\gF^i$. Hence, $\mathbb{E}[B_1]\leq L(\gR_{n_i}(\gG^i)+\gR_{n_i}(\gF^i))$. Using assumption~\ref{asm:complexity_consistency}, $\mathbb{E}[B_1]\rightarrow 0.$ Let $B_1^\prime$ be a modified value of $B$, after changing the $j$-th value of $X^i,Z^i$ and $Y^i$ values, using assumption~\ref{asm:boundedness_consistency} on boundedness, we obtain the bounded difference inequality:
    \begin{align*}
        \sup_{X_{1:n_i}, Z_{1:n_i}, Y_{1:n_i}, X^\prime_j,Z^\prime_j,Y^\prime_j}|B_1-B_1^\prime|
        &\leq        \sup_{\theta_1,\theta_2,\tau, X_{1:n_i}, Z_{1:n_i}, Y_{1:n_i}, X^\prime_j,Z^\prime_j,Y^\prime_j}|\frac{1}{n_i}\left(m_j^i(\theta_1,\tau,\theta_2)-m_j^{i\prime}(\theta_1,\tau,\theta_2)\right)|\\
        \leq\frac{b}{n_i},
    \end{align*}
    where $b$ is some constant. Using McDiarmid's Inequality, we have $P(|B_1-\mathbb{E}[B_1]|\geq \epsilon_0)\leq 2\exp\left(\frac{-2n_i\epsilon_0^2}{c^2}\right)$. And $\mathbb{E}[B_1]\rightarrow 0$, we have $B_1\xrightarrow{p}0$.
    
    Now, we will handle $B_2$. For that
    \begin{align*}
B_2&=\sup_{\theta,\tau}\left\vert\mathbb{E}\left[m^i(\theta,\tau,\tilde{\theta}_n)\right]-\mathbb{E}\left[m^i(\theta,\tau,\tilde{\theta})\right]\right\vert\\
&=\sup_{\theta,\tau}\left\vert\mathbb{E}\left[f^i(Z^i;\tau)(Y^i-g(X^i;\theta))-\frac{1}{4}f^i(Z^i;\tau)^2(Y^i-g(X^i;\tilde{\theta}_n))^2\right]\right.\\
&\qquad-\left.\mathbb{E}\left[f^i(Z^i;\tau)(Y^i-g(X^i;\theta))-\frac{1}{4}f^i(Z^i;\tau)^2(Y^i-g(X^i;\tilde{\theta}))^2\right]\right\vert \\
&=\sup_{\theta,\tau}\frac{1}{4}\left\vert\mathbb{E}\left[f^i(Z^i;\tau)^2(Y^i-g(X^i;\tilde{\theta}_n))^2\right]-\mathbb{E}\left[f^i(Z^i;\tau)^2(Y^i-g(X^i;\tilde{\theta}))^2\right]\right\vert \\
&=\sup_{\theta,\tau}\frac{1}{4}\left\vert\mathbb{E}\left[f^i(Z^i;\tau)^2(Y^i-g(X^i;\tilde{\theta}_n))^2\right]+\mathbb{E}\left[f^i(Z^i;\tau)^2(Y^i-g(X^i;\tilde{\theta}))^2\right]\right.\\
&\qquad-\left.\mathbb{E}\left[f^i(Z^i;\tau)^2(Y^i-g(X^i;\tilde{\theta}))^2\right]-\mathbb{E}\left[f^i(Z^i;\tau)^2(Y^i-g(X^i;\tilde{\theta}))^2\right]\right\vert \\
&\leq\frac{1}{4}\sup_{\tau}\left\vert\mathbb{E}\left[f^i(Z^i;\tau)^2\omega_n\right]\right\vert \\
    \end{align*}
    Here, $\omega_n=\left|(Y^i-g(X^i;\tilde{\theta}_n))^2-(Y^i-g(X^i;\tilde{\theta}))^2\right|$. 
    Due to our assumption, $\tilde{\theta}_n\xrightarrow{p}\tilde{\theta}$, thus $\omega_n\xrightarrow{p}0$ due to Slutsky's and continuous mapping theorem.
    Since, $f^i(Z;\tau)$ is uniformly bounded, thus for some constant $b^\prime>0$, we have
    \begin{align*}
B_2 &\leq\frac{b^\prime}{4}\sup_{\tau}\frac{1}{N}\sum_{i=1}^{N}\left\vert\mathbb{E}\left[\omega_n\right]\right\vert \\
&\leq\frac{b^\prime}{4}\sup_{\tau}\frac{1}{N}\sum_{i=1}^{N}\mathbb{E}\left[\left\vert\omega_n\right\vert\right]
 \end{align*}
 Based on the boundedness assumption, we can verify that $\omega_n$ is bounded, hence using Lebesgue Dominated Convergence Theorem, we can conclude that $\mathbb{E}\left[\left\vert\omega_n\right\vert\right]\rightarrow 0.$

 Thus, using the convergence of $B_1$ and $B_2$, we have $\sup_\theta\vert M_{n_i}(\theta)-M^i(\theta)\vert\xrightarrow{p}0$ for each $i\in [N]$.\\

\noindent\textbf{Claim 2:} for every $\epsilon>0$, we have $\inf_{d(\theta,\theta_0)\geq \epsilon}M^i(\theta)>M^i(\theta_0)$.

$M^i(\theta_0)$ is the unique minimizer of $M^i(\theta)$. By assumption~(\ref{asm:identification_consistency}) and~(\ref{asm:starshaped_consistency}), $\theta_0$ is the unique minimizer of $\sup_{\tau}\mathbb{E}[f^i(Z^i;\tau)(Y^i-g^i(X;\theta))]$ such that $\sup_{\tau}\mathbb{E}[f^i(Z^i;\tau)(Y^i-g^i(X;\theta))]=0$. Thus, any other value of $\theta$ will have at least one $\tau$ such that this expectation is strictly positive. $M(\theta_0)=0$ and $M(\theta_0)=\sup_{\tau}-\frac{1}{4}f^i(Z^i;\tau)^2(Y^i-g^i(X;\theta))^2$, the function whose supremum is being evaluated is non-positive but can be set to zero by assumption~(\ref{asm:starshaped_consistency}) by taking the zero function of $f^i$. Let for any other $\theta^{\prime}\neq\theta_0$, let ${f^i}^{\prime}$ be a function in $\gF^i$ such that $\mathbb{E}[f^i(Z)(Y^i-g^i(X;\theta^{\prime}))]>0$. If we have $\mathbb{E}[{f^i}^{\prime}(Z)^2(Y^i-g^i(X;\tilde\theta))^2]=0$, then $M^i(\theta^\prime)>0$. Else, consider $c{f^i}^{\prime}$ for any $c\in(0,1)$. Using assumption~(\ref{asm:starshaped_consistency}), $c{f^i}^{\prime}\in\gF^i$, thus
\begin{eqnarray*}
    M^i(\theta^\prime)&=\sup_{f^i\in\gF^i}\mathbb{E}\left[f^i(Z^i)(Y^i-g(X^i;\theta^{\prime}))-\frac{1}{4}f^i(Z^i)^2(Y^i-g(X^i;\tilde{\theta}))^2\right]\\
    &\leq c\mathbb{E}\left[{f^i}^{\prime}(Z^i)(Y^i-g(X^i;\theta^{\prime}))\right]-\frac{c^2}{4}\mathbb{E}\left[{f^i}^{\prime}(Z^i)^2(Y^i-g(X^i;\tilde{\theta}))^2\right]
\end{eqnarray*}
This is quadratic in $c$ and is positive when $c$ is sufficiently small, thus $M^i(\theta^\prime)>0$.

We now prove claim 2 using contradiction. Let us assume claim 2 is false, i.e. for some $\epsilon>0$, we have $\inf_{\theta\in \textit{B}(\theta_0,\epsilon)}M^i(\theta)=M^i(\theta_0),$ where $\textit{B}(\theta_0,\epsilon)^c=\{\theta~|\quad d(\theta,\theta_0)\geq \epsilon\}$., since $\theta_0$ is the unique minimizer of $M^i(\theta)$ by assumption~(\ref{asm:identification_consistency}). Thus, there must exist some sequence $(\theta_1,\theta_2,\dots)$ in $\textit{B}(\theta_0,\epsilon)^c$ such that $M^i(\theta_n)\rightarrow M^i(\theta_0)$. By construction, $\textit{B}(\theta_0,\epsilon)^c$ is closed and the corresponding limit parameters $\theta^*=\lim_{n\rightarrow\infty}\theta_n\in\textit{B}(\theta_0,\epsilon)^c$ must satisfy $M^i(\theta^*)=M^i(\theta_0)$ using assumption~(\ref{asm:continuity_consistency}). But $d(\theta^*,\theta_0)\geq \epsilon>0,$ thus $\theta^*\neq\theta_0$. This contradicts that $\theta_0$ is the unique minimizer of $M^i(\theta)$; hence, claim 2 is true.

\noindent\textbf{Claim 3:} For the third part, we know that $\hat\theta_{n}$ satisfies the $\varepsilon^i$- approximate equilibrium condition, given as:

         \begin{equation*}
        \mathbb{E}_{n_i}[m^i(\hat\theta_{n},\tau,\tilde\theta_{n})]-\varepsilon^i\leq  \mathbb{E}_{n_i}[m^i(\hat\theta_{n},\hat\tau_{n},\tilde\theta_{n})]\leq {\max_{{\tau\prime : \|\tau\prime-\hat\tau_{n}\|\leq h(\delta)}} \mathbb{E}_{n_i}[m^i(\theta,\tau\prime,\tilde\theta_{n})]}+\varepsilon^i,
     \end{equation*}
     for a function $h(\delta)\rightarrow0$ as $\delta\rightarrow0$  and some $\delta_0$, such that for any $\delta\in(0,\delta_0],$ and any $\theta,\tau$ such that $\|\theta-\hat\theta\|\leq \delta$ and $\|\tau-\hat\tau\|\leq\delta$.
Assume that this is true with $o_p(1)$, hence
     \begin{equation*}
      \sup_{\tau}  \mathbb{E}_{n_i}[m^i(\hat\theta_{n},\tau,\tilde\theta_{n})]-\varepsilon^i - o_p(1)\leq  \mathbb{E}_{n_i}[m^i(\hat\theta_{n},\hat\tau_{n},\tilde\theta_{n})]\leq \inf_{\theta} {\max_{{\tau\prime : \|\tau\prime-\hat\tau_{n}\|\leq h(\delta)}} \mathbb{E}_{n_i}[m^i(\theta,\tau\prime,\tilde\theta_{n})]}+\varepsilon^i+o_p(1),
     \end{equation*}.

Now, since $M_{n_i}(\hat\theta_{n})= sup_{\tau}  \mathbb{E}_{n_i}[m^i(\hat\theta_{n},\tau,\tilde\theta_{n})]$. Hence,
\begin{equation*}
inf_{\theta} {\max_{{\tau\prime : \|\tau\prime-\hat\tau_{n}\|\leq h(\delta)}} \mathbb{E}_{n_i}[m^i(\theta,\tau\prime,\tilde\theta_{n})}\leq inf_{\theta} sup_{\tau} \mathbb{E}_{n_i}[m^i(\theta,\tau\prime,\tilde\theta_{n})]=inf_{\theta} M_{n_i}(\theta)\leq M_{n_i}(\theta_0)
\end{equation*}
Thus, we have
  \begin{equation*}
    M_{n_i}(\hat\theta_{n})-\varepsilon^i - o_p(1)\leq  \mathbb{E}_{n_i}[m^i(\hat\theta_n,\hat\tau_n,\tilde\theta_{n})]\leq M_{n_i}(\theta_0)+\varepsilon^i +o_p(1).
     \end{equation*}
We have proven all three conditions until now.
From the first and second condition, since $|M_{n_i}(\theta_0)-M^i(\theta_0)|\xrightarrow{p} 0$, hence $ M_{n_i}(\hat\theta_{n})\leq M^i(\theta_0)+2\varepsilon^i + o_p(1)$.
Hence, we obtain
\begin{align*}
M^i(\hat\theta_{n})-M^i(\theta_0)&\leq M^i(\hat\theta_{n})-M_{n_i}(\hat\theta_{n})+2\varepsilon^i + o_p(1)\\
&\leq \sup_{\theta}|M^i(\hat\theta)-M_{n_i}(\hat\theta)|+2\varepsilon^i + o_p(1)\\
& \leq 2\varepsilon^i+ o_p(1)
\end{align*}

Hence, we obtain
\begin{align*}
M^i(\hat\theta_{n})-M^i(\theta_0)-2\varepsilon^i&\leq M^i(\hat\theta_{n})-M_{n_i}(\hat\theta_{n}) + o_p(1)\\
&\leq \sup_{\theta}|M^i(\hat\theta)-M_{n_i}(\hat\theta)| + o_p(1)\\
& \leq o_p(1)
\end{align*}
Since, let $\eta^i(\epsilon):=inf_{d(\theta,\theta_0)\geq \epsilon} M^i(\theta)-M^i(\theta_0)$. Hence, whenever $d(\hat\theta_{n},\theta_0)\geq \epsilon$, we have $M^i(\hat\theta_{n})-M^i(\theta_0)\geq \eta^i(\epsilon)$. Thus, $\mathbb{P}[d(\hat\theta_{n},\theta_0)\geq \epsilon]\leq \mathbb{P}[M^i(\hat\theta_{n})-M^i(\theta_0)\geq \eta^i(\epsilon)]=\mathbb{P}[M^i(\hat\theta_{n})-M^i(\theta_0)-2\varepsilon^i\geq \eta^i(\epsilon)-2\varepsilon^i]$. For every $\epsilon>0$, we have $\eta^i(\epsilon)>0$ from claim 2, and $M^i(\hat\theta_{n})-M^i(\theta_0)-2\varepsilon^i= o_p(1)$. Thus, $\eta^i(\epsilon)-2\varepsilon^i>0$ when $\varepsilon^i<\frac{\eta^i(\epsilon)}{2}$. We have that for every $\epsilon>0$ and $\varepsilon^i<\frac{\eta^i(\epsilon)}{2}$, the RHS probability converges to $0$, thus $d(\hat\theta_{n},\theta_0)=o_p(1)$, hence $\hat\theta_{n}$ converges in probability to $\theta_0$ for each client $i\in[N].$

\end{proof} 

\section{Limit Points of \fgda }

We first discuss the $\gamma$- \fgda flow.
\subsection{\fgda Flow}
\label{ap:fgdaflow}
The \fgda updates can be written as
\begin{align*}
\theta_{t+1}=\theta_{t}-\eta\frac{1}{\gamma}\frac{1}{N}\sum_{i\in[N]}\sum_{r=1}^{R}\left(\nabla_{\theta}U_{\tilde\theta}(\theta_{t}, \tau_{t})+(\nabla_{\theta}U_{\tilde\theta}^i(\theta_{t,r}^i, \tau_{t,r}^i)-\nabla_{\theta}U_{\tilde\theta}^i(\theta_{t}, \tau_{t}))\right.\\
\left.+(\nabla_{\theta}U_{\tilde\theta}^i(\theta_{t}, \tau_{t})-\nabla_{\theta}U_{\tilde\theta}(\theta_{t}, \tau_{t}))\right)\\
\tau_{t+1}=\tau_{t}+\eta\frac{1}{N}\sum_{i\in[N]}\sum_{r=1}^{R}\left(\nabla_{\tau}U_{\tilde\theta}(\theta_{t}, \tau_{t})+(\nabla_{\tau}U_{\tilde\theta}^i(\theta_{t,r}^i, \tau_{t,r}^i)-\nabla_{\tau}U_{\tilde\theta}^i(\theta_{t}, \tau_{t}))\right.\\\left.+(\nabla_{\tau}U_{\tilde\theta}^i(\theta_{t}, \tau_{t})-\nabla_{\tau}U_{\tilde\theta}(\theta_{t}, \tau_{t}))\right)
\end{align*}
Rearranging the terms and taking the continuous-time limit as $\eta \to 0$
\begin{align*}
   \lim_{\eta\rightarrow0}\frac{\theta_{t+1}-\theta_{t}}{\eta}=& \lim_{\eta\rightarrow0}-\frac{1}{\gamma}\frac{1}{N}\sum_{i\in[N]}\sum_{r=1}^{R} \left(\nabla_{\theta}U_{\tilde\theta}(\theta_{t}, \tau_{t})+(\nabla_{\theta}U_{\tilde\theta}^i(\theta_{t,r}^i, \tau_{t,r}^i)-\nabla_{\theta}U_{\tilde\theta}^i(\theta_{t}, \tau_{t}))\right.\\&\left.+(\nabla_{\theta}U_{\tilde\theta}^i(\theta_{t}, \tau_{t})-\nabla_{\theta}U_{\tilde\theta}(\theta_{t}, \tau_{t}))\right)\\
   \lim_{\eta\rightarrow0}\frac{\tau_{t+1}-\tau_{t}}{\eta}=& \lim_{\eta\rightarrow0}\frac{1}{N}\sum_{i\in[N]}\sum_{r=1}^{R}\left(\nabla_{\tau}U_{\tilde\theta}(\theta_{t}, \tau_{t})+(\nabla_{\tau}U_{\tilde\theta}^i(\theta_{t,r}^i, \tau_{t,r}^i)-\nabla_{\tau}U_{\tilde\theta}^i(\theta_{t}, \tau_{t}))\right.\\&\left.+(\nabla_{\tau}U_{\tilde\theta}^i(\theta_{t}, \tau_{t})-\nabla_{\tau}U_{\tilde\theta}(\theta_{t}, \tau_{t}))\right)
\end{align*}
We obtain the gradient flow equations as
\begin{align}
    \frac{d\theta}{dt} &= - \frac{R}{\gamma} \frac{1}{N} \sum_{i \in [N]} \left( \nabla_{\theta} U_{\tilde\theta}(\theta(t), \tau(t)) \right) - \frac{R}{\gamma} \frac{1}{N} \sum_{i \in [N]} \left( \nabla_{\theta} U_{\tilde\theta}^i(\theta^i(t), \tau^i(t)) - \nabla_{\theta} U_{\tilde\theta}^i(\theta(t), \tau(t))\right)\nonumber\\
    & \quad-\frac{R}{\gamma} \frac{1}{N} \sum_{i \in [N]} \left(\nabla_{\theta}U_{\tilde\theta}^i(\theta(t), \tau(t))-\nabla_{\theta}U_{\tilde\theta}(\theta(t), \tau(t)))\right), \label{eq:grad_flow_theta}\\
    \frac{d\tau}{dt} &= R\frac{1}{N} \sum_{i \in [N]} \left( \nabla_{\tau} U_{\tilde\theta}(\theta(t), \tau(t)) \right) + R\frac{1}{N} \sum_{i \in [N]} \left( \nabla_{\tau} U_{\tilde\theta}^i(\theta^i(t), \tau^i(t)) - \nabla_{\tau} U_{\tilde\theta}^i(\theta(t), \tau(t)) \right)\nonumber\\
    &\quad+R\frac{1}{N} \sum_{i \in [N]}\left(\nabla_{\tau}U_{\tilde\theta}^i(\theta(t), \tau(t))-\nabla_{\tau}U_{\tilde\theta}(\theta(t), \tau(t))\right). \label{eq:grad_flow_tau}
\end{align}

Using Assumption~\ref{asm:gradientheterogeneity}
\begin{align*}
   \left \|\frac{R}{\gamma} \frac{1}{N} \sum_{i \in [N]}(\nabla_{\theta}U_{\tilde\theta}^i(\theta(t), \tau(t))-\nabla_{\theta}U_{\tilde\theta}(\theta(t), \tau(t)))\right\|\leq \frac{R}{\gamma}  \zeta_{\theta}\\
     \left\|R\frac{1}{N} \sum_{i \in [N]}(\nabla_{\tau}U_{\tilde\theta}^i(\theta(t), \tau(t))-\nabla_{\tau}U_{\tilde\theta}(\theta(t), \tau(t)))\right\|\leq R\zeta_{\tau}
\end{align*}

Thus,
\begin{align*}
\frac{R}{\gamma} \frac{1}{N} \sum_{i \in [N]} (\nabla_{\theta}U_{\tilde\theta}^i(\theta(t), \tau(t))-\nabla_{\theta}U_{\tilde\theta}(\theta(t), \tau(t)))&=\gO\left(\frac{R}{\gamma}  \zeta_{\theta}\right)\\
   R \frac{1}{N} \sum_{i \in [N]} (\nabla_{\tau}U_{\tilde\theta}^i(\theta(t), \tau(t))-\nabla_{\tau}U_{\tilde\theta}(\theta(t), \tau(t)))&=\gO(R\zeta_{\tau})
\end{align*}
Since $U_{\tilde\theta}^i$ is Lipschitz smooth by assumption~\ref{asm:lipschitz}, we have
\begin{align*}
    \left\| \frac{R}{\gamma} \frac{1}{N} \sum_{i \in [N]}( \nabla_{\theta} U_{\tilde\theta}^i(\theta^i(t), \tau^i(t)) - \nabla_{\theta} U_{\tilde\theta}^i(\theta(t), \tau(t)))\right\|
    &\leq L\frac{R}{\gamma} \frac{1}{N} \sum_{i \in [N]}\|(\theta^i(t),\tau^i(t))-(\theta(t),\tau(t)\|,\\
    \left\|R\frac{1}{N} \sum_{i \in [N]} (\nabla_{\tau} U_{\tilde\theta}^i(\theta^i(t), \tau^i(t)) - \nabla_{\tau} U_{\tilde\theta}^i(\theta(t), \tau(t))) \right\| &\leq L R\frac{1}{N} \sum_{i \in [N]} \|(\theta^i(t),\tau^i(t))-(\theta(t),\tau(t))\|. 
\end{align*}

Substituting these bounds into Equations (\ref{eq:grad_flow_theta}) and (\ref{eq:grad_flow_tau}), we obtain

\begin{align*}
   \frac{R}{\gamma} \frac{1}{N} \sum_{i \in [N]}( \nabla_{\theta} U_{\tilde\theta}^i(\theta^i(t), \tau^i(t)) - \nabla_{\theta} U_{\tilde\theta}^i(\theta, \tau)) &= \mathcal{O}\left( L\frac{R}{\gamma} \frac{1}{N} \sum_{i \in [N]}\|(\theta^i(t),\tau^i(t))-(\theta(t),\tau(t)\| \right),\\
    R\frac{1}{N} \sum_{i \in [N]} (\nabla_{\tau} U_{\tilde\theta}^i(\theta^i(t), \tau^i(t)) - \nabla_{\tau} U_{\tilde\theta}^i(\theta, \tau))  &= \mathcal{O}\left(L R\frac{1}{N} \sum_{i \in [N]} \|(\theta^i(t),\tau^i(t))-(\theta(t),\tau(t))\| \right).
\end{align*}

Since the local update follows

\begin{align*}
    \theta^i(t) &= \theta(t)- \frac{\eta}{\gamma} \sum_{j=1}^{R} \nabla_{\theta} U_{\tilde\theta}^i(\theta_{j}^i(t), \tau_{j}^i(t)), \\
    \tau^i(t) &= \tau(t) +  \eta \sum_{j=1}^{R} \nabla_{\tau} U_{\tilde\theta}^i(\theta_{j}^i(t), \tau_{j}^i(t)),
\end{align*}

Using bounded gradient assumption, i.e. $\|\nabla_{\theta}U^i_{\tilde\theta}(\theta,\tau))\|^2\leq G_{\theta}$ and $\|\nabla_{\tau}U^i_{\tilde\theta}(\theta,\tau))\|^2\leq G_{\tau}$ for all $i$, as $\eta \to 0$ and $R$ is fixed and finite, the deviation $\|(\theta^i(t),\tau^i(t))-(\theta(t),\tau(t))\|$ vanish, leading to

\begin{align*}
    \frac{d\theta}{dt} &= - \frac{1}{\gamma} {R}\nabla_{\theta} U_{\tilde\theta}(\theta(t), \tau(t))+ \gO\left(\frac{R}{\gamma}  \zeta_{\theta}\right), \\
    \frac{d\tau}{dt} &=  R \nabla_{\tau} U_{\tilde\theta}(\theta(t), \tau(t))+ \gO(R\zeta_{\tau}).
\end{align*}

\subsection{Proof of Theorem~\ref{thm:inftyfgda}}
\label{ap:limitpoint}
\begin{proof}
Let $\mA=\nabla_{\theta\theta}^2 U_{\tilde\theta}(\theta, \tau) , \mB= \nabla_{\tau\tau}^2 U_{\tilde\theta}(\theta, \tau)$ and $\mC=\nabla_{\theta\tau}^2 U_{\tilde\theta}(\theta, \tau).$ Consider $\epsilon=\frac{1}{\gamma}$, thus for sufficiently small $\epsilon$ (hence a large $\gamma$), the Jacobian $\mJ$ of \fgda for a point $(\theta,\tau)$ is given as:
    \begin{equation*}
        \mJ_{\epsilon}=R\begin{pmatrix}
            -{\epsilon}\mA &-{\epsilon}\mC\\
            \mC^\top & \mB   
        \end{pmatrix}.
    \end{equation*}

Using Lemma~\ref{lem:rootsorder}, $ \mJ_{\epsilon}$ has $d_1 + d_2$ complex eigenvalues $\{\emLambda_j\}_{j=1}^{d_1+d_2}$ such that
\begin{equation}
  \begin{aligned}
        |\emLambda_j+\epsilon\mu_j|=o(\epsilon)&\qquad 1\leq j\leq d_1\\
        |\emLambda_{j+d_1}-\nu_j|=o(1),&\qquad 1\leq j\leq d_2,
    \end{aligned}
    \label{eqn:eigenvalue_block}
    \end{equation}
    where $\{\mu_j\}_{j=1}^{d_1}$ and $\{\nu_j\}_{j=1}^{d_2}$ are the eigenvalues of matrices $R(\mA-\mC\mB^{-1}\mC^\top)$ and $R\mB$ respectively.
    
    We now prove the theorem statement:
    
    $\mathcal{L}\text{oc}\gM\text{inimax}\subset\underline{\infty-\mathcal{FGDA}}\subset\overline{\infty-\mathcal{FGDA}}\subset\mathcal{L}\text{oc}\gM\text{inimax}\cup\{(\theta,\tau)|(\theta,\tau)\text{ is stationary and }\nabla_{\tau\tau}^2U_{\tilde\theta}(\theta, \tau)\text{ is degenerate}\}.$

    By definition of $\limsup$ and $\liminf$, we know that $\underline{\infty-\mathcal{FGDA}}\subset\overline{\infty-\mathcal{FGDA}}$.

    Now we show $\mathcal{L}\text{oc}\gM\text{inimax}\subset\underline{\infty-\mathcal{FGDA}}$. Consider a strict local minimax point $(\theta,\tau)$, then by sufficient condition it follows that:
    \begin{equation*}
        \mB\prec 0,\quad\text{and }\quad \mA-\mC\mB^{-1}\mC^\top\succ 0. 
    \end{equation*}
    Thus, $R\mB\prec 0,\text{ and }R(\mA-\mC\mB^{-1}\mC^\top)\succ 0$, where $R$ is always positive. Hence, $\{\nu_j\}_{j=1}^{d_1}<0$ and $\{\mu_j\}_{j=1}^{d_2}<0$. Using equations~\ref{eqn:eigenvalue_block}, for some small $\epsilon_0<\epsilon$,  $\operatorname{Re}(\emLambda_j) < 0 \text{ for all } j.$ Thus, $(\theta,\tau)$ is a strict linearly stable point of $\frac{1}{\epsilon}$-\fgda.

    Now, we show $\overline{\infty-\mathcal{FGDA}}\subset\mathcal{L}\text{oc}\gM\text{inimax}\cup\{(\theta,\tau)|(\theta,\tau)\text{ is stationary and }\nabla_{\tau\tau}^2U_{\tilde\theta}(\theta, \tau)\text{ is degenerate}\}.$
    Consider  $(\theta,\tau)$ a strict linearly stable point of $\frac{1}{\epsilon}$-\fgda, such that for some small $\epsilon$,  $\operatorname{Re}(\emLambda_j) < 0 \text{ for all } j.$ By equation~\ref{eqn:eigenvalue_block}, assuming $B^{-1}$ exists
    \begin{equation*}
        R\mB\prec 0,\quad\text{and }\quad R(\mA-\mC\mB^{-1}\mC^\top)\succeq 0. 
    \end{equation*}
    Since, $R$ is positive, thus $\mB\prec 0,\text{ and }\mA-\mC\mB^{-1}\mC^\top\succeq 0.$ Let's assume $\mA-\mC\mB^{-1}\mC^\top$ has $0$ as an eigenvalue. Thus, there exists a unit eigenvector $\vw$ such that $\mA-\mC\mB^{-1}\mC^\top\vw=0$. Then,
    \begin{align*}
            \mJ_{\epsilon}\cdot(\vw,-B^{-1}C^\top\vw)^\top=
       R \begin{pmatrix}
       -\epsilon\mA & -\epsilon \mC\\
       \mC^\top & \mB
    \end{pmatrix}
    \cdot
    \begin{pmatrix}
        \vw \\-B^{-1}C^\top\vw
    \end{pmatrix}=\mathbf{0}.
    \end{align*}
    Thus, $\mJ_{\epsilon}$ has $0$ as its eigenvalue, which is a contradiction because for strict linearly stable point $\operatorname{Re}(\emLambda_j) < 0 \text{ for all } j$. Thus, $\mA-\mC\mB^{-1}\mC^\top\succ0$. Hence, $(\theta,\tau)$ is a strict local minimax point.

    Let $G: \mathbb{R}^d \times \mathbb{R}^k \to \mathbb{R}$ be the function defined as:
$G(\theta, \tau) = \det(\nabla_{\tau\tau}^2 U_{\tilde\theta}(\theta, \tau)).$
Let's assume that $\nabla_{\tau\tau}^2 U_{\tilde\theta}(\theta, \tau)$ is smooth, thus the determinant function is a polynomial in the entries of the Hessian, which implies that $G$ is a smooth function. Since $\nabla_{\tau\tau}^2 U_{\tilde\theta}(\theta, \tau)=0$ implies at least one eigenvalue of $\nabla_{\tau\tau}^2 U_{\tilde\theta}(\theta, \tau)$ is zero, thus $\det(\nabla_{\tau\tau}^2 U_{\tilde\theta}(\theta, \tau))=0$.

We aim to show that the set 
$$\gA = \{ (\theta, \tau) \mid (\theta, \tau) \text{ is stationary and } \det(\nabla_{\tau\tau}^2 U_{\tilde\theta}(\theta, \tau)) = 0 \}$$
has measure zero in $\mathbb{R}^{d} \times \mathbb{R}^k$.

A point $q \in \mathbb{R}^{d} \times \mathbb{R}^k$ is a \emph{regular value} of $G$ if for every $(\theta, \tau) \in G^{-1}(q)$, the differential $dG(\theta, \tau)$ is surjective. Otherwise, $q$ is a \emph{critical value}.

The differential of $G$ is given by: $\nabla G(\theta, \tau) = \text{Tr} \left( \text{Adj}(\nabla_{\tau\tau}^2 U_{\tilde\theta}) \cdot \nabla (\nabla_{\tau\tau}^2 U_{\tilde\theta}) \right).$
If $\det(\nabla_{\tau\tau}^2 U_{\tilde\theta}(\theta, \tau)) = 0$, then the Hessian $\nabla_{\tau\tau}^2 U_{\tilde\theta}$ is singular. This causes its adjugate matrix to lose rank, leading to a degeneracy in $\nabla G(\theta, \tau)$, making $dG(\theta, \tau)$ \emph{not surjective}.

Thus, every $(\theta, \tau)$ satisfying $G(\theta, \tau) = 0$ is a critical point of $G$, meaning that $0$ is a \emph{critical value} of $G$.

By Sard’s theorem, the set of critical values of a smooth function has measure zero in the codomain. Since $G$ is smooth, the set of critical values of $G$ in $\mathbb{R}$ has measure zero. In particular, since $0$ is a critical value of $G$, the set:
$G^{-1}(0) = \{ (\theta, \tau) \mid \det(\nabla_{\tau\tau}^2 U_{\tilde\theta}(\theta, \tau)) = 0 \}$
has measure zero in $\mathbb{R}^{d+k}$.

Since the set of degenerate $\nabla_{\tau\tau}^2 U_{\tilde\theta}(\theta, \tau)$ is precisely $G^{-1}(0)$, we conclude that
$\text{Lebesgue measure}(\gA) = 0.$ Thus, the set of stationary points where the Hessian $\nabla_{\tau\tau}^2 U_{\tilde\theta}(\theta, \tau)$ is singular has measure zero in $\mathbb{R}^{d}\times \mathbb{R}^k$.
    \end{proof}
\begin{lemma}\cite{mishael_lemma}
\label{rootsmishael}
\textit{Given a polynomial} 
$
p_n(z) := \sum_{k=0}^{n} a_k z^k,
$
\textit{where} $ a_n \neq 0 $, \textit{an integer} $ m \geq n $ \textit{and a number} $ \epsilon > 0 $, 
\textit{there exists a number} $ \delta > 0 $ \textit{such that whenever the} $ m+1 $ \textit{complex numbers} $ b_k $, $ 0 \leq k \leq m $, 
\textit{satisfy the inequalities}
\begin{equation*}
|b_k - a_k| < \delta \quad \text{for } 0 \leq k \leq n, \quad \text{and} \quad |b_k| < \delta \quad \text{for } n+1 \leq k \leq m,
\end{equation*}
then the roots $ \beta_k $, $ 1 \leq k \leq m $, of the polynomial
$
q_m(z) := \sum_{k=0}^{m} b_k z^k
$
can be labeled in such a way as to satisfy, with respect to the zeros $ \alpha_k $, $ 1 \leq k \leq n $, \textit{of} $ p_n(z) $, the inequalities
\begin{equation*}
|\beta_k - \alpha_k| < \epsilon \quad \text{for } 1 \leq k \leq n, \quad \text{and} \quad |\beta_k| > 1/\epsilon \quad \text{for } n+1 \leq k \leq m.
\end{equation*}
\end{lemma}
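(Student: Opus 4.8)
The plan is to prove this as a Rouché/argument-principle statement, treating separately the $n$ roots that stay bounded and the $m-n$ roots that escape to infinity as $\delta\to0$; this is essentially a form of the classical continuity-of-roots theorem. First I would observe that the conclusion is monotone in $\epsilon$: if some $\delta$ works for a value $\epsilon'\le\epsilon$, the same $\delta$ works for $\epsilon$, because $|\beta_k-\alpha_k|<\epsilon'<\epsilon$ and $|\beta_k|>1/\epsilon'>1/\epsilon$. Hence I may assume $\epsilon$ is as small as convenient — small enough that $\max_k|\alpha_k|+\epsilon<1/\epsilon$, that the closed disks of radius $\epsilon$ about the distinct zeros of $p_n$ are pairwise disjoint, and (for the second part) that a certain auxiliary polynomial stays bounded away from zero on $|w|\le\epsilon$.

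For the bounded roots, let $\zeta_1,\dots,\zeta_s$ be the distinct zeros of $p_n$ with multiplicities $m_1,\dots,m_s$ summing to $n$, and set $R:=\max_k|\alpha_k|+\epsilon$. On each circle $|z-\zeta_j|=\epsilon$ the polynomial $p_n$ does not vanish, so $|p_n|\ge c$ for some $c>0$ on the union of these circles. On the disk $|z|\le R$ I bound $|q_m(z)-p_n(z)|\le\delta\sum_{k=0}^{m}R^k=:\delta K$, using $|b_k-a_k|<\delta$ for $k\le n$ and $|b_k|<\delta$ for $k>n$. Choosing $\delta<c/K$ forces $|q_m-p_n|<|p_n|$ on each circle, so Rouché's theorem gives exactly $m_j$ zeros of $q_m$ inside $|z-\zeta_j|<\epsilon$; summing over $j$ yields exactly $n$ zeros of $q_m$ in $\bigcup_j D(\zeta_j,\epsilon)$, each within $\epsilon$ of some $\alpha_k$. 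Labeling these (matching multiplicities) as $\beta_1,\dots,\beta_n$ gives $|\beta_k-\alpha_k|<\epsilon$ for $1\le k\le n$.

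For the escaping roots I pass to the reciprocal polynomial $Q(w):=w^m q_m(1/w)=\sum_{j=0}^{m}b_{m-j}w^j$, whose nonzero zeros are exactly the reciprocals of the nonzero roots of $q_m$, while a zero at $w=0$ records a root of $q_m$ ``at infinity'' (which occurs precisely when leading $b_k$ vanish and $\deg q_m<m$). I split $Q(w)=S(w)+w^{m-n}\hat p(w)$, where $S(w)=\sum_{j=0}^{m-n-1}b_{m-j}w^j$ collects the small coefficients $b_{n+1},\dots,b_m$ (each of modulus $<\delta$) and $\hat p(w)=\sum_{i=0}^{n}b_{n-i}w^i$ satisfies $\hat p(0)=b_n$ with $|b_n|\ge|a_n|-\delta$. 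Taking $\delta<|a_n|/2$ and $\epsilon$ small enough (independent of $\delta$) gives $|\hat p|\ge|a_n|/4$ on $|w|\le\epsilon$, and then choosing $\delta$ small enough that $\sup_{|w|=\epsilon}|S(w)|\le\delta\sum_{j=0}^{m-n-1}\epsilon^j<\epsilon^{m-n}|a_n|/4\le|w^{m-n}\hat p(w)|$ on $|w|=\epsilon$, Rouché's theorem shows $Q$ has the same number of zeros in $|w|<\epsilon$ as $w^{m-n}\hat p(w)$, namely exactly $m-n$ (the order of the zero at $w=0$, since $\hat p$ does not vanish on the disk). Translating back, $q_m$ has exactly $m-n$ roots (finite or at infinity) of modulus $>1/\epsilon$; labeling them $\beta_{n+1},\dots,\beta_m$ gives $|\beta_k|>1/\epsilon$.

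Finally I would take $\delta$ to be the minimum of the two thresholds so both Rouché arguments hold at once. Since the bounded roots lie in $|z|\le R<1/\epsilon$ and the escaping roots in $|z|>1/\epsilon$, the two families are disjoint and their counts add to $n+(m-n)=m$, so every root of $q_m$ (with the at-infinity convention in the degenerate case) is counted exactly once, yielding the labeling. I expect the main obstacle to be the second part: making the escape-to-infinity analysis rigorous through the reciprocal polynomial while correctly bookkeeping the degenerate case $\deg q_m<m$, where some of the $m-n$ ``large'' roots are genuinely at infinity. Getting the comparison $|S|<|w^{m-n}\hat p|$ to hold uniformly on $|w|=\epsilon$, together with the requirement that $\hat p$ not vanish on the small disk, is the delicate point and is exactly what forces the preliminary reduction to small $\epsilon$.
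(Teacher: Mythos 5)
The paper does not prove this lemma at all: it is imported verbatim from the cited reference (\cite{mishael_lemma}) as a known continuity-of-roots result, and is used only as an ingredient in the (also omitted) proof of Lemma~\ref{lem:rootsorder}. So there is no in-paper argument to compare against; what can be said is whether your blind proof is sound, and it is. Your two-regime Rouch\'e argument is the standard proof of this classical statement, and your quantifier bookkeeping is in the right order throughout: the preliminary reduction to small $\epsilon$ depends only on $p_n$ (disjointness of the disks $D(\zeta_j,\epsilon)$, the separation $\max_k|\alpha_k|+\epsilon<1/\epsilon$, and the lower bound $|\hat p|\ge|a_n|/4$ on $|w|\le\epsilon$ under the standing constraint $\delta<|a_n|/2$), after which both $\delta$-thresholds are chosen with $\epsilon$ fixed, so there is no circularity. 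The delicate point you flagged — the degenerate case $\deg q_m<m$ — is handled correctly: since $\delta<|a_n|/2$ forces $b_n\neq 0$, one has $\deg q_m\ge n$, so the zero of $Q$ at $w=0$ has order $m-\deg q_m\le m-n$, and your Rouch\'e comparison of $Q$ with $w^{m-n}\hat p$ on $|w|=\epsilon$ counts exactly $m-n$ zeros of $Q$ in the disk, the origin's multiplicity recording the roots at infinity and the nonzero zeros giving finite roots of modulus exceeding $1/\epsilon$. The final counting step ($n$ roots captured in the small disks, $m-n$ of large modulus, families disjoint because $R<1/\epsilon$, totals summing to $m$) is exactly what rules out stray roots in the intermediate annulus. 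The only cosmetic remark: in the first Rouch\'e step it is worth stating explicitly that the bound $|z|^k\le R^k$ on $|z|\le R$ holds for all $k\ge 0$ regardless of whether $R\ge 1$, so $K=\sum_{k=0}^m R^k$ is a valid uniform bound; this is true as written and needs no repair.
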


\begin{lemma}
\label{lem:rootsorder}
    For any symmetric matrix $\mA\in\mathbb{R}^{d_1 \times d_1},~\mB\in\mathbb{R}^{d_2 \times d_2}$, any rectangular matrix $\mC\in\mathbb{R}^{d_1 \times d_2}$ and a scalar $R$, assume that $\mB$ is non-degenerate. Then, matrix
    \begin{equation*}
   R \begin{pmatrix}
       -\epsilon\mA & -\epsilon \mC\\
       \mC^\top & \mB
    \end{pmatrix}
    \end{equation*}
    has $d_1 + d_2$ complex eigenvalues $\{\emLambda_j\}_{j=1}^{d_1 + d_2}$ with following form for sufficiently small $\epsilon$:
    \begin{align*}
        |\emLambda_j+\epsilon\mu_j|=o(\epsilon)&\qquad 1\leq j\leq d_1\\
        |\emLambda_{j+d_1}-\nu_j|=o(1),&\qquad 1\leq j\leq d_2,
    \end{align*}
    where $\{\frac{1}{R}\mu_j\}_{j=1}^{d_1}$ and $\{\frac{1}{R}\nu_j\}_{j=1}^{d_2}$ are the eigenvalues of matrices $\mA-\mC\mB^{-1}\mC^\top$ and $\mB$ respectively.
\end{lemma}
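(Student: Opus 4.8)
The plan is to reduce the statement to a two-scale eigenvalue perturbation problem for a single block matrix and then locate the two clusters of eigenvalues separately through the characteristic polynomial, with the root-continuity result of Lemma~\ref{rootsmishael} doing the decisive bookkeeping. First I would factor out the scalar $R$: writing $\mN_\epsilon := \begin{pmatrix} -\epsilon\mA & -\epsilon\mC \\ \mC^\top & \mB\end{pmatrix}$, the matrix in question is $R\mN_\epsilon$, whose eigenvalues are exactly $R$ times those of $\mN_\epsilon$. Since $\mu_j = R\lambda_j(\mS)$ with $\mS := \mA-\mC\mB^{-1}\mC^\top$ and $\nu_j = R\lambda_j(\mB)$, it suffices to show that $\mN_\epsilon$ has $d_1$ eigenvalues of the form $-\epsilon\lambda_j(\mS)+o(\epsilon)$ and $d_2$ eigenvalues of the form $\lambda_j(\mB)+o(1)$; multiplying by $R$ then rescales the $o(\epsilon)$ and $o(1)$ remainders and reproduces the claimed bounds. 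At $\epsilon=0$ the matrix $\mN_0$ is block lower-triangular with spectrum $\{0\}$ of multiplicity $d_1$ together with $\mathrm{spec}(\mB)$, which is the source of the two scales.

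For the $d_1$ small eigenvalues I would rescale. Substituting $\lambda=\epsilon\sigma$ into $q_\epsilon(\lambda):=\det(\lambda\mI-\mN_\epsilon)$ and taking the Schur complement with respect to the block $\epsilon\sigma\mI-\mB$ (invertible for small $\epsilon$ since $\mB$ is nondegenerate) gives, as a polynomial identity in $\sigma$,
\[ \tilde q_\epsilon(\sigma):=\epsilon^{-d_1}q_\epsilon(\epsilon\sigma)=\det(\epsilon\sigma\mI_{d_2}-\mB)\,\det\!\big(\sigma\mI_{d_1}+\mA+\mC(\epsilon\sigma\mI_{d_2}-\mB)^{-1}\mC^\top\big). \]
By its definition $\tilde q_\epsilon$ is genuinely a polynomial in $\sigma$ of degree $d_1+d_2$. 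On any bounded $\sigma$-set the factor $(\epsilon\sigma\mI-\mB)^{-1}\to-\mB^{-1}$, so $\tilde q_\epsilon(\sigma)$ converges pointwise to $(-1)^{d_2}\det(\mB)\det(\sigma\mI+\mS)$, a degree-$d_1$ polynomial with nonzero leading coefficient. Pointwise convergence of fixed-degree-bounded polynomials forces coefficient convergence: the coefficients of $\sigma^0,\dots,\sigma^{d_1}$ converge to those of the limit, while those of $\sigma^{d_1+1},\dots,\sigma^{d_1+d_2}$ vanish. This is precisely the hypothesis of Lemma~\ref{rootsmishael} with $n=d_1$ and $m=d_1+d_2$; taking its tolerance arbitrarily small yields $d_1$ roots $\sigma_j(\epsilon)\to-\lambda_j(\mS)$ and $d_2$ roots with $|\sigma|\to\infty$. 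The bounded roots give $d_1$ eigenvalues $\lambda_j=\epsilon\sigma_j(\epsilon)=-\epsilon\lambda_j(\mS)+o(\epsilon)$ of $\mN_\epsilon$.

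For the $d_2$ large eigenvalues I would argue directly: $q_\epsilon\to q_0(\lambda)=\lambda^{d_1}\det(\lambda\mI-\mB)$ coefficient-wise, and by continuity of polynomial roots exactly $d_2$ roots of $q_\epsilon$ converge to the nonzero eigenvalues of $\mB$, giving $\lambda_{j+d_1}=\lambda_j(\mB)+o(1)$. For fixed $\epsilon\neq0$ the map $\lambda=\epsilon\sigma$ is a bijection between the roots of $q_\epsilon$ and those of $\tilde q_\epsilon$, so these $d_2$ roots are exactly the ones that escaped to infinity in the $\sigma$-rescaling (as $\lambda_j(\mB)\neq0$ forces $\sigma=\lambda/\epsilon\to\infty$); the two analyses therefore partition the full spectrum of $\mN_\epsilon$ without overlap. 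Multiplying every eigenvalue by $R$ converts the bounds into $|\emLambda_j+\epsilon\mu_j|=o(\epsilon)$ for $1\le j\le d_1$ and $|\emLambda_{j+d_1}-\nu_j|=o(1)$ for $1\le j\le d_2$, completing the proof.

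The main obstacle is the rescaled coefficient bookkeeping in the second step: one must promote the Schur-complement formula from an identity valid off the poles to a true polynomial identity in $\sigma$, and then justify that $\tilde q_\epsilon$, despite being defined with the singular factor $\epsilon^{-d_1}$, has coefficients converging to those of a degree-$d_1$ polynomial rather than blowing up. The clean way around the direct (and delicate) coefficient computation is to exploit that $\tilde q_\epsilon$ is a polynomial of bounded degree and converges pointwise on bounded $\sigma$-sets—where the poles of $(\epsilon\sigma\mI-\mB)^{-1}$ have been pushed out to $\sigma=\Theta(1/\epsilon)$—so coefficient convergence is automatic. Once this is secured, Lemma~\ref{rootsmishael} performs the separation and localization of the two root clusters essentially for free.
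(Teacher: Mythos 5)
Your proposal is correct and follows essentially the same route the paper intends: the rescaling $\lambda=\epsilon\sigma$, the Schur-complement factorization of the characteristic polynomial, and the root-perturbation Lemma~\ref{rootsmishael}, which is exactly the argument of \cite{pmlr-v119-jin20e} that the paper's one-line proof defers to (with the scalar $R$ handled by the same trivial rescaling of the spectrum). Your observation that coefficient convergence of $\tilde q_\epsilon$ follows automatically from pointwise convergence of polynomials of bounded degree is a clean way to justify the one step the cited argument leaves implicit.
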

The proof follows from Lemma~\ref{rootsmishael} by a similar argument as in \cite{pmlr-v119-jin20e}  with $\{\mu_j\}_{j=1}^{d_1}$ and $\{\nu_j\}_{j=1}^{d_2}$ as the eigenvalues of matrices $R(\mA-\mC\mB^{-1}\mC^\top)$ and $R\mB$, respectively, and is thus omitted.

\end{document}